\newtheorem{problem}{Problem}
\newcommand{\cmark}{\ding{51}} 
\newcommand{\xmark}{\ding{55}} 
\theoremstyle{plain}
\newtheorem{proposition}{Proposition}
\newcommand{\Embed}{\text{SinusoidalEmbedding}}
\newcommand{\pdata}{p_{\text{data}}}
\newcommand{\Ucal}{\mathcal{U}}
\title{Scalable, Explainable and Provably Robust \\ Anomaly Detection with One-Step Flow Matching}
\author{
  Zhong Li$^{\clubsuit,\heartsuit}$ \quad
  Qi Huang$^{\clubsuit}$ \quad
  Yuxuan Zhu$^{\blacktriangle}$ \quad 
  Lincen Yang$^{\clubsuit}$\textsuperscript{(\Letter)}  \\
  \textbf{Mohammad Mohammadi Amiri}$^{\blacktriangle}$ \quad
 \textbf{Niki van Stein}$^{\clubsuit}$ \quad
  \textbf{Matthijs van Leeuwen}$^{\clubsuit}$  \\
  \\
  $^{\clubsuit}$The Leiden Institute of Advanced Computer Science (LIACS), Leiden University \\
  $^{\blacktriangle}$Department of Computer Science, Rensselaer Polytechnic Institute \\ 
  $^{\heartsuit}$The Intelligent Computing Research Center, Great Bay University \\
  Corresponding Author (\Letter):
  \texttt{l.yang@liacs.leidenuniv.nl} (Lincen Yang)
}
\begin{document}

\maketitle

\begin{abstract}
We introduce Time-Conditioned Contraction Matching (TCCM), a novel method for semi-supervised anomaly detection in tabular data. TCCM is inspired by flow matching, a recent generative modeling framework that learns velocity fields between probability distributions and has shown strong performance compared to diffusion models and generative adversarial networks. Instead of directly applying flow matching as originally formulated, TCCM builds on its core idea—learning velocity fields between distributions—but simplifies the framework by predicting a time-conditioned contraction vector toward a fixed target (the origin) at each sampled time step. This design offers three key advantages: (1) a lightweight and scalable training objective that removes the need for solving ordinary differential equations during training and inference; (2) an efficient scoring strategy called one time-step deviation, which quantifies deviation from expected contraction behavior in a single forward pass, addressing the inference bottleneck of existing continuous-time models such as DTE (a diffusion-based model with leading anomaly detection accuracy but heavy inference cost);
and (3) explainability and provable robustness, as the learned velocity field operates directly in input space, making the anomaly score inherently feature-wise attributable; moreover, the score function is Lipschitz-continuous with respect to the input, providing theoretical guarantees under small perturbations. Extensive experiments on the ADBench benchmark show that TCCM strikes a favorable balance between detection accuracy and inference cost, outperforming state-of-the-art methods—especially on high-dimensional and large-scale datasets. The source code is provided at~\url{https://github.com/ZhongLIFR/TCCM-NIPS}.
\end{abstract}

\section{Introduction}
\label{sec:intro}

\textbf{Background.} Anomaly detection in tabular data is the task of identifying data instances (or patterns) that deviate significantly from expected behavior \citep{chandola2009anomaly,aggarwal2017introduction,pang2021deep}. It has found widespread applications in various domains, such as fraud detection in finance \citep{hilal2022financial}, fault detection in manufacturing \citep{yu2023challenges}, intrusion detection in cybersecurity \citep{chou2021survey}, and medical diagnosis in healthcare \citep{fernando2021deep}. In these high-stakes domains, data is growing rapidly in both size and dimensionality, calling for approaches that are not only effective but also scalable. Equally important, decisions made in these settings often have critical consequences, making interpretability an ethical and regulatory necessity \citep{li2023survey}. Therefore, anomaly detection methods should also be able to provide meaningful explanations alongside accurate predictions.

\textbf{Positioning our work.} Existing anomaly detection methods can be broadly categorized into classical machine learning approaches and deep learning-based techniques. Classical methods—such as OCSVM \citep{scholkopf1999support}, LOF \citep{breunig2000lof}, PCA \citep{shyu2003novel}, and KDE \citep{latecki2007outlier}—often struggle with high-dimensional data due to the curse of dimensionality, and with large-scale datasets due to limited computational scalability. To address these limitations, deep learning-based anomaly detection methods have gained research attention and achieved strong performance across various domains \citep{pang2021deep}. Deep methods can be grouped into two categories: (1) \textit{two-stage approaches}, which first learn a low-dimensional representation (e.g., via an autoencoder) and then apply off-the-shelf anomaly detectors. However, such decoupled training strategies often struggle to learn task-effective features due to the lack of joint optimization \citep{nguyen2019scalable}; and (2) \textit{end-to-end trained approaches}, which integrate representation learning and anomaly detection into a unified training objective, achieving better performance. Meanwhile, given that labeled anomalies are both scarce and expensive to obtain in many real-world applications—such as system failures, fraud, or clinical anomalies—many existing methods, both classical and deep, adopt a \textit{semi-supervised} setting. In this paradigm, models are trained solely on normal data and tasked with identifying deviations at test time. Although claimed as``unsupervised" in some studies \citep{goodge2022lunar}, we rigorously refer to this setup as \textit{semi-supervised anomaly detection} following ~\citet{an2015variational, ruff2018deep, akcay2018ganomaly,bergman2020classification}. Our work is situated within this setting, adopting a deep\footnote{{\color{black}Following common practice in the machine learning community, we use the term ``deep'' to indicate deep learning-based, end-to-end models, even when the employed neural architecture is relatively shallow (e.g., a multi-layer perceptron with two hidden layers).}}, end-to-end, and semi-supervised approach that learns the structure of normality during training and identifies deviations at inference.

\textbf{Limitations of Existing Studies.} Despite recent advances in end-to-end deep anomaly detection, many existing approaches face fundamental limitations. Adversarial models such as AnoGAN~\citep{Schlegl2017AnoGAN} and GANomaly~\citep{akcay2018ganomaly} often suffer from training instability due to their reliance on min-max optimization. Density-based methods like DAGMM~\citep{zong2018deep} introduce complex architectures to approximate latent distributions. Diffusion-based approaches such as Anomaly-DDPM and DTE~\citep{livernoche2023diffusion} may depend heavily on carefully tuned noise schedules and sampling hyperparameters; in practice, they also suffer from extremely slow inference on large-scale datasets due to their iterative nature. Normalizing flows (e.g., OneFlow~\citep{maziarka2021oneflow}) require invertibility and Jacobian computations, leading to trade-offs between model expressivity and computational efficiency. LUNAR~\citep{goodge2022lunar}, which employs graph neural networks to capture relational structures, incurs high training costs and scales poorly with data size. Methods such as DeepSVDD~\citep{ruff2018deep}, which focus on compact representation learning, rely on restrictive architectural constraints (e.g., no biases, bounded activations) to avoid representation collapse, and often depend on numerous training heuristics to yield satisfactory results. Another major limitation lies in the lack of interpretability—most deep models offer little insight into why a sample is considered anomalous. While a few methods have made progress in this direction—e.g., AE-1SVM~\citep{nguyen2019scalable} using gradient-based attribution, ICL~\citep{shenkar2022anomaly} identifying key contributing features, MCM~\citep{yin2024mcm} modeling both feature-level abnormality and inter-feature correlations, and DTE~\citep{livernoche2023diffusion} providing denoised reconstructions as explanations—such interpretable designs remain the exception rather than the norm. The vast majority of deep anomaly detection models continue to operate as black boxes, limiting their utility in high-stakes domains where interpretability is critical.

\textbf{Flow Matching.}
Flow matching has emerged as a promising generative modeling framework that retains the training stability and expressivity of diffusion models, while offering improved computational efficiency~\citep{liu2022flow, lee2023minimizing}. Instead of relying on stochastic differential equations (SDEs), it learns an ordinary differential equation (ODE) that deterministically maps samples from a source to a target distribution, enabling faster sampling and easier optimization. Flow matching also bypasses the need for a forward noising process and explicit density functions, making it well-suited for settings with implicit or intractable data distributions~\citep{albergo2023building, MITFlowMatching2024}. Its interpolant formulation further allows empirical analysis of learned velocity fields over time~\citep{albergo2023building}, enhancing interpretability for downstream tasks. Despite its recent success in generative modeling, flow matching has not been explored for anomaly detection as of this writing, to the best of our knowledge.

\textbf{Contributions.} 
Motivated by the limitations of existing deep anomaly detection methods and benefiting from recent advances in generative modeling—particularly flow matching—we introduce \textit{Time-Conditioned Contraction Matching} (TCCM), a novel flow matching-inspired approach for semi-supervised anomaly detection. Specifically, TCCM learns a time-conditioned velocity field that contracts normal data, drawn from a source distribution $ \rho_{\text{source}} $, towards a degenerate target distribution $ \rho_{\text{target}} $, defined as a Dirac delta at the origin. Unlike previous approaches that simulate full continuous trajectories via ODE or SDE integration, TCCM avoids trajectory simulation entirely by directly learning a velocity field that approximates contraction dynamics from any input point at any time (details are described in Section~\ref{sec:TCCM}). At test time, samples are scored based on how much their predicted velocity field deviates from the expected contraction pattern—an idea illustrated in Figure~\ref{fig:Motivations}. This mismatch in velocity magnitudes and directions forms the basis of our anomaly score. TCCM inherits the scalability and simplicity of flow matching~\citep{liu2022flow}, and is trained using an unconstrained least-squares objective. It avoids adversarial instability (as in AnoGAN~\citep{Schlegl2017AnoGAN}, GANomaly~\citep{akcay2018ganomaly}), complex density modeling (as in DAGMM~\citep{zong2018deep} or KDE~\citep{latecki2007outlier}), and slow sampling-based inference (as in DTE~\citep{livernoche2023diffusion}). Unlike normalizing flows~\citep{maziarka2021oneflow}, it requires neither invertibility nor Jacobian computation, and unlike DeepSVDD~\citep{ruff2018deep}, it does not rely on restrictive architectural constraints to avoid collapse. Furthermore, compared to graph-based methods like LUNAR~\citep{goodge2022lunar}, TCCM achieves significantly faster training on large-scale datasets. Crucially, TCCM is inherently interpretable—its velocity field lives in the input space, supporting feature-wise attribution—and provably robust, with a Lipschitz-continuous anomaly score under small input perturbations.

\textbf{Findings.} We evaluate TCCM on 47 benchmark datasets from the ADBench suite~\citep{han2022adbench}, comparing it against 44 baseline methods (23 deep learning-based and 21 classical), for a total of \textbf{10,575 runs} across five seeds. Our results demonstrate five key strengths: (1) \textit{Accuracy}: TCCM achieves \textbf{top-1 performance} in both AUPRC and AUROC scores (see Appendix~\ref{subsec:evaluation_metrics_exp} for definitions) across all evaluated methods (see Figures~\ref{fig:PR_ranking_boxplot} and~\ref{fig:ROC_ranking_boxplot} for aggregated results). (2) \textit{Scalability}: The model is highly efficient in both training and inference on high-dimensional and large-scale datasets—achieving, on average, \textbf{1573$\times$ faster inference} than DTE-NonParametric (top-2 in AUROC and AUPRC), and \textbf{85$\times$ faster inference} than LUNAR (top-3 in both metrics), while maintaining superior detection performance (see Figure~\ref{fig:Test_runtime_unified}). (3) \textit{Explainability}: TCCM enables feature-level attribution for anomaly scores, supporting interpretable diagnosis—an aspect largely absent in existing deep anomaly detection models (see Figure~\ref{fig:XAD_example}). (4) \textit{Robustness}: We theoretically prove that the anomaly score satisfies a Lipschitz continuity condition, offering provable robustness guarantees under input perturbations (see Proposition~\ref{theory:Lipschitz}). (5) \textit{Simplicity of training}: TCCM requires no adversarial losses, density estimation, or noise schedules—making it simple to train, stable to optimize, and easy to reproduce (see Eq.~\ref{Equ:TrainingObjective}). Together, these findings establish TCCM as a principled, highly effective, scalable, explainable, and provably robust solution for semi-supervised anomaly detection in tabular data.

\section{Preliminaries}
Due to space constraints, we defer a detailed discussion of related work—including anomaly detection methods and flow matching—to Appendix~\ref{appendix:RelatedWork}, and begin with a general problem statement.

\subsection{Problem Statement}
\textbf{Notations.} Bold lowercase letters (e.g., $\boldsymbol{x}$) denote vectors; bold uppercase letters (e.g., $\boldsymbol{X}$) represent matrices. Calligraphic symbols (e.g., $\mathcal{X}$) denote sets, and standard italic letters (e.g., $x$) are used for scalars, unless otherwise specified. {\color{black} Besides, these symbols may be used to denote both random variables and their realizations; this dual use is common in the machine learning literature and will be made explicit whenever necessary.
}

\textbf{Problem Setting.} We consider a semi-supervised anomaly detection scenario, where only normal samples are available during training. Let $\mathcal{X} = \{\boldsymbol{x}_i\}_{i=1}^N \subset \mathbb{R}^d$ be a dataset of $d$-dimensional observations, partitioned into a training set $\mathcal{X}_{\text{train}}$ containing only normal instances sampled from an unknown distribution $p_{\text{data}}(\boldsymbol{x})$, and a test set $\mathcal{X}_{\text{test}}$ that may include both normal and anomalous samples.

\begin{problem}[Semi-Supervised Anomaly Detection]
Given access to normal training data $ \mathcal{X}_{\text{train}} \subset \mathbb{R}^d $, the goal is to learn an anomaly scoring function $ S: \mathbb{R}^d \rightarrow \mathbb{R} $ that quantifies the deviation of any test input $ \boldsymbol{x} \in \mathcal{X}_{\text{test}} $ from the learned notion of normality.
\end{problem}

To solve this problem, we aim to learn the structure of normal data using only unlabeled normal instances. At test time, deviations from this learned structure are quantified and assigned anomaly scores, allowing the detection of abnormal inputs without access to anomalous data during training.

\subsection{Recap of Flow Matching}

Flow matching (or stochastic interpolant)~\citep{lipman2022flow, albergo2023building, liu2022flow} provides a principled and flexible framework for learning neural ODE-based transport maps between two empirical distributions. Given samples from a source distribution $ \boldsymbol{x}_0 \sim p_0 $ and a target distribution $ \boldsymbol{x}_1 \sim p_1 $, the goal is to learn a time-dependent velocity field $ v(\boldsymbol{x}_t, t) $ such that the following ODE governs the evolution between the two:  $d\boldsymbol{x}_t = v(\boldsymbol{x}_t, t)dt$ for $t \in [0,1].$
This Lagrangian formulation describes the motion of particles from $ \boldsymbol{x}_0 $ to $ \boldsymbol{x}_1 $, implicitly defining the coupling $ \pi(p_0, p_1) $ between the distributions. The velocity field is  parameterized as $ v_\theta(\boldsymbol{x}_t, t) $ via a neural network and trained to match a reference velocity field using a simple least-squares objective:
\begin{equation}
    \label{eq:fm-objective}
    \min_\theta \mathbb{E}_{t, \boldsymbol{x}_t} \left[\left\| v(\boldsymbol{x}_t, t) - v_\theta(\boldsymbol{x}_t, t) \right\|^2_{2} \right].
\end{equation}

Different choices of interpolation path $ \boldsymbol{x}_t $ and reference velocity $ v(\boldsymbol{x}_t, t) $ give rise to different flow matching models. A widely used class of methods adopts the \emph{probability flow ODE} formulation~\citep{song2020score}, where the velocity incorporates the score function $ \nabla \log p_t $ and corresponds to a deterministic trajectory derived from an underlying SDE. In this case, the path is often defined via a variance-preserving schedule:
\[
    \boldsymbol{x}_t = \alpha_t \boldsymbol{x}_0 + \sqrt{1 - \alpha_t^2} \, \boldsymbol{x}_1, \quad \text{with }\quad \alpha_t = \exp\left(-\frac{1}{2} \int_0^t \beta(s) \, ds \right),
\]
where $\beta(s)$ is a pre-defined noise schedule that controls the rate of variance increase over time. This form allows equivalence to score matching under certain conditions~\citep{lee2023minimizing, zheng2023improved}, and is popular in diffusion-based generative models. However, the resulting curved trajectory can complicate optimization and slow down sampling \citep{liu2022flow}.

To address these issues, \citet{liu2022flow} have proposed a \emph{constant velocity ODE} approach, where the interpolation path is simply linear: $ \boldsymbol{x}_t = (1 - t) \boldsymbol{x}_0 + t \boldsymbol{x}_1.$
In this case, the reference velocity becomes a constant vector $ \boldsymbol{x}_1 - \boldsymbol{x}_0 $, and the flow matching objective reduces to:
\begin{equation}
    \label{eq:fm-linear}
    \min_\theta \mathbb{E}_{t, \boldsymbol{x}_t} \left[\left\| \boldsymbol{x}_1 - \boldsymbol{x}_0 - v_\theta(\boldsymbol{x}_t, t) \right\|^{2}_{2} \right].
\end{equation}
This variant is known as the \emph{rectified flow} model, and has been shown to improve training efficiency and reduce curvature in the learned trajectories, facilitating both forward simulation and backward sampling. Our proposed method builds on this formulation, leveraging its simplicity and scalability while adapting it for the anomaly detection setting.

\section{Methodology: Time-Conditioned Contraction Matching (TCCM)}
\label{sec:TCCM}
\textbf{Core idea.} Conventional flow matching models~\citep{lipman2022flow,liu2022flow} construct continuous-time trajectories that gradually transport samples from a source distribution (at $t = 0$) to a target distribution (at $t = 1$) by integrating a learned velocity field over the entire time interval. In contrast, our method departs from this paradigm both conceptually and technically. Rather than relying on the full trajectory across time to learn the transformation, we directly learn a \textit{contraction vector field} at each time step—one that immediately points from the current position to the fixed target (the origin). This allows the model to predict the contraction behavior independently at each time point, avoiding the need for simulating or reconstructing the full flow path. This provides a powerful yet simple framework for anomaly detection: every point learns how to contract back to the origin over time, which motivates the name \textit{Time-Conditioned Contraction Matching} (TCCM).

Formally, we treat the data distribution as the source, $ \boldsymbol{z}_0 := \boldsymbol{z} \sim p_{\text{data}} $, and consider the target as a degenerate Dirac distribution at the origin, $ \boldsymbol{z}_1 := \boldsymbol{0} $. While this setup may suggest a flow-like interpretation, we emphasize that our model is not tasked with approximating the full solution of a dynamical system such as:
\begin{equation}
    d\boldsymbol{z}(t) = -\boldsymbol{z}(t)dt, \quad \text{with}\quad \boldsymbol{z}(0) = \boldsymbol{z},
\end{equation}
whose analytical solution would be $\boldsymbol{z}(t) = \boldsymbol{z} \cdot e^{-t}$. However, our model does not supervise or simulate $ \boldsymbol{z}(t) $ across time. Instead, we adopt a simplified training strategy that uses a constant target direction $ -\boldsymbol{z} $ for supervision at all time steps. To achieve this, we learn a neural velocity field $ f_{\boldsymbol{\theta}}(\cdot) $ on an augmented space $\tilde{\boldsymbol{z}} = [\boldsymbol{z}; \text{Embed}(t)]$. Specifically,  $f_{\boldsymbol{\theta}}(\cdot)$ is conditioned on both the input $ \boldsymbol{z} $ and a time variable $ t \in [0,1] $. The time is encoded using sinusoidal embeddings~\citep{vaswani2017attention}, which are concatenated with the input: $
\tilde{\boldsymbol{z}} = [\boldsymbol{z}; \text{Embed}(t)],$
and passed through the model $f_{\boldsymbol{\theta}}$ to predict a contraction vector.

\begin{figure}[h!]
    \centering
    \includegraphics[width=0.9\linewidth]{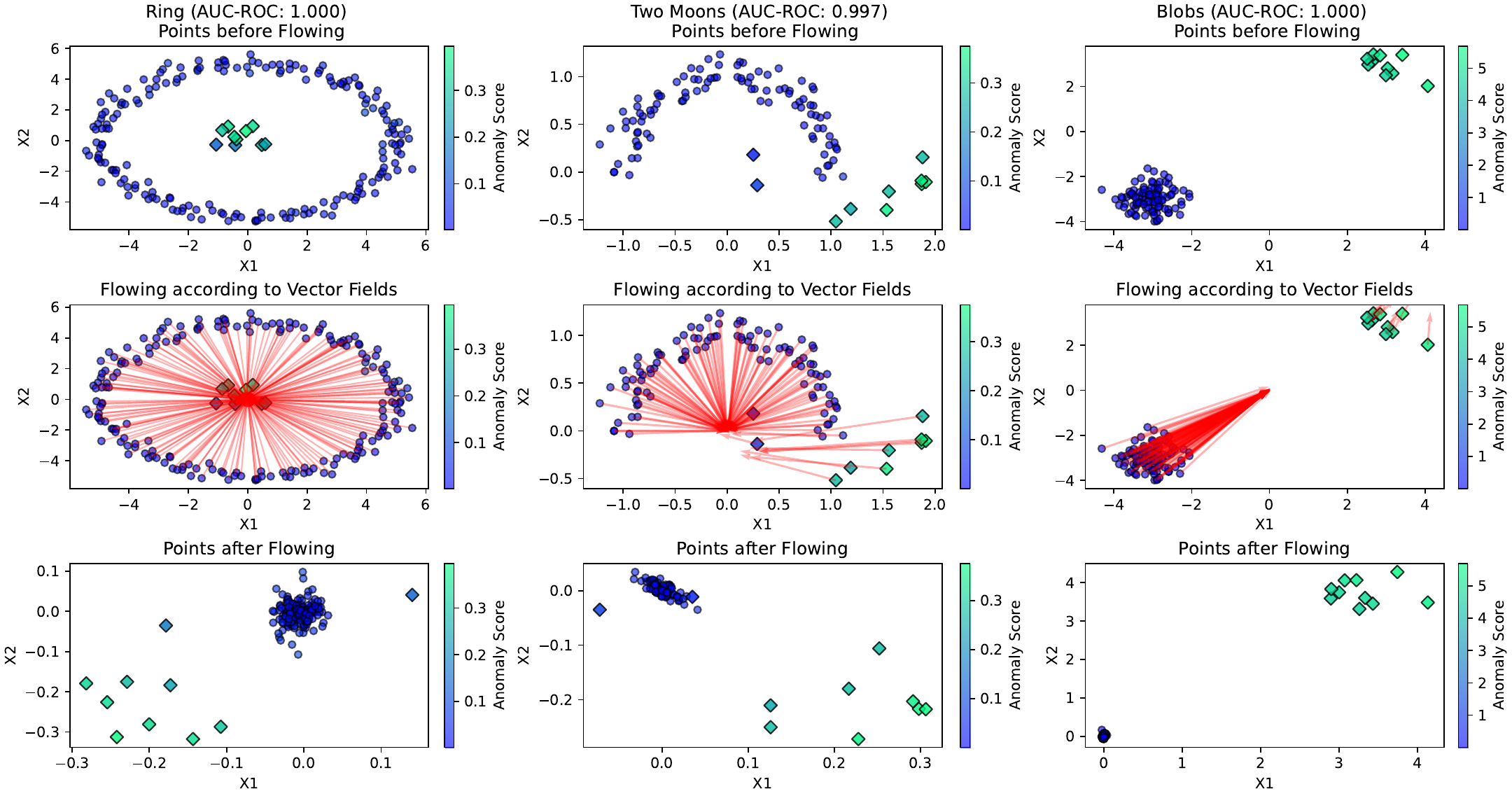}
\caption{
\textbf{Core idea of TCCM:} TCCM learns a time-conditioned velocity (vector) field that contracts normal data points, sampled from a source distribution $ \rho_{\text{source}} $, towards a degenerate target distribution $ \rho_{\text{target}} $, defined as a Dirac delta at the origin. At test time, anomalies are detected by measuring inconsistency with this learned contraction field. 
\textbf{Illustrative examples:} We visualize TCCM behavior on synthetic 2D datasets with varying normal (circles) and anomalous (squares) distributions. \textit{Left}: Normal data form a ring; anomalies are sampled from a central Gaussian. \textit{Middle}: Normals follow an upper moon; anomalies form a sparse lower moon. \textit{Right}: Normals are clustered bottom-left; anomalies are drawn from a distinct Gaussian in the top-right. In all cases, TCCM successfully distinguishes anomalies based on their deviation from the expected contraction vector.
}
    \label{fig:Motivations}
\end{figure}

\textbf{Training Objective.}
The training minimizes the following loss:
\begin{equation}
   \label{Equ:TrainingObjective}
    \underset{\theta}{\min}~\mathbb{E}_{\boldsymbol{z} \sim p_{\text{data}},\, t \sim \mathcal{U}(0,1)} \left[\left\| f_{\boldsymbol{\theta}}([\boldsymbol{z}; \text{Embed}(t)]) + \boldsymbol{z} \right\|_2 \right].
\end{equation}
Optimizing objective~(\ref{Equ:TrainingObjective}) encourages the model to predict a velocity vector that approximates the negation of the current state, i.e., $f_{\boldsymbol{\theta}}([\boldsymbol{z}; \text{Embed}(t)]) \approx -\boldsymbol{z}$. This guides the system to evolve toward the origin by learning both the direction and magnitude of motion in a time-dependent manner. To achieve this, the neural velocity field is required to extract the common factors of variation present in normal data. As a result, normal samples following their predicted velocity fields can approach the origin at any given time, while anomalous instances, due to their deviation from the learned structure, fail to do so. The pseudocode for training is given in Algorithm~\ref{alg:tccm_training} in Appendix~\ref{subsec:pseudocode}.

\textbf{Interpretation and Motivation.}
Although not derived from an explicit ODE, our method can be viewed as learning a time-aware vector field that approximates the contraction dynamics toward a shared target. This formulation provides several advantages:  
\textit{(1) Time-Conditioned Consistency:} The model learns to predict contraction vectors across time steps that consistently guide inputs toward the origin, promoting geometric alignment and stability;  
\textit{(2) Simplified Supervision:} Using a fixed supervision target $ -\boldsymbol{z} $ removes the need for trajectory supervision, leading to a simpler and smoother optimization process;  
\textit{(3) No ODE Solvers Required:} Unlike conventional flow-based models, TCCM avoids numerical integration during both training and inference, resulting in substantial computational efficiency;  
\textit{(4) Learnable Temporal Dynamics:} Sinusoidal time embeddings allow the model to modulate both the magnitude and direction of contraction vectors over time, enabling rich, non-linear temporal behavior.

\textbf{Anomaly Scoring at Inference Time.} Given a test input $ \boldsymbol{z} \in \mathcal{X}_{\text{test}} $ and a fixed evaluation time $ t_{\text{fixed}} \in (0, 1] $, we define the anomaly score as:
\begin{equation}
\label{eq:AnomalyScoring}
    S_{\text{fixed}}(\boldsymbol{z}; t_{\text{fixed}}) = \left\| f_{\boldsymbol{\theta}}\left([ \boldsymbol{z}; \text{Embed}(t_{\text{fixed}})] \right) + \boldsymbol{z} \right\|_2,
\end{equation}
where $ f_{\boldsymbol{\theta}}([\boldsymbol{z}; \text{Embed}(t)]) $ denotes the learned velocity field conditioned on both the input feature $ \boldsymbol{z} $ and the time variable $ t $, encoded via sinusoidal embeddings and concatenated with $ \boldsymbol{z} $ before being passed into a multilayer perceptron (MLP). This scoring strategy is grounded in the following expectations:  
(1) \textit{Normal instances} are trained to follow a contraction path toward the origin. Since supervision is based on a constant target $ -\boldsymbol{z} $ across all time steps, a well-aligned normal sample satisfies $ f_{\boldsymbol{\theta}}([\boldsymbol{z}; \text{Embed}(t)])  \approx -\boldsymbol{z} $, leading to a small residual norm.  
(2) \textit{Anomalous instances}, which deviate from the learned contraction pattern, yield misaligned velocities and hence higher residuals.  
(3) This approach is computationally efficient, as it avoids solving ODEs and requires only a single forward pass through the network at a chosen time step $ t_{\text{fixed}} $, which overcomes the primary bottleneck of high evaluation cost found in existing continuous-time ODE/SDE models such as Anomaly-DDPM~\citep{livernoche2023diffusion} and DTE-NonParametric~\citep{livernoche2023diffusion}. {\color{black}(4) Importantly, because the residual vector $f_{\boldsymbol{\theta}}([\boldsymbol{z}; \text{Embed}(t_{\text{fixed}})]) + \boldsymbol{z}$ lies in the original feature space, the absolute values of its entries directly quantify how much each feature contributes to the anomaly score. This provides intrinsic feature-level interpretability, in contrast to post-hoc explanation methods such as SHAP~\citep{lundberg2017unified} and LIME~\citep{ribeiro2016should}.}

We refer to this anomaly scoring procedure as \textit{one-step flow matching} because, unlike classical flow matching models that integrate velocity fields across time to compute transformation paths, our method makes a single-time-point evaluation to determine alignment with the learned contraction dynamics. While the underlying model is termed \textit{TCCM}, this scoring mechanism captures the spirit of flow matching—comparing learned dynamics to an ideal contraction vector—yet does so in a highly scalable one-step formulation. Although the evaluation time $ t_{\text{fixed}}$ in Eq~\ref{eq:AnomalyScoring} can be any value in $ (0, 1] $, we set $ t_{\text{fixed}} = 1 $ by default throughout our experiments for simplicity. Particularly, we provide a sensitivity analysis (see Figure~\ref{fig:SensitivityAnalysis_Time}) showing that the anomaly detection performance is largely stable across different values of $ t $, validating the temporal consistency of the learned flow field and its ability to produce meaningful predictions at any time step. The pseudocode for inference is given in Algorithm~\ref{alg:tccm_inference} in Appendix~\ref{subsec:pseudocode}.

\section{Theoretical Properties of TCCM}

In this section, we establish two key theoretical properties of our method: \textit{(i)} Lipschitz continuity of the anomaly score, which leads to provable robustness guarantees under input perturbations; and \textit{(ii)} discriminative behavior of the score function under distributional shift, explained via a stylized Gaussian mixture setting. These results offer both certifiability of robustness and theoretical insight into the score function’s discriminative behavior, complementing our empirical findings.

\begin{proposition}[Lipschitz Continuity and Robustness]
\label{theory:Lipschitz}
Let $ f_{\boldsymbol{\theta}}(\cdot, t_{\text{fixed}}) $ be $ L $-Lipschitz continuous in its first argument (for a fixed time $ t_{\text{fixed}} \in (0, 1] $). Then the anomaly score
\[
S_{\text{fixed}}(\boldsymbol{x}) := \left\| f_{\boldsymbol{\theta}}\left( [\boldsymbol{x}; \text{Embed}(t_{\text{fixed}})] \right) + \boldsymbol{x} \right\|_2
\]
is $ (L+1) $-Lipschitz continuous with respect to $ \boldsymbol{x} $, i.e.,
\[
|S_{\text{fixed}}(\boldsymbol{x}_1) - S_{\text{fixed}}(\boldsymbol{x}_2)| \leq (L+1) \| \boldsymbol{x}_1 - \boldsymbol{x}_2 \|_{2}.
\]
\end{proposition}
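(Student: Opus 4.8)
The plan is to reduce the statement to two elementary inequalities: the reverse triangle inequality for the Euclidean norm and the ordinary triangle inequality, combined with the hypothesized $L$-Lipschitz property of $f_{\boldsymbol{\theta}}$. The key observation is that $S_{\text{fixed}}$ is simply the norm of an affine-plus-network map of $\boldsymbol{x}$, namely $\boldsymbol{x} \mapsto f_{\boldsymbol{\theta}}([\boldsymbol{x}; \text{Embed}(t_{\text{fixed}})]) + \boldsymbol{x}$, and norms of vectors that differ by a controlled amount cannot differ by more than that amount.

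Concretely, I would fix two inputs $\boldsymbol{x}_1, \boldsymbol{x}_2 \in \mathbb{R}^d$ and abbreviate the two inner vectors as $\boldsymbol{a} := f_{\boldsymbol{\theta}}([\boldsymbol{x}_1; \text{Embed}(t_{\text{fixed}})]) + \boldsymbol{x}_1$ and $\boldsymbol{b} := f_{\boldsymbol{\theta}}([\boldsymbol{x}_2; \text{Embed}(t_{\text{fixed}})]) + \boldsymbol{x}_2$, so that $S_{\text{fixed}}(\boldsymbol{x}_1) = \|\boldsymbol{a}\|_2$ and $S_{\text{fixed}}(\boldsymbol{x}_2) = \|\boldsymbol{b}\|_2$. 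Applying the reverse triangle inequality $\big|\,\|\boldsymbol{a}\|_2 - \|\boldsymbol{b}\|_2\,\big| \le \|\boldsymbol{a} - \boldsymbol{b}\|_2$ immediately bounds the left-hand side of the claim by $\|\boldsymbol{a} - \boldsymbol{b}\|_2$. The difference splits cleanly as $\boldsymbol{a} - \boldsymbol{b} = \big( f_{\boldsymbol{\theta}}([\boldsymbol{x}_1; \text{Embed}(t_{\text{fixed}})]) - f_{\boldsymbol{\theta}}([\boldsymbol{x}_2; \text{Embed}(t_{\text{fixed}})]) \big) + (\boldsymbol{x}_1 - \boldsymbol{x}_2)$, because the time embedding is identical in both terms and therefore contributes nothing to the difference.

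I would then apply the triangle inequality to this sum, bounding $\|\boldsymbol{a} - \boldsymbol{b}\|_2$ by the norm of the network-difference term plus $\|\boldsymbol{x}_1 - \boldsymbol{x}_2\|_2$. The hypothesis that $f_{\boldsymbol{\theta}}(\cdot, t_{\text{fixed}})$ is $L$-Lipschitz in its first argument (with the embedded time held fixed) controls the first term by $L\,\|\boldsymbol{x}_1 - \boldsymbol{x}_2\|_2$. Adding the two contributions yields the factor $(L+1)$, completing the argument.

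There is no genuine obstacle here; the result is a direct consequence of norm inequalities and the Lipschitz assumption, and the only point requiring a moment of care is to ensure that the $\text{Embed}(t_{\text{fixed}})$ component is treated as a constant so that the $L$-Lipschitz bound applies in the feature variable alone. The substantive content of the proposition lies not in the proof but in the modeling consequence: since the score is Lipschitz with an explicit constant depending only on the network's Lipschitz modulus, a bounded input perturbation $\|\boldsymbol{x}_1 - \boldsymbol{x}_2\|_2 \le \varepsilon$ changes the anomaly score by at most $(L+1)\varepsilon$, which is exactly the certified robustness guarantee advertised in the introduction.
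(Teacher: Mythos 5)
Your proof is correct and follows essentially the same route as the paper's: both decompose the residual map $g(\boldsymbol{x}) = f_{\boldsymbol{\theta}}([\boldsymbol{x}; \text{Embed}(t_{\text{fixed}})]) + \boldsymbol{x}$, bound $\|g(\boldsymbol{x}_1)-g(\boldsymbol{x}_2)\|_2$ by the triangle inequality together with the $L$-Lipschitz hypothesis, and then pass to the scores via the $1$-Lipschitzness of the $\ell_2$ norm (your reverse triangle inequality is exactly this fact). Your added remark that $\text{Embed}(t_{\text{fixed}})$ is constant and thus cancels in the difference is a point the paper leaves implicit, but the arguments are otherwise identical.
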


\begin{proof}
Define $ g(\boldsymbol{x}) := f_{\boldsymbol{\theta}}([\boldsymbol{x}; \text{Embed}(t_{\text{fixed}})]) + \boldsymbol{x} $. Then:
\[
\| g(\boldsymbol{x}_1) - g(\boldsymbol{x}_2) \|_{2} \leq \| f_{\boldsymbol{\theta}}([\boldsymbol{x}_1; \text{Embed}(t)]) - f_{\boldsymbol{\theta}}([\boldsymbol{x}_2; \text{Embed}(t)]) \|_{2} + \| \boldsymbol{x}_1 - \boldsymbol{x}_2 \|_{2} \leq (L+1) \| \boldsymbol{x}_1 - \boldsymbol{x}_2 \|_{2}.
\]
Finally, since the $\ell_2$ norm is 1-Lipschitz, we have $
|S_{\text{fixed}}(\boldsymbol{x}_1) - S_{\text{fixed}}(\boldsymbol{x}_2)| \leq \| g(\boldsymbol{x}_1) - g(\boldsymbol{x}_2) \|_{2}.
$
\end{proof}

\paragraph{Remark and Implications.}
The assumption that $f_{\boldsymbol{\theta}}(\cdot, t_{\text{fixed}})$ is Lipschitz is both theoretically and practically reasonable. In continuous normalizing flows (CNFs) and flow-matching models, such smoothness is often required to ensure existence and uniqueness of solutions (via Picard--Lindel\"of theorem~\citep{murray2013existence}) or to stabilize ODE solvers. More specifically, the function $ f_{\boldsymbol{\theta}} $ is implemented as a multilayer perceptron with ReLU activations and a fixed architecture, making it piecewise linear and hence Lipschitz continuous. The Lipschitz constant $ L $ can be further controlled through spectral normalization, gradient penalties, or other regularization techniques. Moreover, this proposition has the following two \textbf{implications}: (1) \textit{robustness:} the score is stable under small perturbations, enhancing reliability in noisy or adversarial environments; and (2) \textit{certifiability:} the bound implies that $ |S(\boldsymbol{x} + \delta) - S(\boldsymbol{x})| \leq (L+1)\varepsilon $ if $ \|\delta\| \leq \varepsilon $, providing a certifiable safety margin.

To theoretically support the discriminative power of our anomaly score, we analyze an idealized setting where normal and anomalous instances are drawn from two disjoint Gaussian mixture models (GMMs) with shared isotropic covariance. Although simplified, this setup enables a clean analysis of how the learned score function behaves on out-of-distribution samples. Under the assumption that the model has learned a noisy contraction field of the form $ f_{\boldsymbol{\theta}}([\boldsymbol{x}; \text{Embed}(1)])  = -\boldsymbol{x} + \boldsymbol{\epsilon} $ for normal training data, we establish the following result:
\begin{proposition}[Discriminative Power under GMM-to-GMM Shift]
\label{prop:gmm_shift_main}
Let normal samples be drawn from a Gaussian mixture $
p_{\text{normal}}(\boldsymbol{x}) = \sum_{r=1}^R \pi_r \cdot \mathcal{N}(\boldsymbol{\mu}_r, \sigma^2 I_d),
$
and anomalous samples from a disjoint mixture
$
p_{\text{anom}}(\boldsymbol{z}) = \sum_{s=1}^S \eta_s \cdot \mathcal{N}(\boldsymbol{\nu}_s, \sigma^2 I_d), \quad \text{with } \boldsymbol{\nu}_s \notin \{ \boldsymbol{\mu}_r \}_{r=1}^R.
$
Assume the learned contraction field satisfies $ f_{\boldsymbol{\theta}}([\boldsymbol{x}; \text{Embed}(1)]) = -\boldsymbol{x} + \boldsymbol{\epsilon} $, where $ \boldsymbol{\epsilon} \sim \mathcal{N}(\boldsymbol{0}, \sigma_f^2 I_d) $; {\color{black} and the learned velocity field is mismatched for anomalies.} Define the anomaly score:
$
S(\boldsymbol{x}) = \left\| f_{\boldsymbol{\theta}}([\boldsymbol{x}; \text{Embed}(1)]) + \boldsymbol{x} \right\|_2 = \left\| \boldsymbol{\epsilon} \right\|_2.
$ Then, it holds that: (1) for normal samples, $ S(\boldsymbol{x}) \sim \chi_d \cdot \sigma_f $, and $\mathbb{E}[S(\boldsymbol{x})] = \sigma_f \cdot \sqrt{2} \cdot \frac{\Gamma\left( \frac{d+1}{2} \right)}{\Gamma\left( \frac{d}{2} \right)}$; (2) for anomalies, let $ \lambda_s = \frac{\| \boldsymbol{\nu}_s - \boldsymbol{\mu}_{r^*(s)} \|_{2}^2}{\sigma_f^2} $, where $ r^*(s) := \arg\min_r \| \boldsymbol{\nu}_s - \boldsymbol{\mu}_r \|_{2} $, we have
    $S(\boldsymbol{z}) \sim \sum_{s=1}^S \eta_s \cdot \chi_d(\lambda_s)$; and (3) the expected anomaly scores of normal and anomalous instances satisfy:
    $
     \mathbb{E}[S(\boldsymbol{z})] > \mathbb{E}[S(\boldsymbol{x})].
$ This implies that our score function assigns, in expectation, higher values to anomalies than to normal points—providing a theoretical foundation for its discriminative capability.
\end{proposition}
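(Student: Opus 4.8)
The plan is to treat the three claims in sequence: parts (1) and (2) are exact distributional identifications, while part (3) is the comparison of means that certifies discrimination. Throughout I would work conditionally on the mixture component and then average, reducing every norm to a standardized Gaussian norm so that the (non-central) chi distribution appears directly. For part (1), I start from the stated assumption $f_{\boldsymbol{\theta}}([\boldsymbol{x}; \text{Embed}(1)]) = -\boldsymbol{x} + \boldsymbol{\epsilon}$, which makes the residual collapse exactly to $S(\boldsymbol{x}) = \|\boldsymbol{\epsilon}\|_2$ with $\boldsymbol{\epsilon} \sim \mathcal{N}(\boldsymbol{0}, \sigma_f^2 I_d)$. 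Writing $\boldsymbol{\epsilon} = \sigma_f \boldsymbol{g}$ with $\boldsymbol{g} \sim \mathcal{N}(\boldsymbol{0}, I_d)$ gives $S(\boldsymbol{x}) = \sigma_f \|\boldsymbol{g}\|_2$, and $\|\boldsymbol{g}\|_2$ is by definition a central chi variable $\chi_d$. The mean then follows from the standard closed form $\mathbb{E}[\chi_d] = \sqrt{2}\,\Gamma((d+1)/2)/\Gamma(d/2)$, obtained by integrating the chi density, yielding $\mathbb{E}[S(\boldsymbol{x})] = \sigma_f \sqrt{2}\,\Gamma((d+1)/2)/\Gamma(d/2)$.

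For part (2), the crux is to make the phrase ``the learned velocity field is mismatched for anomalies'' precise. I would formalize it as follows: for an anomaly drawn from component $s$, the network---having only ever been trained to contract toward the nearest normal cluster---produces a residual whose bias is the offset between the anomalous center and the closest normal center, i.e. $f_{\boldsymbol{\theta}}(\boldsymbol{z}) + \boldsymbol{z} = (\boldsymbol{\nu}_s - \boldsymbol{\mu}_{r^*(s)}) + \boldsymbol{\epsilon}$ with the same isotropic noise $\boldsymbol{\epsilon} \sim \mathcal{N}(\boldsymbol{0}, \sigma_f^2 I_d)$. Standardizing, $(f_{\boldsymbol{\theta}}(\boldsymbol{z}) + \boldsymbol{z})/\sigma_f \sim \mathcal{N}(\boldsymbol{a}_s, I_d)$ with $\boldsymbol{a}_s = (\boldsymbol{\nu}_s - \boldsymbol{\mu}_{r^*(s)})/\sigma_f$ and $\|\boldsymbol{a}_s\|_2^2 = \lambda_s$; hence the norm is, by definition, a non-central chi variable $\chi_d(\lambda_s)$ up to the scale $\sigma_f$. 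Because the component index $s$ is itself drawn with probability $\eta_s$, the law of total probability over $s$ yields the claimed mixture $\sum_s \eta_s \chi_d(\lambda_s)$.

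For part (3), I would prove the strict mean inequality componentwise and then average. The clean tool is a symmetrization argument: since $\boldsymbol{G} \stackrel{d}{=} -\boldsymbol{G}$ for a standard Gaussian, $\mathbb{E}\|\boldsymbol{a}_s + \boldsymbol{G}\|_2 = \tfrac12 \mathbb{E}\bigl[\|\boldsymbol{a}_s + \boldsymbol{G}\|_2 + \|\boldsymbol{G} - \boldsymbol{a}_s\|_2\bigr] \ge \mathbb{E}\|\boldsymbol{G}\|_2$, where the last step is the triangle inequality applied to $(\boldsymbol{a}_s + \boldsymbol{G}) + (\boldsymbol{G} - \boldsymbol{a}_s) = 2\boldsymbol{G}$. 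Equality holds only on the event that $\boldsymbol{a}_s + \boldsymbol{G}$ and $\boldsymbol{G} - \boldsymbol{a}_s$ are nonnegatively collinear, a probability-zero event whenever $\boldsymbol{a}_s \neq \boldsymbol{0}$. Since $\boldsymbol{\nu}_s \notin \{\boldsymbol{\mu}_r\}$ forces $\boldsymbol{a}_s \neq \boldsymbol{0}$ and thus $\lambda_s > 0$, each conditional mean strictly dominates the normal mean: $\mathbb{E}[S(\boldsymbol{z}) \mid s] = \sigma_f \mathbb{E}\|\boldsymbol{a}_s + \boldsymbol{G}\|_2 > \sigma_f \mathbb{E}\|\boldsymbol{G}\|_2 = \mathbb{E}[S(\boldsymbol{x})]$. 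Averaging with the convex weights $\eta_s$, which sum to one, preserves the strict inequality and gives $\mathbb{E}[S(\boldsymbol{z})] > \mathbb{E}[S(\boldsymbol{x})]$.

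The main obstacle I anticipate is not the distributional algebra, which is routine once the residual laws are fixed, but pinning down the modeling assumption in part (2): the phrase ``mismatched for anomalies'' must be translated into the precise claim that the residual bias equals $\boldsymbol{\nu}_s - \boldsymbol{\mu}_{r^*(s)}$, since this exact offset is what produces the non-centrality $\lambda_s$. I would also make explicit a modeling choice on which the clean appearance of $\sigma_f$ alone in $\lambda_s$ depends---namely whether the intrinsic spread $\sigma^2 I_d$ of $\boldsymbol{z}$ about $\boldsymbol{\nu}_s$ is absorbed into the noise or the anomaly is evaluated at its component center---since otherwise $\lambda_s$ would combine both $\sigma$ and $\sigma_f$. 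For part (3) the only delicacy is insisting on strict rather than weak inequality, which the collinearity/measure-zero argument supplies.
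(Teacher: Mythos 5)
Your proposal is correct and follows the same high-level skeleton as the paper's proof---condition on the mixture component, reduce the residual norm to a central or non-central chi law, and then compare means and average over the weights $\eta_s$---but it diverges in two substantive ways, both to your credit. First, the paper formalizes the mismatch via a projection of the random sample: it writes $f_{\boldsymbol{\theta}}(\boldsymbol{z},1) \approx -\boldsymbol{z}_{\text{proj}} + \boldsymbol{\epsilon}$ with $\boldsymbol{\delta}_s := \boldsymbol{z} - \boldsymbol{\mu}_{r^*(s)} \sim \mathcal{N}(\boldsymbol{\nu}_s - \boldsymbol{\mu}_{r^*(s)}, \sigma^2 I_d)$, so that $\boldsymbol{\delta}_s + \boldsymbol{\epsilon}$ has covariance $(\sigma^2 + \sigma_f^2) I_d$---and yet it then asserts $S(\boldsymbol{z}) \sim \chi_d(\lambda_s)$ with $\lambda_s = \| \boldsymbol{\nu}_s - \boldsymbol{\mu}_{r^*(s)} \|_2^2 / \sigma_f^2$, an internal scale inconsistency (the derived law would give scale $\sqrt{\sigma^2 + \sigma_f^2}$ and a non-centrality normalized by $\sigma^2 + \sigma_f^2$), and moreover a sample-dependent projection index would not leave $\boldsymbol{\delta}_s$ Gaussian. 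You instead posit the residual bias at the component level, $f_{\boldsymbol{\theta}}(\boldsymbol{z}) + \boldsymbol{z} = (\boldsymbol{\nu}_s - \boldsymbol{\mu}_{r^*(s)}) + \boldsymbol{\epsilon}$, which reproduces the proposition's $\lambda_s$ exactly and sidesteps both defects; your closing remark correctly identifies this $\sigma$-versus-$\sigma_f$ absorption as precisely the modeling choice the stated result hinges on. Second, for part (3) the paper simply cites $\mathbb{E}[\chi_d(\lambda)] > \mathbb{E}[\chi_d]$ for $\lambda > 0$ as a known fact, whereas your symmetrization argument $2\,\mathbb{E}\| \boldsymbol{a}_s + \boldsymbol{G} \|_2 = \mathbb{E}\bigl[ \| \boldsymbol{a}_s + \boldsymbol{G} \|_2 + \| \boldsymbol{G} - \boldsymbol{a}_s \|_2 \bigr] \geq 2\,\mathbb{E}\| \boldsymbol{G} \|_2$ supplies a self-contained elementary proof of it, which strengthens the write-up.

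One small repair to your strictness claim in part (3): for $d = 1$, nonnegative collinearity of $\boldsymbol{a}_s + \boldsymbol{G}$ and $\boldsymbol{G} - \boldsymbol{a}_s$ is not a null event (it is exactly the event $|G| \geq |a_s|$). Strictness should instead be argued from the triangle inequality being strict on the complementary event $|G| < |a_s|$, which has positive probability whenever $\boldsymbol{a}_s \neq \boldsymbol{0}$; for $d \geq 2$ your measure-zero argument is fine, and in either case the strict mean inequality, and hence the averaged conclusion $\mathbb{E}[S(\boldsymbol{z})] > \mathbb{E}[S(\boldsymbol{x})]$, survives intact.
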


The proof, provided in Appendix~\ref{appendix:gmm_shift_proof}, shows that the anomaly score corresponds to the norm of a central chi-distributed variable for normal samples and a non-central chi-distributed one for anomalies. The non-centrality parameter captures the squared distance between each anomaly and the closest normal-mode center, leading to systematically larger scores.

\textbf{Implications.} This result provides a theoretical lens into why our anomaly score increases for distributional outliers. Even though real-world data may not exactly follow Gaussian mixtures, the underlying intuition persists: samples that deviate from the structure captured by the contraction field are naturally assigned larger residuals.

{\color{black}In addition, Appendix~\ref{subsubsec:collapse_analysis} further analyzes the model’s representation dynamics and verifies that TCCM avoids degenerate or collapsed mappings in practice, complementing the above theoretical guarantees with empirical evidence of stable and discriminative behavior.}

\section{Experiments}
\label{sec:experimentSetups}
We conduct comprehensive experiments to address the following research questions: (1) \textbf{Effectiveness} — Can TCCM outperform existing baselines in anomaly detection? (2) \textbf{Scalability} — How does TCCM compare to the strongest baselines in detection accuracy in terms of training and inference efficiency? (3) \textbf{Explainability} — Are the explanations generated by TCCM intuitive and meaningful to human users? (4) \textbf{Ablation Studies and Sensitivity Analysis} — How do various design choices impact the performance of TCCM?

\subsection{Experiment Setup}

\textbf{Datasets Description and Processing.} 
(1) \textbf{Dataset Description}: A summary of the datasets used in our study is provided in Table~\ref{tab:datasets_summary}. We adopt 47 benchmark datasets from the well-established \textsc{ADBench} benchmark~\citep{han2022adbench}, spanning diverse domains including sociology, finance, linguistics, physics, and healthcare. To enable a comprehensive evaluation of different anomaly detectors, including our proposed method, we categorize the datasets into four groups based on their scale and dimensionality: 
(a) \textit{High-dimensional} datasets, with more than 50 features; 
(b) \textit{Large-scale} datasets (but not high-dimensional), containing more than 10{,}000 instances and fewer than 50 features; 
(c) \textit{Medium-scale} datasets (not high-dimensional), with 1{,}000 to 10{,}000 instances; and 
(d) \textit{Small-scale} datasets (not high-dimensional), containing fewer than 1{,}000 instances. 
This categorization facilitates a nuanced analysis of model performance across varying data regimes. (2) \textbf{Data Processing}: We adopt a semi-supervised anomaly detection setting, where models are trained solely on normal instances. Specifically, we apply a stratified split to the normal data, using 50\% for training and holding out the rest for testing. The test set includes both normal and anomalous samples. All features are standardized using a \texttt{StandardScaler} \citep{pedregosa2011scikit} fitted on the training data {\color{black}(see Figure~\ref{fig:AblationStudy_Normalization} in Appendix~\ref{appendix:subsec:AblationStudies} for an ablation study on the effect of feature normalization)}. This protocol is consistent with common practices in  anomaly detection  (e.g., \citep{zong2018deep,bergman2020classification,shenkar2022anomaly,yin2024mcm}) and ensures a fair evaluation.

\label{sec:ResultsAnalysis}
\begin{figure}[h]
    \centering
    \begin{subfigure}[b]{1.0\linewidth}
        \includegraphics[width=\linewidth]{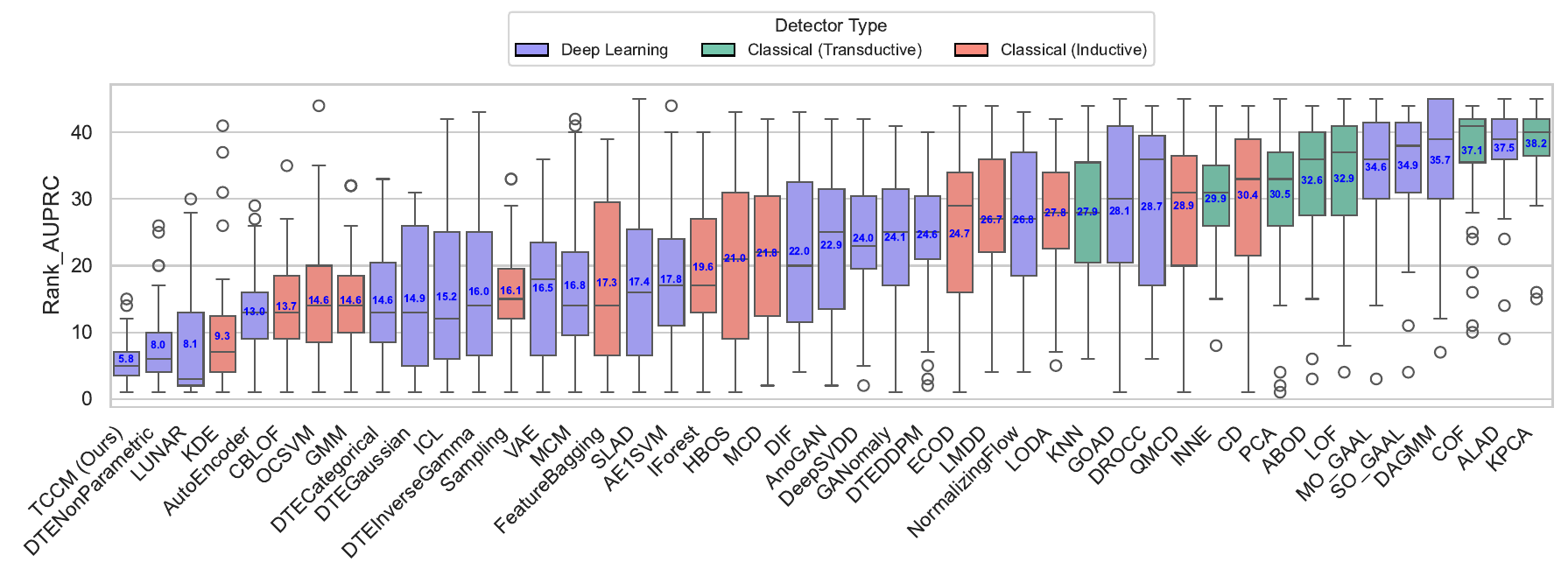}
        \caption{AUPRC ranking distribution across 47 datasets for 45 anomaly detectors.}
        \label{fig:PR_ranking_boxplot}
    \end{subfigure}
    \hfill
    \begin{subfigure}[b]{1.0\linewidth}
        \includegraphics[width=\linewidth]{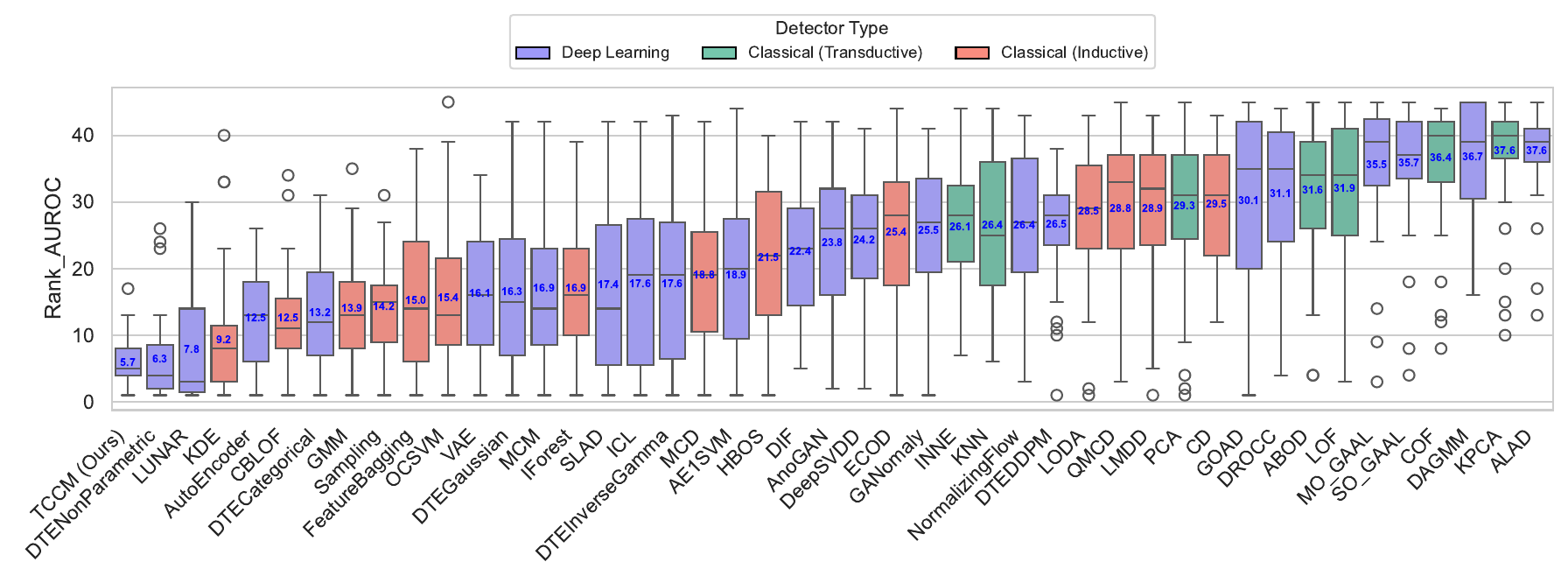}
        \caption{AUROC ranking distribution across 47 datasets for 45 anomaly detectors..}
        \label{fig:ROC_ranking_boxplot}
    \end{subfigure}
    \caption{Box plots of detector rankings based on AUPRC and AUROC scores across 47 datasets. Medians are marked by horizontal lines; means are shown as numbers.}
    \label{fig:ranking_boxplots}
\end{figure}

\textbf{Baselines and Evaluation Metrics.} 
(1) \textbf{Baselines}: We evaluate our method against 44 baselines, including 21 classical (shallow) and 23 deep anomaly detection algorithms. Detailed descriptions of these baselines are provided in Appendix~\ref{subsec:baselines_exp}. In particular, we offer a critical review of each deep method, highlighting their limitations in comparison to our approach in Appendix~\ref{subsec:anomaly_detection_methods}. 
(2) \textbf{Evaluation metrics}: We adopt two standard metrics—Area Under the Receiver Operating Characteristic curve (AUROC) and Area Under the Precision-Recall Curve (AUPRC)—with higher values indicating better performance (see Appendix~\ref{subsec:evaluation_metrics_exp} for more information).

\textbf{Configurations.} The details of architectures and hyperparameters will be postponed to Appendix~\ref{sec:appendix configuration}, while we highlight some of the main characteristics of our model here: the vector field $ f_\theta(\boldsymbol{x}, t) $ is parameterized by a 3-layer multilayer perceptron (MLP), where each hidden layer contains 256 units followed by ReLU activations. To incorporate time information, we use a fixed sinusoidal embedding of the scalar time input $ t \in [0, 1] $, following the positional encoding scheme used in transformer models~\citep{vaswani2017attention}. The time embedding is concatenated with the input vector $ \boldsymbol{x} $, and the combined representation is passed through the MLP to produce the predicted velocity field.

\subsection{Results Analysis}

\begin{figure}[h]
    \centering
    \begin{subfigure}[b]{1.0\linewidth}
        \includegraphics[width=\linewidth]{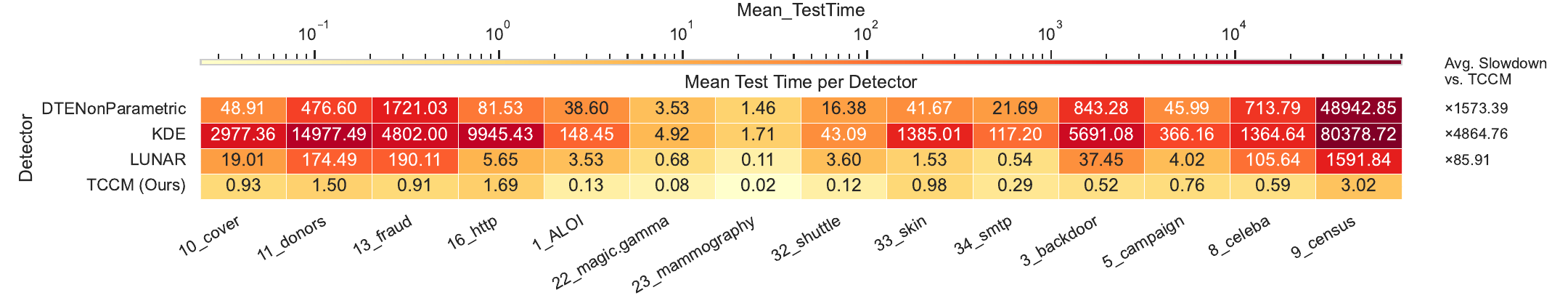}
        \caption{Mean inference time (in seconds)}
        \label{fig:Test_runtime_unified}
    \end{subfigure}
    \hfill
    \begin{subfigure}[b]{1.0\linewidth}
        \includegraphics[width=\linewidth]{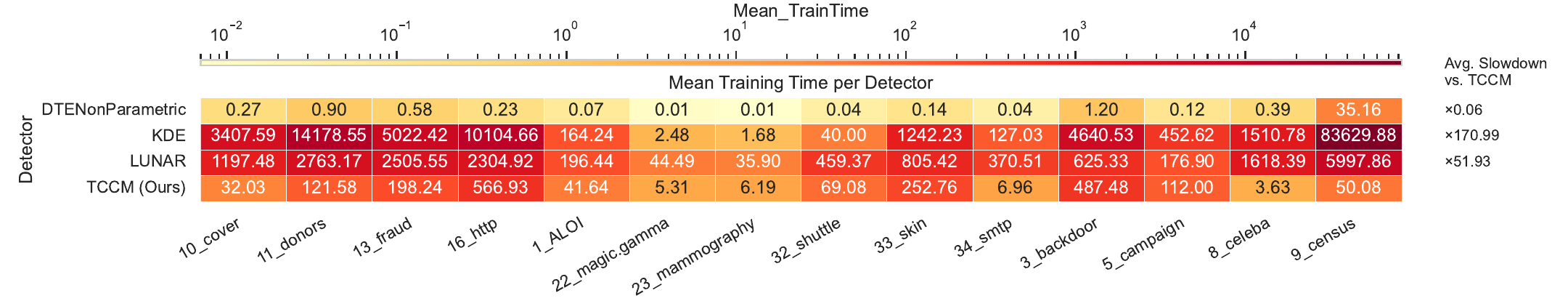}
        \caption{Mean training time (in seconds).}
        \label{fig:Train_runtime_unified}
    \end{subfigure}
    \hfill
    \begin{subfigure}[b]{1.0\linewidth}
        \includegraphics[width=\linewidth]{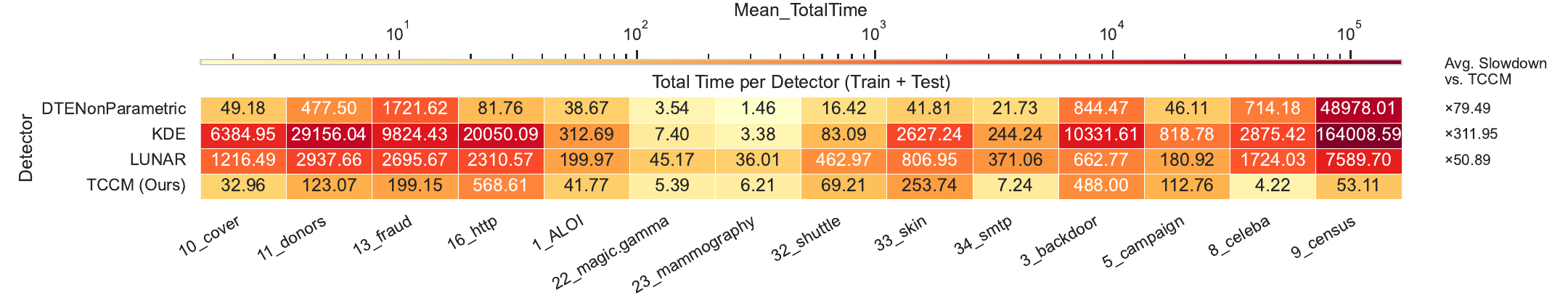}
        \caption{Mean total time (in seconds).}
        \label{fig:Total_runtime_unified}
    \end{subfigure}
    \caption{Mean run time (in seconds) across large-scale datasets for TCCM and other  top-performing baselines in detection accuracy.}
    \label{fig:Runtime_unified}
\end{figure}

\textbf{(1) Effectiveness.} Figures~\ref{fig:PR_ranking_boxplot} and~\ref{fig:ROC_ranking_boxplot} present the aggregated results based on AUPRC and AUROC scores, respectively. Due to the large scale of our experiments—covering 45 anomaly detectors across 47 datasets with 5 different random seeds, resulting in a total of 10{,}575 runs—it is impractical to include all individual results in the main paper. We thus report the complete results in Tables \ref{tab:roc_small_data_final}–\ref{tab:pr_highdim_data_final} in Appendix~\ref{appendix:FullResAndAnalysis}. Particularly, we evaluate each method by reporting the distribution of its rankings across the 47 datasets. Rankings are computed based on the average AUPRC (respectively, AUROC) across the 5 seeds. As shown in Figures~\ref{fig:PR_ranking_boxplot} and~\ref{fig:ROC_ranking_boxplot}, our method, \textsc{TCCM}, achieves the best overall performance in terms of both AUPRC (with an average rank of 5.8) and AUROC (with an average rank of 5.7). While {DTE}-NonParametric (second in both AUPRC and AUROC), \textsc{LUNAR} (third in both), and \textsc{KDE} (fourth in both) also demonstrate strong detection accuracy, we will show later that these methods suffer from poor scalability in training and/or inference, making them less favorable for large-scale deployment compared to \textsc{TCCM}. A more detailed analysis is deferred to Appendix~\ref{subsec:full_analysis_effectiveness} due to space constraint. We further perform statistical significance testing using the Friedman~\citep{friedman1937use} and Nemenyi tests~\citep{nemenyi1963distribution} to assess whether the observed ranking differences are statistically meaningful; detailed results are provided in Appendix~\ref{app:StatTests}.

\textbf{(2) Scalability.} \textsc{TCCM} achieves considerably faster inference than most deep learning baselines, particularly on large-scale and high-dimensional datasets. As shown in Figure~\ref{fig:Test_runtime_unified}, it significantly outpaces other \textit{high-accuracy} methods in inference speed—being {1,573.39$\times$ faster than DTE-NonParametric}, 4,864.76$\times$ faster than KDE, and 85.91$\times$ faster than LUNAR on average. On the largest dataset, \textit{census} (299,285 samples $\times$ 500 dimensions), \textsc{TCCM} takes just 1.50 seconds, while DTE-NonParametric requires 48,942 seconds. These results highlight \textsc{TCCM}'s scalability and suitability for real-time anomaly detection in big-data environments. {\color{black}
Beyond inference efficiency, we further provide an analysis on training time and total runtime to evaluate the end-to-end deployability of \textsc{TCCM}. As shown in Figure~\ref{fig:Train_runtime_unified}, \textsc{TCCM} maintains competitive training efficiency—while DTE-NonParametric trains  faster (requiring only $0.06\times$ the training time of \textsc{TCCM}), KDE and LUNAR are \textbf{170.99$\times$} and \textbf{51.93$\times$} slower, respectively. 
When considering the overall cost, \textsc{TCCM} exhibits the lowest total runtime among all top-performing baselines (Figure~\ref{fig:Total_runtime_unified}), outperforming DTE-NonParametric, KDE, and LUNAR by \textbf{79.49$\times$}, \textbf{311.95$\times$}, and \textbf{50.89$\times$}, respectively. 
This balanced efficiency across both training and inference phases underscores \textsc{TCCM}'s suitability for real-world, large-scale anomaly detection deployments, where both accuracy and runtime constraints are critical. To further contextualize the trade-off between speed and performance, we include  scatter plots comparing average inference time versus average AUROC (or AUPRC) across all 44 baselines (see Figures~\ref{fig:InferenceTime_vs_AUROC} and \ref{fig:InferenceTime_vs_AUPRC} in Appendix~\ref{app:InferenceEfficiencyTrend}). The results demonstrate that \textsc{TCCM} achieves one of the best balances between detection accuracy and inference efficiency among all evaluated methods.
A detailed breakdown and additional comparisons across all 45 anomaly detection methods are provided in Appendix~\ref{app:Scalability}.
}

\textbf{(3) Explainability.} TCCM is designed for tabular data and inherently supports self-explanation by producing feature-wise importance scores derived from its learned residual velocity field, which characterizes deviation from expected normal contraction behavior. To provide a more intuitive illustration of this property, we apply TCCM to image data (MNIST~\citep{deng2012mnist}), treating each pixel as a feature in a flattened tabular vector. We use digit \texttt{1} as the normal class and digit \texttt{7} as the anomaly (achieving an AUROC of 0.76). As shown in Figure~\ref{fig:XAD_example}, the model highlights the additional horizontal stroke that distinguishes \texttt{7} from \texttt{1}, demonstrating that the learned importance scores align well with human-interpretable cues. 
{\color{black} Importantly, these explanations are \emph{intrinsic} to TCCM rather than post hoc approximations such as SHAP \citep{lundberg2017unified} or LIME \citep{ribeiro2016should}: the residual vector itself encodes per-feature contributions to the anomaly score, faithfully reflecting the model's internal reasoning. For this, we provide a controlled synthetic study in Appendix~\ref{subsubsec:empirical_studies_interpretability}, which quantitatively validates the accuracy of these feature-level attributions and further substantiates TCCM’s intrinsic interpretability. This makes the explanations directly actionable in practice, enabling domain experts in areas such as fraud detection, healthcare, or industrial monitoring to identify not only \textit{which} instances are anomalous but also \emph{why}.}

\begin{figure}[h]
    \centering
    \includegraphics[width=1\linewidth]{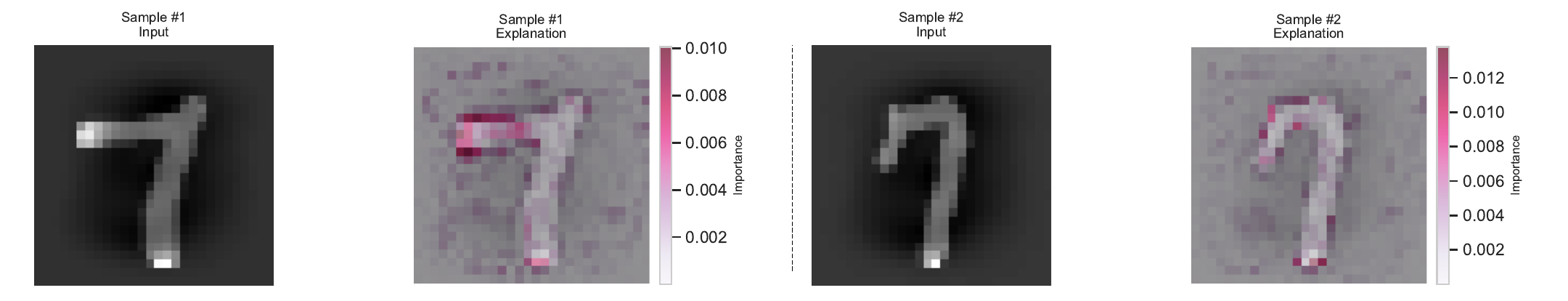}
 \caption{Illustrative examples of anomalous images and their explanations, where digit `1' is treated as the normal class and digit `7' as the anomaly. The highlighted regions correspond to structural differences between `7' and `1', which the model identifies as key contributors to the anomaly score.}
  \label{fig:XAD_example}
\end{figure}

{\color{black}
\textbf{(4) Ablation Studies and Sensitivity Analysis.} 
We study how major design and data factors influence \textsc{TCCM} (see Appendix~\ref{appendix:subsec:AblationStudies}):  
(1) {Time embedding and inference time} (Figures~\ref{fig:SensitivityAnalysis_TimeEmbedding}–\ref{fig:SensitivityAnalysis_Time}): results are nearly unchanged across choices, showing strong robustness;  
(2) {Noise injection} (Figure~\ref{fig:SensitivityAnalysis_Noise}): deterministic training consistently performs better;  
(3) {Training contamination} (Figure~\ref{fig:SensitivityAnalysis_ContamRatio}): higher anomaly ratios reduce accuracy, underscoring the need for clean supervision;  
(4) {Feature normalization} (Figure~\ref{fig:AblationStudy_Normalization}): z-score normalization is generally beneficial and improves robustness;  
(5) {Time-interpolated inputs} (Figure~\ref{fig:AblationStudy_TimeSampling}): interpolation offers no gain and may add noise;  
(6) {Comparison with Autoencoder +Time Embedding}: confirms that \textsc{TCCM} learns a time-conditioned velocity field rather than a reconstruction mapping.  
Overall, \textsc{TCCM} remains stable and efficient across all variations.}

\section{Conclusion}
\label{sec:conclusion}

We presented \textit{Time-Conditioned Contraction Matching} (TCCM), a novel method for semi-supervised anomaly detection in tabular data. By learning a time-conditioned contraction field grounded in flow matching, TCCM avoids adversarial training, trajectory simulation, and density modeling—offering a lightweight yet expressive alternative to existing generative-based anomaly detection methods. On the ADBench benchmark, TCCM outperforms 44 classical and deep baselines in both AUROC and AUPRC, while achieving orders-of-magnitude faster inference than the strongest diffusion-based competitor (namely DTE-NonParametric). It also provides feature-level interpretability via its learned vector field. Theoretical analysis confirms the Lipschitz continuity and discriminative power of its scoring function. Together, these results position TCCM as an highly effective, scalable,  interpretable, and robust solution for large-scale anomaly detection. Future directions include extending to other data modalities. The limitations and broader impacts of our work are further discussed in Appendix~\ref{app:limitations}.

\bibliographystyle{ACM-Reference-Format}
\bibliography{references}

\newpage
{\large{ \textbf{Appendix}}}

\appendix
\section*{Table of Contents}
\begin{itemize}
    \item[A] \hyperref[appendix:RelatedWork]{Related Work}
    \begin{itemize}
        \item[A.1] \hyperref[subsec:anomaly_detection_methods]{Anomaly Detection Methods}
        \begin{itemize}
            \item[A.1.1] \hyperref[subsubsec:one_class_classification]{One-Class Classification Methods}
            \item[A.1.2] \hyperref[subsubsec:generative_based_approaches]{Generative based Approaches}
            \item[A.1.3] \hyperref[subsubsec:reconstruction_based_methods]{Reconstruction-based Methods}
            \item[A.1.4] \hyperref[subsubsec:other_variants]{Self-Supervised based Methods and Other Miscellaneous Methods}
        \end{itemize}
        \item[A.2] \hyperref[subsec:generative_models]{Generative Models}
    \end{itemize}
    \item[B] \hyperref[sec:experiment_setups]{Experiment setups}
    \begin{itemize}
        \item[B.1] \hyperref[subsec:datasets_exp]{Datasets} 
        \item[B.2] \hyperref[subsec:baselines_exp]{Baselines} 
        \item[B.3] \hyperref[sec:appendix configuration]{Configurations}
        \item[B.4] \hyperref[subsec:evaluation_metrics_exp]{Evaluation Metrics} 
        \item[B.5] \hyperref[subsec:pseudocode]{Pseudo-code of TCCM} 
    \end{itemize}
    \item[C] \hyperref[sec:property_analysis]{Property Analysis}
    \begin{itemize}
        \item[C.1] \hyperref[appendix:subsec:CoomparedToDiffusionModels]{Relation to Flow Matching and Diffusion Modeling}
        \item[C.2] \hyperref[appendix:gmm_shift_proof]{Anomaly Score Expectation under Distributional Shift}
    \end{itemize}
    \item[D] \hyperref[appendix:FullResAndAnalysis]{Full Results and Analysis}
    \begin{itemize}
        \item[D.1] \hyperref[subsec:full_analysis_effectiveness]{Full Analysis of Effectiveness}
        \item[D.2] \hyperref[app:Scalability]{Full Analysis of Scalability}
        \item[D.3] \hyperref[appendix:subsec:AblationStudies]{Ablation Studies and Sensitivity Analysis: Full Results and Analysis}
        \item[D.4] \hyperref[subsec:empirical_studies_robustness_interpretability]{Empirical Studies on Robustness and Interpretability}
        \item[D.5] \hyperref[app:StatTests]{Statistical Tests}
        \item[D.6] \hyperref[app:limitations]{Limitations and Broader Impacts}
    \end{itemize}
    \item[E] \hyperref[appendix:inductive_results_analysis]{Results under the Inductive Evaluation Setting}
    \item[F] \hyperref[appendix:Acknowledgment]{Acknowledgment}
\end{itemize}

\section{Related Work}
\label{appendix:RelatedWork}
\subsection{Anomaly Detection Methods}
\label{subsec:anomaly_detection_methods}
\textbf{Unsupervised VS Semi-Supervised.} Labeled anomalies are often scarce in practice, as they typically correspond to rare and costly events—such as aerospace system crashes \citep{marzat2012model,nanduri2016anomaly}, faults in industrial systems \citep{li2022feature, li2024graph}, financial fraud \citep{hilal2022financial,khodabandehlou2024fifraud}, or critical health incidents \citep{vsabic2021healthcare, fernando2021deep}. As a result, anomaly detection is commonly framed as an unsupervised or semi-supervised task. Compared to supervised settings, most unsupervised approaches face a fundamental challenge: the absence of labeled input-output pairs renders standard regression or classification techniques inapplicable \citep{liu2022flow}. Anomaly detection inherits this difficulty, requiring alternative methods to detect abnormality without explicit supervision. To address this, many recent deep learning-based anomaly detection approaches adopt a \textit{semi-supervised} setting (or one-class classification), where the training data consists exclusively of normal instances, which are relatively easier to collect. The underlying principle is that a model trained solely on normal data should learn only normal patterns. When evaluated on test data containing both normal and anomalous instances, those deviating from the learned normal patterns are expected to exhibit larger fitting errors (e.g., reconstruction errors~\citep{an2015variational, zong2018deep}, prediction residuals~\citep{lai2021revisiting, hundman2018detecting}, or likelihood-based scores~\citep{zenati2018adversarially, ren2019likelihood}), leading to higher anomaly scores. Although many works refer to this setup as \textit{unsupervised anomaly detection}, we use the term \textit{semi-supervised anomaly detection} for conceptual clarity and rigor. This setting is widely adopted in recent deep learning-based anomaly detection studies such as DeepSVDD~\citep{ruff2018deep}, VAE~\citep{an2015variational}, GANomaly \citep{akcay2018ganomaly}, ALAD \citep{zenati2018adversarially}.

\textbf{Shallow vs. Deep.} \textit{Shallow methods} refer to classical anomaly detection approaches that do not rely on neural networks. Representative examples include One-Class SVM (OCSVM)~\citep{scholkopf1999support}, Support Vector Data Description (SVDD)~\citep{tax2004support}, Kernel Density Estimation (KDE)~\citep{latecki2007outlier}, Isolation Forest (IForest)~\citep{liu2008isolation}, and distance-based methods such as k-Nearest Neighbors (KNN) \citep{angiulli2002fast}, Local Outlier Factor (LOF)~\citep{breunig2000lof}, and Connectivity-based Outlier Factor (COF)~\citep{tang2002enhancing}. Kernel-based methods often suffer from poor scalability, as they require constructing large kernel matrices and storing support vectors during inference~\citep{ruff2018deep}. IForest, while efficient in low dimensions, tends to degrade in high-dimensional settings due to its reliance on random projections and axis-aligned splits, which may fail to capture meaningful structure in complex data. Similarly, KNN-based methods are sensitive to the curse of dimensionality, where distance metrics lose discriminative power, and their computational complexity scales poorly with dataset size. Overall, these limitations have motivated the development of deep learning-based anomaly detection methods, which can better handle large-scale and high-dimensional data by learning expressive representations.

\textit{Deep learning-based anomaly detection methods} can be broadly categorized into two groups \citet{ruff2018deep}: (1) \textit{two-stage approaches}, which utilize neural networks to learn feature representations, followed by classical anomaly detection algorithms applied to these representations; and (2) \textit{end-to-end trained approaches}, which integrate representation learning and anomaly detection objectives within a unified deep learning framework.  This paper focuses on the latter category, which has received increasing attention due to its potential for end-to-end optimization and adaptability to complex data modalities. We structure our review around four major lines of \textit{end-to-end trained} approaches: (i) one-class classification paradigms (e.g., Deep SVDD), (ii) generative based models (e.g., GANs-based methods, VAEs-based methods, diffusion models-based methods),(iii) reconstruction-based methods (e.g., autoencoders), and (iv) emerging variants including self-supervised based methods, graph-based methods, and hybrid models. For each group, we review some representative methods and highlight their respective limitations in the following.

\subsubsection{One-Class Classification Methods}
\label{subsubsec:one_class_classification}

This line of work draws inspiration from classical one-class classification, such as Support Vector Data Description (SVDD)~\citep{tax2004support}. For instance, Deep SVDD~\citep{ruff2018deep} and its variants train a neural network to map normal data into a compact region in latent space, typically minimizing the distance to a center point or hypersphere. Anomalies are then identified based on their deviation from this learned region. Other examples include AE-1SVM \citep{nguyen2019scalable}, Deep Robust One-Class Classification (DROCC) \citep{goyal2020drocc}, OneFlow~\citep{maziarka2021oneflow}, and they will be reviewed in more details as follows.

\textbf{DeepSVDD} \citep{ruff2018deep}. They train a neural network to learn a hypersphere of minimum volume to enclose the embeddings of normal instances, while the embeddings of abnormal instances tend to lie outside the hypersphere. However, it may suffer from the problem of hypersphere collapse, where the hypersphere collapses to a single point. To alleviate this collapse problem, they authors propose that: 1) all-zero-weights solution cannot be used for the center of hypersphere, 2) only unbounded activations should be used, and 3) bias terms need to be omitted, which may lead to sub-optimal feature representation as bias terms are mandatory to shift activation values in neural networks. In contrast, our method TCCM deliberately learns to flow to such a ``collapsed" single point, without suffering from any model collapse problem. Moreover, we operate on the input space without learning an explicit latent space, maintaining the explainability of anomaly scores. 

\textbf{AE-1SVM} \citep{nguyen2019scalable}. They combine autoencoder (for representation learning) with OCSVM (for anomaly detection) in an end-to-end training manner. Moreover, they extend gradient-based attribution methods to analyze the contribution of input features on anomaly scores. Particularly, to solve the scalability issues with kernel machines (which has a complexity of $O(N^{2})$ with $N$ the number of samples) in the original SVM, they employ random Fourier features \citep{rahimi2007random} to approximate the kernel function. They also point out that ``the biggest issue of OCSVM is their (poor) capability to handle large and high-dimensional datatsets due to optimization complexity."

\textbf{DROCC} \citep{goyal2020drocc}. They assume that instances from the normal class lie on a locally linear low dimensional manifold, which is well-sampled in the training data. A test instance is considered as anomalous if it is outside the union of small $l_{2}$ balls around the typical normal instances. Particularly, they convert the anomaly detection problem from an unsupervised learning setting to supervised setting as follows: they first generate synthetic anomalous instances (based on the above assumption) to add into the training set, and then train a supervised classifier to distinguish the embeddings of synthetical anomalous instances and those of  typical normal instances. This method is robust to representation collapse as mapping all instances  into a single point will lead to poor classification results. This method is applicable to tabular data, image, time series, audio, etc. DROCC's optimization is formulated as a saddle point problem, which is solved using standard gradient descent-ascent algorithm (which may be unstable like in adversarial training).

\textbf{OneFlow}~\citep{maziarka2021oneflow}. They introduce a one-class anomaly detection method based on NICE flows~\citep{dinh2014nice}, a type of normalizing flow with a volume-preserving transformation (i.e., constant Jacobian determinant). The core idea is to learn an invertible transformation that maps nominal data to a latent space, and then fit a minimal-volume hypersphere that encloses a fixed proportion of the mapped points. Anomalies are detected based on their distance from the hypersphere center. This approach avoids full density estimation and focuses on directly modeling the support of normal data. A Bernstein polynomial estimator is used to ensure smooth quantile estimation, and the training loss only depends on boundary-adjacent points—resembling support vector behavior. While effective in modeling low-density support, OneFlow has several limitations compared to flow matching methods:  the use of volume-preserving flows like NICE limits the expressivity of the learned transformation and the capacity to model complex data distributions.

\subsubsection{Generative based Approaches}
\label{subsubsec:generative_based_approaches}
Generative models, particularly those based on Generative Adversarial Networks (GANs) \citep{goodfellow2014Gans}, have been widely adapted for anomaly detection by learning to approximate the distribution of normal data \citep{Schlegl2017AnoGAN, akcay2018ganomaly}. These methods typically detect anomalies by measuring reconstruction error, discriminator scores, or deviations in latent space, leveraging GANs' capacity to generate realistic high-dimensional samples. Variational Autoencoders (VAEs) \citep{kingma2013auto} offer an alternative probabilistic formulation, modeling normality via reconstruction likelihood and latent priors. More recently, denoising diffusion probabilistic models (DDPMs) \citep{ho2020denoising} have been applied to anomaly detection, often by reconstructing inputs through reverse diffusion trajectories and using the reconstruction residual as an anomaly score \citep{livernoche2023diffusion}. However, such models often suffer from high inference latency due to the iterative nature of reverse sampling. Most notably, a newer class of generative models known as \textit{flow matching} \citep{lipman2022flow, albergo2023building, liu2022flow} has recently emerged as a powerful and stable alternative to diffusion models. Despite its strong theoretical foundation and demonstrated success in generative modeling, flow matching has not yet been systematically explored for anomaly detection, especially in the tabular setting. This leaves a promising research gap, motivating our development of a flow matching-inspired framework tailored specifically to the needs of semi-supervised anomaly detection.

\textbf{AnoGAN} \citep{Schlegl2017AnoGAN}. This  is the first work to employ GANs for anomaly detection. Specifically, they first utilize only normal instances to train a GAN model (including a generator and a discriminator). At test time, given a query image $\boldsymbol{x}$, they iteratively search for a latent embedding $\boldsymbol{z}$ such that the generated image $G(\boldsymbol{z})$ closely resembles $\boldsymbol{x}$ and lies on the learned data manifold. This is done by minimizing a joint loss consisting of a residual loss and a discrimination loss through $\Gamma$ steps of backpropagation. However, such per-instance optimization leads to a high inference cost, making the method less practical for real-time applications. To quantify the abnormality of a test sample, AnoGAN defines an \textit{anomaly score} as a weighted sum of the residual loss and the discrimination loss obtained after $\Gamma$ optimization steps in the latent space. Specifically, the residual loss captures the pixel-wise difference between the input image $\boldsymbol{x}$ and the generated image $G(\boldsymbol{z}_{\Gamma})$, while the discrimination loss measures how well the generated image fits on the learned data manifold. A high anomaly score indicates poor reconstruction and/or low likelihood under the discriminator, both of which suggest the input is dissimilar to the normal training distribution. In addition to scalar scoring, AnoGAN also provides visual explanation by computing a residual image $\boldsymbol{x}_R = | \boldsymbol{x} - G(\boldsymbol{z}_{\Gamma}) |$, highlighting regions that contribute most to the anomaly.

\textbf{ALAD} \citep{zenati2018adversarially}. They propose Adversarially Learned Anomaly Detection (ALAD), a reconstruction-based GAN framework tailored for efficient unsupervised anomaly detection. Unlike earlier methods such as AnoGAN that require costly per-instance optimization during inference, ALAD incorporates an encoder network that directly maps inputs to the latent space, enabling fast, one-pass anomaly scoring. The model builds upon the BiGAN framework \citep{donahue2016adversarial} by jointly training a generator, discriminator, and encoder, with additional cycle-consistency regularizations to enforce accurate reconstruction. Specifically, it introduces two auxiliary discriminators to enforce consistency in both data and latent spaces, improving the quality of reconstructions. To detect anomalies, ALAD defines an anomaly score based on the feature difference between the input and its reconstruction, extracted from an intermediate layer of the discriminator operating on sample pairs. This feature-level distance serves as a more robust indicator than raw pixel differences, especially for high-dimensional data. While ALAD achieves fast inference and strong performance, it relies on adversarially balancing multiple networks and loss components during training, which may introduce stability challenges and increased complexity compared to simpler reconstruction-based approaches.

\textbf{GANomaly} \citep{akcay2018ganomaly}. It is a representative GAN-based anomaly detection framework that enhances vanilla GANs by incorporating an explicit encoder–decoder structure. Instead of relying on latent space search at test time, GANomaly introduces a dual-encoder architecture to enable efficient, feedforward inference. During training, only normal data are used to train a generator network composed of an encoder and decoder ($G = G_D \circ G_E$), which learns to reconstruct the input images. In parallel, a second encoder $E$ is trained to map the reconstructed images back into the latent space. The key assumption is that, for normal samples, the original and reconstructed latent vectors ($z = G_E(x)$ and $\hat{z} = E(G(x))$) should be close, whereas for anomalous inputs, their discrepancy will be larger. The training objective combines three losses: (i) a contextual loss that encourages pixel-wise similarity between input and reconstruction, (ii) a feature matching loss computed from intermediate discriminator activations to stabilize adversarial learning, and (iii) a latent encoder loss that penalizes the difference between $z$ and $\hat{z}$. At test time, the anomaly score is defined based solely on the latent distance $\mathcal{A}(x) = | G_E(x) - E(G(x)) |_1$, allowing for fast, one-pass anomaly detection without the iterative inference used in methods like AnoGAN. While GANomaly achieves a good trade-off between accuracy and efficiency, it has two notable limitations. First, its anomaly score depends entirely on the latent discrepancy, which may be vulnerable to representation collapse or ambiguous reconstructions. Second, the three-part loss requires balancing multiple objectives, and tuning the corresponding weights without labeled data can be challenging in unsupervised settings.

\textbf{SO-GAAL} and  \textbf{MO-GAAL} \citep{liu2019generative} . They propose two GAN-based outlier detection methods that approach anomaly detection as an adversarial learning process. In \textit{Single-Objective Generative Adversarial Active Learning} (SO-GAAL), a generator is trained to synthesize informative potential outliers, while a discriminator attempts to distinguish these from the real data. This adversarial interplay gradually improves the quality of the generated outliers, enabling the discriminator to carve a tighter decision boundary around normal data. However, SO-GAAL may suffer from mode collapse and performance degradation once the generator overfits the data manifold. To address this, the authors further propose \textit{Multiple-Objective GAAL} (MO-GAAL), which introduces multiple sub-generators, each responsible for generating outliers relative to a specific subset of the data. By doing so, MO-GAAL builds a more diverse and comprehensive reference distribution, allowing the discriminator to maintain stable and accurate detection even when dealing with multi-modal or high-dimensional data. Both approaches ultimately compute an outlier score based on the discriminator’s output, with higher scores indicating greater deviation from the learned normal distribution.

\textbf{VAE} \citep{kingma2013auto}. They introduce a principled probabilistic framework for learning latent representations via a variational autoencoder. For anomaly detection, the VAE is trained solely on normal instances, learning to encode data into a low-dimensional latent space and reconstruct inputs through a decoder \citep{an2015variational,zhou2020unsupervised}. During training, the model jointly minimizes a reconstruction loss and a KL divergence regularizer, which encourages the latent codes to follow a standard Gaussian prior. At test time, given a new input $\boldsymbol{x}$, the model produces a reconstructed sample $\hat{\boldsymbol{x}}$ by encoding and decoding it through the learned latent space. The anomaly score is then computed based on the reconstruction error, typically using an $\ell_2$ distance between $\boldsymbol{x}$ and $\hat{\boldsymbol{x}}$. Since the model is trained to reconstruct normal patterns well, higher reconstruction errors suggest that the input deviates from the normal data distribution. Compared to GAN-based approaches like AnoGAN, VAE offers faster inference as no iterative optimization is required at test time, making it more practical for real-time applications. However, the generative quality of VAEs is often inferior to GANs, especially when dealing with complex or high-dimensional data. In some cases, even anomalous inputs can be reconstructed with low error, leading to false negatives. Furthermore, the balance between reconstruction fidelity and latent regularization (e.g., via the KL term or the $\beta$ coefficient in $\beta$-VAE) is sensitive, and improper tuning may lead to posterior collapse or over-regularization, which degrades anomaly detection performance.

\textbf{DTE \citep{livernoche2023diffusion}.} They  propose a novel use of diffusion processes for anomaly detection by predicting the noise level—or diffusion timestep—associated with an input sample. The core intuition is that normal instances lie close to the data manifold and hence resemble samples with low diffusion noise, while anomalies lie further away and mimic samples diffused with stronger noise. During training, DTE simulates noisy samples via a predefined forward diffusion process (e.g., variance-preserving), and learns a neural network to regress or classify the corresponding diffusion time, using only normal data. At test time, inputs that are harder to explain as low-noise samples receive higher predicted diffusion times and are flagged as anomalies. Notably, this approach avoids learning the full reverse process as in DDPMs and instead frames anomaly detection as a diffusion time estimation task. However, DTE—particularly its non-parametric variant—faces major scalability bottlenecks. DTE-NonParametric estimates the diffusion time of a test sample by computing a posterior over all training points, using distance-based kernel density approximations. This requires comparing each test input against a large training set, making inference prohibitively slow for high-volume or high-dimensional data. Moreover, the diffusion process used to synthesize training data adds to the overall preprocessing overhead, limiting DTE’s suitability for real-time or resource-constrained settings. Despite its strong detection performance, these computational costs hinder its broad applicability in large-scale anomaly detection pipelines.

\subsubsection{Reconstruction-based Methods}
\label{subsubsec:reconstruction_based_methods}

Reconstruction-based methods constitute a major paradigm in anomaly detection. The central idea is that models trained to accurately reconstruct normal data will fail to do so for anomalous inputs, which typically deviate from the learned data manifold. The reconstruction error—measured in input space or latent space—is then used as an anomaly score. Among the most widely used reconstruction-based models are Autoencoders (AEs) and their variants~\citep{sakurada2014anomaly,zhou2017anomaly}. These models learn compact representations through a bottleneck architecture and are trained to minimize the reconstruction loss on normal data. Anomalies are expected to produce larger reconstruction errors due to their poor alignment with the learned representation space. Variational Autoencoders (VAEs)~\citep{an2015variational} further extend this idea by introducing a probabilistic latent space, allowing uncertainty-aware reconstructions.

Beyond classical AEs, many generative models also incorporate reconstruction-based objectives. For example, GAN-based methods such as GANomaly~\citep{akcay2018ganomaly} and ALAD~\citep{zenati2018adversarially} utilize an encoder–decoder–discriminator pipeline, where anomaly scores are derived from reconstruction fidelity or latent-space consistency. Similarly, recent diffusion-based methods~\citep{livernoche2023diffusion} detect anomalies by reconstructing inputs through reverse diffusion trajectories. As other methods have been (or will be) reviewed in other parts, we will review  a representative reconstruction-based approach, DAGMM~\citep{zong2018deep}, in the following.

\textbf{DAGMM}~\citep{zong2018deep}. They propose the Deep Autoencoding Gaussian Mixture Model (DAGMM), a unified deep framework for unsupervised anomaly detection that jointly learns low-dimensional representations and density estimation. Specifically, DAGMM integrates two key components: a compression network, which is a deep autoencoder producing both latent features and reconstruction error metrics; and an estimation network, which models a Gaussian Mixture Model (GMM) over the concatenated features to estimate sample energy (i.e., negative log-likelihood). To avoid traditional two-step training, DAGMM jointly optimizes the autoencoder and the GMM via a shared objective that includes reconstruction loss, sample energy, and a regularization term to prevent degenerate covariance matrices. During inference, an anomaly score is computed as the energy of a test sample under the learned GMM, with higher energy indicating greater anomaly likelihood. Unlike conventional approaches that rely only on reconstruction error or pre-trained representations, DAGMM is trained end-to-end, enabling the autoencoder to adapt its compression strategy in favor of improved density estimation. This results in enhanced ability to detect subtle or "lurking" anomalies that might not have high reconstruction errors but reside in low-density regions. One limitation of DAGMM is that its network configuration—such as the number of mixture components in the GMM, and the architectures of the compression and estimation networks—needs to be selected in a data-dependent manner. These choices often require dataset-specific tuning, which may affect the model’s ease of deployment and generalizability across tasks.

\textbf{Limitations of Autoencoders.} Autoencoders are typically trained to reconstruct the input data while enforcing an intermediate low-dimensional representation, which acts as an information bottleneck to encourage the neural network to extract salient features from the training data. In the context of semi-supervised anomaly detection, this promotes learning the underlying factors of variation shared among normal instances. However, since autoencoders do not directly optimize for anomaly detection, their effectiveness heavily depends on how well the latent space captures relevant structure in the data. In particular, the choice of latent dimensionality becomes critical: if it is too high, the model may simply memorize the input; if it is too low, essential information may be lost. This hyperparameter is often data-dependent and difficult to tune due to the unsupervised nature of the task and the challenges in estimating the intrinsic dimensionality of the data~\citep{bengio2013representation}.

\subsubsection{Self-Supervised based Methods and Other Miscellaneous Methods}
\label{subsubsec:other_variants}

Recent studies explore alternative deep paradigms for anomaly detection, including self-supervised learning-based methods such as GOAD~\citep{bergman2020classification}, ICL~ \citep{shenkar2022anomaly}, SLAD~\citep{xu2023fascinating}, and MCM~ \citep{yin2024mcm}, graph neural network-based method such as LUNAR~\citep{goodge2022lunar}, and hybrid models such as DIF~\citep{xu2023deep}  that combine deep feature learning with traditional detectors. They will be reviewed in more detail as follows.

\textbf{GOAD} \citep{bergman2020classification}. They introduce a classification-based approach for anomaly detection that unifies one-class and transformation-based paradigms. It first applies a set of $M$ geometric or affine transformations to each normal training instance and learns a shared feature extractor $f$ that maps transformed inputs to a representation space. Each transformed variant is encouraged to cluster around a distinct center using a triplet-style loss, promoting intra-class compactness and inter-class separation. At inference, GOAD computes the transformation prediction likelihood for each transformed test instance and aggregates these into a final anomaly score: samples that are poorly aligned with any learned transformation subspace are considered anomalous. Unlike earlier transformation-based methods (e.g., \textsc{GEOM}~\citep{golan2018deep}) that suffer from unreliable extrapolation to unseen anomalies, GOAD regularizes prediction confidence on out-of-distribution regions and generalizes to non-image data via learnable affine transformations. However, GOAD's effectiveness heavily depends on the quality and diversity of the transformations used, and its performance is sensitive to the choice of the number of transformations $M$, which must be manually specified and tuned per dataset—potentially limiting its practicability across domains.

\textbf{ICL} \citep{shenkar2022anomaly}. They introduce a novel approach to anomaly detection in tabular data by leveraging contrastive learning on internal feature partitions. Unlike methods that depend on external transformations or assume data structure (e.g., spatial correlations in images), ICL operates under the premise that dependencies among feature subsets are class-specific. Specifically, given a single-class training set, the method slides a window of size $k$ over each input vector $\boldsymbol{x} \in \mathbb{R}^d$ to generate a set of $m = d - k + 1$ paired sub-vectors $(\boldsymbol{a}_j, \boldsymbol{b}_j)$, where $\boldsymbol{a}_j$ is a segment of $k$ consecutive features and $\boldsymbol{b}_j$ is its complement. Two neural networks $G$ and $F$ embed $\boldsymbol{a}_j$ and $\boldsymbol{b}_j$, respectively, into a shared latent space $\mathbb{R}^u$, trained to maximize mutual information between matching pairs via a noise contrastive loss. At test time, the anomaly score for a sample $\boldsymbol{x}$ is defined as the sum of contrastive losses across all $j$, directly measuring how class-consistent its internal structure is under the learned embeddings. Importantly, the method is interpretable by design: the local loss for each feature subset allows pinpointing which attributes contribute most to an anomaly. While hyperparameters such as the window size $k$ and latent dimension $u$ must be set (in a data-dependent way), empirical evidence shows that performance is robust across a wide range of values, and no dataset-specific tuning is required.

\textbf{SLAD} \citep{xu2023fascinating}. They propose Scale Learning-based Anomaly Detection (SLAD), a self-supervised framework for tabular anomaly detection that avoids reliance on reconstruction losses. SLAD introduces a novel supervision signal—scale—which quantifies the relationship between subspace dimensionality and representation complexity. Specifically, it samples subspaces of each input, maps them to fixed-length representations, and defines scale labels to supervise the learning of a ranking function via distribution alignment. During inference, anomaly scores are computed by measuring the divergence between predicted and target scale distributions, based on the assumption that normal samples produce more consistent and predictable scales. Despite its conceptual novelty and strong empirical performance, SLAD presents several limitations. First, the generation of scale labels and the assumption that anomalies inherently exhibit scale inconsistencies may not hold uniformly across datasets or domains, particularly when anomalies are subtle or lie near the decision boundary. Second, SLAD’s reliance on distribution alignment adds algorithmic complexity and hyperparameter sensitivity, which may impact robustness in practical deployment. Finally, while the model is self-supervised, its interpretability remains limited, as the learned scale concept is abstract and may not directly correspond to human-understandable explanations.

\textbf{MCM} \citep{yin2024mcm}. This is a novel self-supervised framework for anomaly detection in tabular data. Inspired by masked modeling techniques in NLP and vision (e.g., BERT and MAE), MCM learns to reconstruct randomly masked subsets of input features using only the unmasked portions. The core hypothesis is that normal instances exhibit strong internal feature correlations, which the model can learn to reconstruct effectively—while anomalies violate such correlations, leading to higher reconstruction error. To enhance robustness, MCM introduces a \textit{learnable masking strategy} that dynamically generates multiple soft masks per instance. These masks are trained end-to-end via a mask generator network. A \textit{diversity loss} is used to ensure that different masks capture complementary correlations among features, thereby improving the model’s discriminative power. The final anomaly score is computed as the average reconstruction error across all masked versions. Compared to prior methods such as contrastive learning (e.g., ICL), which often rely on engineered transformations, MCM provides a more data-driven and flexible mechanism to capture normality. Moreover, the ensemble of masks makes the method more expressive while maintaining a lightweight architecture based on an encoder-decoder MLP. MCM also offers interpretability through both per-mask and per-feature contributions, facilitating insight into which correlations are violated by a given anomaly.

\textbf{LUNAR} \citep{goodge2022lunar}. They propose a local outlier detection framework based on graph neural networks (GNNs) with learnable message aggregation. Instead of relying on raw feature vectors, LUNAR constructs a $k$-nearest neighbor graph from the training data and uses pairwise distances as edge features. A single-layer GNN is trained to distinguish normal points from synthetic negative samples by aggregating the distance vector from each node's neighborhood. This design enables the model to learn a parametric anomaly scoring function that adapts better than classical non-trainable local methods such as LOF or KNN. Negative samples are generated using a mix of uniform sampling and feature-space perturbation, which prevents the model from collapsing to trivial solutions. While LUNAR demonstrates strong empirical performance, its reliance on $k$-NN graph construction introduces scalability challenges in high-dimensional settings, where distance metrics become less meaningful. Moreover, because it avoids using raw features and instead encodes distance information through a learnable aggregation function, it can incur additional computational cost in preprocessing and training—especially on large-scale datasets with many neighbors per node. As the nearest neighbor graph must be recomputed for each new input distribution and all neighborhood distances are fed into the model, the method may be less suitable for real-time or streaming applications compared to embedding-based approaches.

\textbf{DIF} \citep{xu2023deep}. They propose the \textit{Deep Isolation Forest} (DIF), a scalable anomaly detection method that extends isolation-based techniques by incorporating random deep representations. Instead of applying axis-aligned splits on raw features (as in iForest~\citep{liu2008isolation}), DIF first transforms input data into multiple representation spaces using randomly initialized, optimization-free neural networks. These transformations enable nonlinear partitioning in the original space via simple axis-parallel cuts in the projected space. To ensure efficiency, DIF introduces a computation-efficient ensemble mechanism (CERE) that allows all ensemble members to be computed simultaneously in a mini-batch. For scoring, DIF further proposes a deviation-enhanced anomaly scoring function (DEAS) that combines traditional path length with deviation from split thresholds to reflect local density and isolation difficulty. The authors show that DIF generalizes both iForest and Extended Isolation Forest (EIF)~\citep{hariri2019extended}, while preserving linear scalability and offering stronger expressive power. However, DIF relies heavily on the randomness and diversity of representations for performance, which may lead to instability without sufficient ensemble size or structure-aware initialization.

\subsection{Generative Models}
\label{subsec:generative_models}
Generative models span a broad spectrum of paradigms, including energy-based models~\citep{lecun2006tutorial}, variational autoencoders (VAEs)~\citep{an2015variational}, generative adversarial networks (GANs)~\citep{goodfellow2014Gans}, normalizing flows~\citep{papamakarios2021normalizing}, autoregressive models (e.g., Transformers~\citep{vaswani2017attention}), diffusion models~\citep{ho2020denoising}, and the more recent flow matching framework~\citep{lipman2022flow, albergo2023building, liu2022flow}. While each of these approaches offers unique modeling capabilities, we focus our discussion on flow matching, as it is most directly relevant to the methodology developed in this paper.

\textbf{GANs and Diffusion Models.} Generative adversarial networks (GANs) \citep{goodfellow2014Gans} have long been the de facto choice for high-fidelity image generation, but diffusion models have recently surpassed them in terms of mode coverage and conditional flexibility. Despite their effectiveness, diffusion models \citep{yang2023diffusion} are notoriously computationally intensive, often requiring hundreds of iterative denoising steps to generate a single sample. This has spurred efforts to accelerate training and inference~\citep{dockhorn2022genie, jolicoeur2021gotta}; however, many of these approaches still suffer from slow convergence and rely on carefully constructed probability paths, which limit scalability on high-dimensional or large-scale datasets \citep{dao2023flow}. Particularly, there is a recent survey paper on diffusion models for tabular data \citep{li2025diffusion}, which systematically reviewed existing diffusion models for tabular data modeling (including anomaly detection).

\textbf{Normalizing Flows and Continuous Normalizing Flows.} 
Continuous normalizing flows (CNFs)~\citep{chen2018neural,grathwohl2018ffjord} model invertible transformations between distributions using neural ordinary differential equations (ODEs). While theoretically elegant, training such models is computationally intensive, as it involves solving ODEs during each forward and backward pass, making scalability a major bottleneck. 
To overcome this, recent works on \textit{flow matching}~\citep{albergo2023building, lipman2022flow, liu2022flow} propose simulation-free alternatives that avoid explicit trajectory integration. Inspired by ideas from score-based diffusion models, these methods enable more efficient training of CNFs by directly learning velocity fields without solving full ODE systems.

\textbf{Flow Matching.} Flow matching has emerged as a promising alternative that inherits many of the desirable properties of diffusion models—such as robustness and expressivity—while avoiding their main drawbacks. Rather than relying on stochastic differential equations (SDEs), flow matching learns an ordinary differential equation (ODE) that deterministically maps samples from a source distribution $ \rho_0 $ ($ \rho_{\text{source}} $) to a target distribution $ \rho_1 $ ($ \rho_{\text{target}} $). This shift from stochastic to deterministic dynamics leads to lower curvature in generative trajectories, which translates into improved stability, faster sampling, and easier optimization~\citep{liu2022flow, lee2023minimizing}. The simplicity of its training framework also facilitates broader adoption in settings where computational efficiency is critical. Beyond efficiency, flow matching offers notable modeling flexibility and interpretability. It eliminates the need for a forward diffusion process, and training can be performed by directly matching vector fields between arbitrary distributions~\citep{MITFlowMatching2024, albergo2023building}. This generality stands in contrast to denoising diffusion models, which often assume Gaussian base distributions and Gaussian interpolants. In addition, the interpolant formulation of flow matching allows evaluation of intermediate densities $ \rho_t $ at arbitrary time points $ t \in [0,1] $, enabling empirical inspection of the learned velocity field throughout the trajectory~\citep{albergo2023building}. Although this capability may not always be required—e.g., in anomaly detection we often focus only on the terminal velocity at $ t = 1 $—it enhances interpretability. Lastly, because flow matching only requires samples from the source and target distributions (not their explicit densities), it is particularly well-suited to scenarios with implicit or intractable data distributions.

\textbf{Difference from Generative Models.} While the main objective of normalizing flows  \citep{rezende2015variational} or flow matching  \citep{lipman2022flow} is to transform a simple initial distribution into a complex, often multimodal, target distribution—either through a sequence of invertible mappings (in the case of normalizing flows) or via velocity fields (in flow matching)—our focus, by contrast, is on measuring the distance to the target distribution after applying the learned one step velocity field (at any given time). Importantly, this target distribution is a degenerate distribution, which stands in stark contrast to the typical emphasis in generative modeling on the validity and richness of the target distribution.

\section{Experiment setups}
\label{sec:experiment_setups}
\subsection{Datasets}
\label{subsec:datasets_exp}
 A summary of the datasets used in our study is provided in Table~\ref{tab:datasets_summary}. We adopt 47 benchmark datasets from the well-established \textsc{ADBench} benchmark~\citep{han2022adbench}, spanning diverse domains including sociology, finance, linguistics, physics, and healthcare. To enable a comprehensive evaluation of different anomaly detectors, including our proposed method, we categorize the datasets into four groups based on their scale and dimensionality: 
(a) \textit{High-dimensional} datasets, with more than 50 features; 
(b) \textit{Large-scale} datasets (but not high-dimensional), containing more than 10{,}000 instances and fewer than 50 features; 
(c) \textit{Medium-scale} datasets (not high-dimensional), with 1{,}000 to 10{,}000 instances; and 
(d) \textit{Small-scale} datasets (not high-dimensional), containing fewer than 1{,}000 instances. 
This categorization facilitates a nuanced analysis of model performance across varying data regimes.

\begin{table}
\centering
\caption{Summary of Datasets. To systematically evaluate the performance of various anomaly detectors, we categorize the datasets into four groups based on their data scale and dimensionality: (a) \textit{high-dimensional} datasets, which contain more than 50 features; (b) \textit{large-scale} datasets (but not high-dimensional), with more than 10,000 instances and fewer than 50 features; (c) \textit{medium-scale} datasets (not high-dimensional), with between 1,000 and 10,000 samples; and (d) \textit{small-scale} (not high-dimensional) datasets, consisting of fewer than 1,000 instances. This categorization allows for a nuanced analysis of model behavior under different data regimes.}

\label{tab:datasets_summary}
\resizebox{\textwidth}{!}{%
\begin{tabular}{lcccccc}
\toprule
 Dataset  & \# Samples & \# Features & \# Anomaly & \% Anomaly & Domain & Category \\
\midrule
census  & 299285 & 500 & 18568 & 6.2 & Sociology & High-dimensional \\
backdoor & 95329 & 196 & 2329 & 2.44 & Network & High-dimensional \\
campaign  & 41188 & 62 & 4640 & 11.27 & Finance & High-dimensional \\
mnist  & 7603 & 100 & 700 & 9.21 & Image & High-dimensional \\
speech  & 3686 & 400 & 61 & 1.65 & Linguistics & High-dimensional \\
optdigits  &5216 & 64 & 150 & 2.88 & Image & High-dimensional \\
SpamBase  & 4207 & 57 & 1679 & 39.91 & Document & High-dimensional \\
musk  & 3062 & 166 & 97 & 3.17 & Chemistry & High-dimensional \\
InternetAds & 1966 & 1555 & 368 & 18.72 & Image & High-dimensional \\
\hline
donors  & 619326 & 10 & 36710 & 5.93 & Sociology & Large \\
http  & 567498 & 3 & 2211 & 0.39 & Web & Large \\
cover  & 286048 & 10 & 2747 & 0.96 & Botany & Large \\
fraud  & 284807 & 29 & 492 & 0.17 & Finance & Large \\
skin  & 245057 & 3 & 50859 & 20.75 & Image & Large \\
celeba & 202599 & 39 & 4547 & 2.24 & Image & Large \\
smtp & 95156 & 3 & 30 & 0.03 & Web & Large \\
ALOI  & 49534 & 27 & 1508 & 3.04 & Image & Large \\
shuttle  & 49097 & 9 & 3511 & 7.15 & Astronautics & Large \\
magic.gamma  & 19020 & 10 & 6688 & 35.16 & Physical & Large \\
mammography & 11183 & 6 & 260 & 2.32 & Healthcare & Large \\
\hline
annthyroid  & 7200 & 6 & 534 & 7.42 & Healthcare & Medium \\
pendigits  & 6870 & 16 & 156 & 2.27 & Image & Medium \\
satellite & 6435 & 36 & 2036  & 31.64 & Astronautics & Medium \\
landsat  & 6435 & 36 & 1333 & 20.71 & Astronautics & Medium \\
satimage-2  & 5803 & 36 & 71 & 1.22 & Astronautics & Medium \\
PageBlocks & 5393 & 10 & 510 & 9.46 & Document & Medium \\
Wilt  &4819 & 5 & 257 & 5.33 & Botany & Medium \\
thyroid  & 3772 & 6 & 93 & 2.47 & Healthcare & Medium \\
Waveform & 3443 & 21 & 100 & 2.9 & Physics & Medium \\
Cardiotocography  & 2114 & 21 & 466 & 22.04 & Healthcare & Medium \\
fault  & 1941 & 27 & 673 & 34.67 & Physical & Medium \\
cardio  & 1831 & 21 & 176 & 9.61 & Healthcare & Medium \\
letter  & 1600 & 32 & 100 & 6.25 & Image & Medium \\
yeast  & 1484 & 8 & 507 & 34.16 & Biology & Medium \\
vowels  & 1456 & 12 & 50 & 3.43 & Linguistics & Medium \\
\hline
Pima  & 768 & 8 & 268 & 34.9 & Healthcare & Small \\
breastw  & 683 & 9 & 239 & 34.99 & Healthcare & Small \\
WDBC & 367 & 30 & 10 & 2.72 & Healthcare & Small \\
Ionosphere  & 351 & 32 & 126 & 35.9 & Oryctognosy & Small \\
Stamps  & 340 & 9 & 31 & 9.12 & Document & Small \\
vertebral  & 240 & 6 & 30 & 12.5 & Biology & Small \\
WBC  & 223 & 9 & 10 & 4.48 & Healthcare & Small \\
glass  & 214 & 7 & 9 & 4.21 & Forensic & Small \\
WPBC  & 198 & 33 & 47 & 23.74 & 	Healthcare & Small\\
Lymphography & 148 & 18 & 6 & 4.05 & Healthcare & Small \\
wine & 129 & 13 & 10 & 7.75 & Chemistry & Small \\
Hepatitis  & 80 & 19 & 13 & 16.25 & Healthcare & Small \\
\bottomrule
\end{tabular}%
}
\end{table}

\subsection{Baselines}
\label{subsec:baselines_exp}
Before introducing the baseline methods, we clarify an important distinction in anomaly detection paradigms: \textbf{inductive} vs.~\textbf{transductive} approaches. Inductive methods learn a generalizable decision function from the training set and apply it directly to unseen test data. In contrast, transductive methods rely on the distribution of the test set during inference, often scoring anomalies relative to the entire evaluation batch. While inductive approaches are generally preferred in deployment scenarios where test data is unavailable during training, transductive methods are still commonly included in benchmarking for historical and comparative purposes.

Our proposed method TCCM is an inductive method, as it learns a model using training data and then computes anomaly scores with the unseen test data. Although comparing inductive and transductive methods directly may not always be ideal due to differing assumptions, we include both for completeness. We categorize the anomaly detection baselines into two main groups:  \begin{itemize}
    \item 21 Classical Machine Learning-based Methods (Transductive and Inductive). (1) Transductive Methods: \textbf{ABOD} \citep{kriegel2008angle},
    \textbf{COF} \cite{tang2002enhancing}, 
    \textbf{LOF} \citep{breunig2000lof},
    \textbf{PCA} \citep{shyu2003novel},
    \textbf{KPCA} \citep{hoffmann2007kernel},
    \textbf{KNN} \citep{ramaswamy2000efficient},
    \textbf{INNE} \citep{bandaragoda2018isolation}; and 
    (2) Inductive Methods:
\textbf{CBLOF}\citep{he2003discovering}, 
    \textbf{CD} \citep{cook1977detection},
    \textbf{ECOD} \citep{li2022ecod}
    \textbf{FeatureBagging} \citep{lazarevic2005feature}, 
    \textbf{GMM} \citep{agarwal2007detecting}, 
    \textbf{HBOS} \citep{goldstein2012histogram}, \textbf{IForest} \citep{liu2008isolation},  
    \textbf{KDE} \citep{latecki2007outlier},
    \textbf{LMDD} \citep{arning1996linear},
    \textbf{LODA} \citep{pevny2016loda}, 
    \textbf{MCD} \citep{fauconnier2009outliers},
    \textbf{OCSVM} \citep{scholkopf1999support},
    \textbf{QMCD} \citep{fang2001wrap},
    and 
    \textbf{Sampling} \citep{sugiyama2013rapid}.
\end{itemize}

\begin{itemize}
    \item 23 Deep Learning-based Methods (All are inductive). 
    \textbf{AutoEncoder} \citep{sakurada2014anomaly,aggarwal2017outlier}, 
    \textbf{ALAD} \citep{zenati2018adversarially},
    \textbf{DIF} \citep{xu2023deep},
    \textbf{DeepSVDD} \citep{ruff2018deep},
    \textbf{LUNAR} \citep{goodge2022lunar},
    \textbf{MOGAAL} \citep{liu2019generative}, 
    \textbf{SOGAAL} \citep{liu2019generative},
    \textbf{VAE} \citep{an2015variational},
    {\color{black}\textbf{AE-1SVM}} \citep{nguyen2019scalable},
    {\color{black}\textbf{AnoGAN}} \citep{Schlegl2017AnoGAN},
    {\color{black}\textbf{DAGMM}} \citep{zong2018deep}, 
    {\color{black}\textbf{PlanarFlow}} (Normalizing Flows) \citep{rezende2015variational},
    {\color{black}\textbf{SLAD}} \citep{xu2023fascinating},  
    {\color{black}\textbf{MCM}} \citep{yin2024mcm}, {\color{black}\textbf{ICL}}\citep{shenkar2022anomaly}, {\color{black}\textbf{GOAD}}\citep{bergman2020classification}, {\color{black}\textbf{GANomaly}}\citep{akcay2018ganomaly},
    {\color{black}\textbf{DTE-Categorical}} 
 \citep{livernoche2023diffusion},  \textbf{DTE-Gaussian} \citep{livernoche2023diffusion},
    {\color{black}\textbf{DTE-InverseGamma}} \citep{livernoche2023diffusion}, {\color{black}\textbf{DTE-NonParametric}} \citep{livernoche2023diffusion},
    {\color{black}\textbf{DROCC}} \citep{goyal2020drocc},
    {\color{black}\textbf{Anomaly-DDPM}} \citep{livernoche2023diffusion}.
\end{itemize}

\subsection{Configurations}
\label{sec:appendix configuration}
All experiments are independently performed five times with different random seeds (0, 1, 2, 3, and 4) on each dataset for all 44 baselines and our proposed TCCM with high reproducibility to ensure high robustness, and account for variability due to random initialization.

\textbf{Implementation Details of TCCM\footnote{Code available at: \url{https://github.com/ZhongLIFR/TCCM-NIPS}}.} 
The time-conditioned velocity field \( f_\theta(\boldsymbol{x}, t) \) is parameterized by a 3-layer multilayer perceptron (MLP) with 256 hidden units per layer and ReLU activations. To incorporate time information, we apply fixed sinusoidal embeddings~\citep{vaswani2017attention} to the scalar input \( t \in [0, 1] \), following the positional encoding strategy used in transformer architectures. The time embedding (default dimension: 128) is concatenated with the input vector \( \boldsymbol{x} \), and the combined representation is passed through the MLP to produce the predicted flow vector. We use the Adam optimizer with a learning rate of 0.005. The batch size is set to 1024 for datasets with more than 10,000 samples and to \( \min(512, \#\text{training instances}) \) for smaller datasets. The number of training epochs is determined empirically using the unsupervised hyperparameter selection method proposed by~\citet{li2025towards}, which requires no access to anomaly labels. While their method supports per-seed tuning, for consistency and fair evaluation, we fix the number of epochs across different random seeds. Notably, thanks to the efficiency of TCCM, tuning this single hyperparameter incurs minimal computational overhead. This is also the only data-dependent hyperparameter in our setup. The choices of key hyperparameters for our TCCM are presented in Table~\ref{tab:flow_configurations}.

\textbf{Implementations of Other Baselines.} We utilise the well-established PyOD package~\citep{zhao2019pyod} for implementing (1) all classical anomaly detectors such as  IForest~\citep{liu2008isolation}, KDE~\citep{latecki2007outlier}, etc., and (2) some deep anomaly detectors, including 
    AutoEncoder~\citep{aggarwal2017outlier}, ALAD~\citep{zenati2018adversarially},
    DIF~\citep{xu2023deep}, DeepSVDD~\citep{ruff2018deep}, LUNAR~ \citep{goodge2022lunar},
    MOGAAL~ \citep{liu2019generative}, 
    SOGAAL~ \citep{liu2019generative},
    VAE~ \citep{an2015variational},
    AE-1SVM~ \citep{nguyen2019scalable}, and 
    AnoGAN~ \citep{Schlegl2017AnoGAN}; Additionally, we adapt the implementations from ADBench~\footnote{ADBench: \url{https://github.com/Minqi824/ADBench/tree/main/adbench/baseline}} for DAGMM~\citep{zong2018deep} and GANAnomaly~\citep{akcay2018ganomaly}; Besides, we include various advanced deep detectors, DROCC~\citep{goyal2020drocc}\footnote{DROCC: \url{https://github.com/microsoft/EdgeML/blob/master/pytorch/edgeml_pytorch}}, GOAD~\citep{bergman2020classification}\footnote{GOAD: \url{https://github.com/lironber/GOAD}}, ICL~\citep{shenkar2022anomaly}\footnote{ICL: in the supplementary material of \url{https://openreview.net/forum?id=_hszZbt46bT}}, SLAD~\citep{xu2023fascinating}\footnote{SLAD: \url{https://github.com/xuhongzuo/scale-learning}}, MCM~\citep{yin2024mcm}\footnote{MCM: \url{https://github.com/JXYin24/MCM}}, DTE (with four variants DTE-Categorical, DTE-Gaussian, DTE-InverseGamma, and DTE-NonParametric and a modified DDPM)~\citep{livernoche2023diffusion}\footnote{DTE \& DDPM: \url{https://github.com/vicliv/DTE}}; The implementation of planar flows~\citep{rezende2015variational}, a normalizing-flows-based detector, is also taken from~\citep{livernoche2023diffusion}. For all baseline detectors, we use their default configurations and hyperparameters as provided by their source implementations.

\textbf{Hardware and Software.} All experiments are conducted on machines equipped with Intel Xeon Gold 6430 CPUs (3.4 GHz, same model across runs, though not necessarily the same physical unit) and 256 GB RAM. No GPU acceleration is used. To ensure a fair comparison, each model is restricted to run on a \textit{single} CPU core, allocated up to 10 GB of RAM, and a maximum runtime of 3 days per dataset. Our implementation is based on Python 3.9.21 with PyTorch 2.0, and experiments are executed within a conda‑managed environment running Ubuntu 22.04.
\subsection{Evaluation metrics}
\label{subsec:evaluation_metrics_exp}
We evaluate our proposed method and baselines using two standard metrics: Area Under the Receiver Operating Characteristic curve (AUROC) and Area Under the Precision-Recall Curve (AUPRC) \citep{mcdermott2024closer}. Both metrics range from 0 to 1, with higher values indicating better performance. AUROC reflects a method’s ability to distinguish between normal and anomalous instances: a score near 1 indicates near-perfect performance, 0.5 corresponds to random guessing, and values below 0.5 imply worse-than-random behavior. For AUPRC, which is more informative in imbalanced settings, higher values reflect better precision-recall trade-offs. All experiments are repeated over 5 independent runs with different random seeds, and we report the mean and standard deviation of each metric for every (dataset, anomaly detector) pair. For each dataset, we also compute detector rankings based on their mean AUROC and AUPRC. Due to the scale of the experiments, we present the complete tables of AUROC and AUPRC scores in the appendix. In the main paper, we visualize the distribution of ranks—computed from the mean AUROC and AUPRC scores across 5 runs—over all datasets using box plots. These plots are ordered by overall performance, defined as the average rank across datasets, where each rank is based on the per-dataset mean score aggregated over the 5 runs.

\subsection{Pseudo-code of TCCM}
\label{subsec:pseudocode}

\begin{algorithm}[H]
\caption{TCCM Training}
\label{alg:tccm_training}
\begin{algorithmic}[1]
\State \textbf{Input:} Training data samples $\boldsymbol{z} \sim \pdata$, neural network $f_{\boldsymbol{\theta}}$ with parameters $\boldsymbol{\theta}$, number of training epochs $N_{\text{epochs}}$, batch size $B$, learning rate $\eta$.
\State Initialize model parameters $\boldsymbol{\theta}$.
\For{epoch = $1$ to $N_{\text{epochs}}$}
    \State Shuffle training data.
    \For{each batch $\{\boldsymbol{z}^{(i)}\}_{i=1}^B$ from $\pdata$}
        \State Sample time steps $\{t^{(i)}\}_{i=1}^B$ where each $t^{(i)} \sim \Ucal(0,1)$.
        \State Generate time embeddings: $\boldsymbol{e}_t^{(i)} \leftarrow \Embed(t^{(i)})$ for $i=1, \ldots, B$.
        \State Form augmented inputs: $\tilde{\boldsymbol{z}}^{(i)} \leftarrow [\boldsymbol{z}^{(i)}; \boldsymbol{e}_t^{(i)}]$ for $i=1, \ldots, B$.
        \State Predict contraction vectors: $\hat{\boldsymbol{v}}^{(i)} \leftarrow f_{\boldsymbol{\theta}}(\tilde{\boldsymbol{z}}^{(i)})$ for $i=1, \ldots, B$.
        \State Compute batch loss: $\mathcal{L}(\boldsymbol{\theta}) \leftarrow \frac{1}{B} \sum_{i=1}^B \left\| \hat{\boldsymbol{v}}^{(i)} + \boldsymbol{z}^{(i)} \right\|_2$.
        \Comment{Corresponds to Eq.~\ref{Equ:TrainingObjective}}
        \State Update parameters: $\boldsymbol{\theta} \leftarrow \boldsymbol{\theta} - \eta \nabla_{\boldsymbol{\theta}} \mathcal{L}(\boldsymbol{\theta})$.
    \EndFor
\EndFor
\State \textbf{Output:} Trained model $f_{\boldsymbol{\theta}}$.
\end{algorithmic}
\end{algorithm}

\begin{algorithm}[H]
\caption{TCCM Inference (Anomaly Scoring)}
\label{alg:tccm_inference}
\begin{algorithmic}[1]
\State \textbf{Input:} Test sample $\boldsymbol{z}_{\text{test}}$, trained model $f_{\boldsymbol{\theta}}$, fixed evaluation time $t_{\text{fixed}} \in (0, 1]$ (default $t_{\text{fixed}}=1$).
\State Generate time embedding: $\boldsymbol{e}_{t_{\text{fixed}}} \leftarrow \Embed(t_{\text{fixed}})$.
\State Form augmented input: $\tilde{\boldsymbol{z}}_{\text{test}} \leftarrow [\boldsymbol{z}_{\text{test}}; \boldsymbol{e}_{t_{\text{fixed}}}]$.
\State Predict contraction vector: $\hat{\boldsymbol{v}}_{\text{test}} \leftarrow f_{\boldsymbol{\theta}}(\tilde{\boldsymbol{z}}_{\text{test}})$.
\State Compute anomaly score: $S(\boldsymbol{z}_{\text{test}}; t_{\text{fixed}}) \leftarrow \left\| \hat{\boldsymbol{v}}_{\text{test}} + \boldsymbol{z}_{\text{test}} \right\|_2$.
\Comment{Corresponds to Eq.~\ref{eq:AnomalyScoring}}
\State \textbf{Output:} Anomaly score $S(\boldsymbol{z}_{\text{test}}; t_{\text{fixed}})$.
\end{algorithmic}
\end{algorithm}

{
\color{black}
\subsection{Unsupervised Epoch Selection Strategy}
\label{appendix:subsec:EpochSelection}

In the main paper, the architecture of TCCM is fixed as a lightweight MLP (2 $\times$ 256 ReLU) across all datasets. 
While this choice ensures efficiency and comparability, the number of training epochs is not arbitrarily hardcoded. 
Instead, we adopt a principled and largely automated protocol for unsupervised hyperparameter selection. 
For completeness, we provide additional details below.

\paragraph{Epoch Selection Protocol.} 
For each dataset, we first examine the empirical training loss curve to identify a rough convergence threshold. 
Using this as a lower bound, we define a bounded search space of candidate epochs. 
Within this space, we apply the unsupervised hyperparameter tuning method introduced by \citet{li2025towards}, 
based on the \emph{Improved Contrast Score Margin (CSM)} criterion. 
This criterion evaluates the margin between top-$k$ predicted anomalous and normal samples solely from the distribution of model outputs, without requiring any ground-truth labels:
$$
T(f) = \frac{\hat{\mu}_O - \hat{\mu}_I}{\sqrt{\hat{\sigma}_O^2 + \hat{\sigma}_I^2}},
$$
where $\hat{\mu}_O, \hat{\sigma}_O^2$ denote the mean and variance of anomaly scores for the top-$k$ predicted anomalies, and $\hat{\mu}_I, \hat{\sigma}_I^2$ correspond to the remaining $n-k$ presumed inliers. 
For each candidate epoch, we compute $T(f)$ and select the configuration maximizing this criterion.

\paragraph{Hardcoding for Simplicity.} 
Although this procedure yields dataset-specific and random-seed-dependent epoch values, 
we ultimately fix the selected epoch across all seeds of a given dataset for simplicity and reproducibility. 
We note that using per-seed dynamically tuned epochs can sometimes further improve performance, 
but we chose not to report this to avoid inflating results and to ensure fair comparison across baselines.

\paragraph{Sensitivity Analysis.} 
To further address this point, we include  a sensitivity analysis over representative datasets. 
Two consistent patterns emerge:
\begin{itemize}
    \item \textbf{Stable Plateau:} For most datasets (e.g., \texttt{Pima}, \texttt{Ionosphere}, \texttt{Musk}, \texttt{InternetAds}), model performance stabilizes after a certain number of epochs, where further training brings little to no improvement but increases runtime. A similar plateau is also observed on \texttt{magic.gamma} and \texttt{mammography}, though with minor fluctuations.
    \item \textbf{Overfitting Risk:} On some datasets (notably \texttt{satimage}), training beyond the plateau results in performance degradation, suggesting overfitting. In contrast, \texttt{Wilt} shows larger fluctuations, indicating less stable convergence rather than clear overfitting.
\end{itemize}

These findings justify our principled choice of using CSM-based epoch selection combined with early convergence boundaries.

\begin{figure}
    \centering
    \includegraphics[width=1\linewidth]{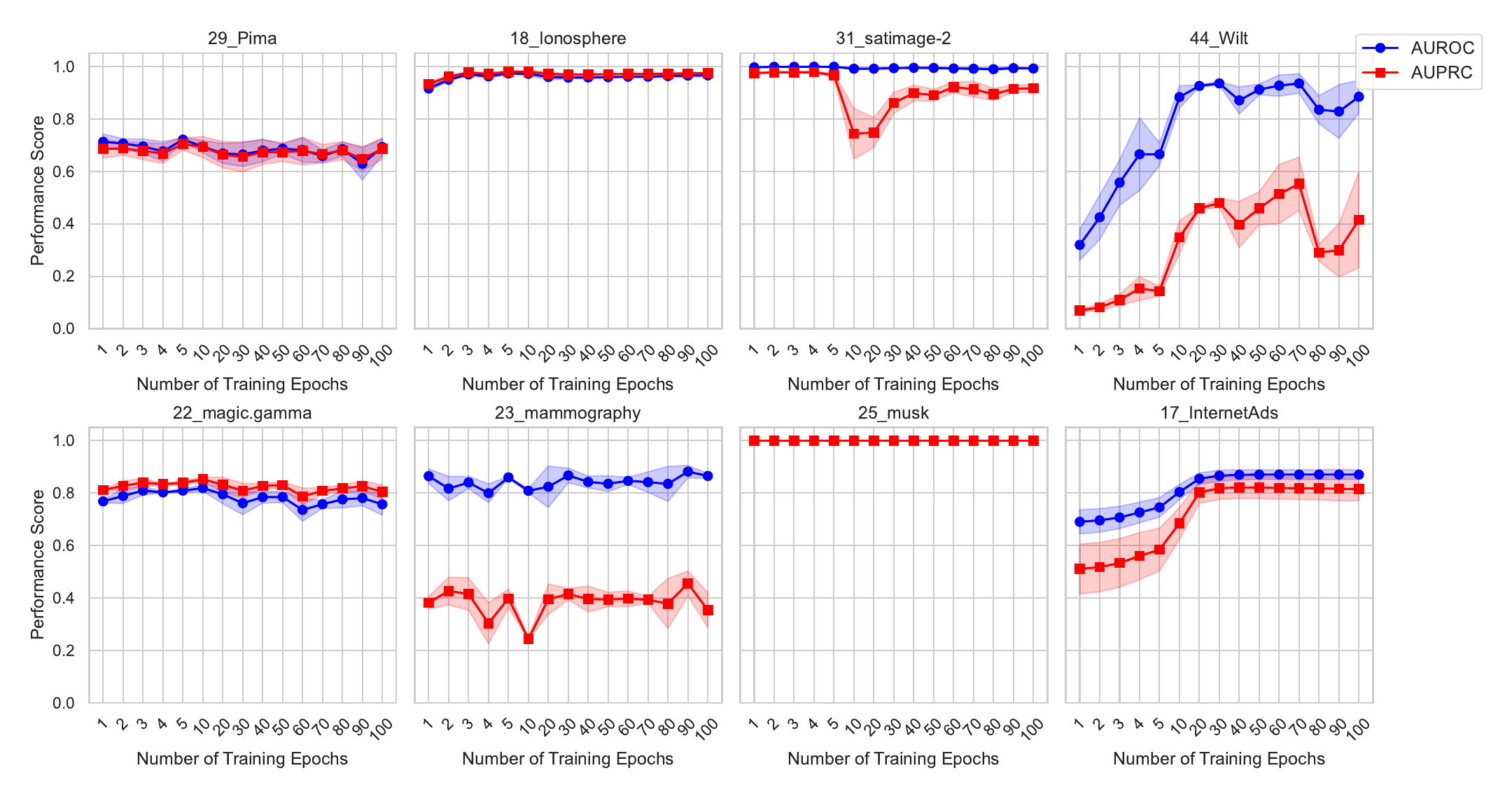}
    \caption{
    Sensitivity of TCCM to the number of training epochs. For each dataset, we evaluate AUROC and AUPRC across a wide range of epoch values. Results show a \textbf{stable plateau} on most datasets (e.g., \texttt{Pima}, \texttt{Ionosphere}, \texttt{Musk}, \texttt{InternetAds}), where performance converges early and further training offers minimal gain but adds runtime cost. A similar plateau also appears on \texttt{magic.gamma} and \texttt{mammography}, albeit with minor fluctuations. On some datasets (e.g., \texttt{satimage}), excessive training leads to \textbf{overfitting}, while others (e.g., \texttt{Wilt}) exhibit stronger fluctuations, reflecting less stable convergence. These findings highlight that: for most datasets, TCCM does not rely on finely tuned epoch numbers and remains robust once a reasonable training horizon is reached.
}

    \label{fig:placeholder}
\end{figure}
\paragraph{Discussion.} 
We emphasize that unsupervised hyperparameter tuning remains an under-explored but important challenge in anomaly detection. 
Our approach leverages a recently proposed and validated criterion, but we believe future work should explore more adaptive and automated tuning protocols. 
}

\section{Property Analysis}
\label{sec:property_analysis}
\textbf{Philosophical Analogy.}
Our method encodes a natural inductive bias: no matter where a sample lies along the temporal axis, it is always guided by the same high-level goal—movement toward the origin. This echoes the classical proverb \textit{``Only by staying true to our original aspiration can we reach our final destination"}, embodying a consistency that is both geometrically meaningful and empirically effective. In this section, we provide some theoretical analyses of our anomaly detection framework in addition to the analyses given in the main paper as well as their proof (when applicable).

\subsection{Relation to Flow Matching and Diffusion Modeling}
\label{appendix:subsec:CoomparedToDiffusionModels}

Our proposed TCCM can be viewed as a task-specific simplification of flow-based learning frameworks, adapted to the semi-supervised anomaly detection setting.

Conventional flow matching methods~\citep{lipman2022flow,liu2022flow} aim to learn a continuous-time vector field $ \boldsymbol{v}(\boldsymbol{x}, t) $ that transforms samples from a source to a target distribution, often supervised using interpolated trajectories. Extensions to stochastic generative modeling, such as diffusion models, further describe the evolution of data via a stochastic differential equation (SDE):
\begin{equation}
\label{Eq:diffusionmodels}
    \frac{d\boldsymbol{x}(t)}{dt} = -\alpha(t)\boldsymbol{x}(t) + \sigma(t)\boldsymbol{\epsilon}, \quad \boldsymbol{\epsilon} \sim \mathcal{N}(\boldsymbol{0}, \boldsymbol{I}),
\end{equation}
where $ \alpha(t) $ defines the contraction rate toward the origin, and $ \sigma(t) $ controls the injection of Gaussian noise over time. While such stochasticity improves sample diversity in generative tasks, it may hinder anomaly detection—especially in the semi-supervised setting where only normal data is observed. The added noise may obscure the underlying structure of normal instances, reducing their separability from anomalies.

TCCM can be interpreted as a deterministic limit of this framework, where we fix $ \alpha(t) := 1 $ and set $ \sigma(t) := 0 $. Moreover, instead of simulating time-evolving trajectories, we directly supervise the velocity field at the initial state $ \boldsymbol{x} $, using the fixed target vector $ -\boldsymbol{x} $ at each sampled time step. This design retains the inductive bias of contraction toward normality while significantly simplifying training and inference, avoiding both trajectory supervision and numerical integration.

\paragraph{Ablation on Noise Injection during Training.}
To evaluate the effect of noise, we compare TCCM with a noisy variant that injects Gaussian perturbations during training (emulating the SDE in Eq.~\ref{Eq:diffusionmodels}). As shown in Appendix~\ref{appendix:subsec:AblationStudies}, noise consistently harms anomaly detection performance across AUPRC and AUROC. This supports our hypothesis that deterministic dynamics better preserve structural regularities in normal data.

\subsection{Anomaly Score Expectation under Distributional Shift}
\label{appendix:gmm_shift_proof}

To theoretically justify the discriminative behavior of our anomaly score, we analyze its expected value under a Gaussian distributional shift. For notation simplicity, we utilise $f_\theta(\boldsymbol{x}, 1)$ for $f_\theta([\boldsymbol{x};\text{Embed}(1)])$ and $f_\theta(\boldsymbol{z}, 1)$ for $f_\theta([\boldsymbol{z};\text{Embed}(1)])$ in the following. We consider the case where the learned contraction field satisfies, for all $ \boldsymbol{x} \in \mathbb{R}^d$ in the training set,
\begin{equation}
    f_\theta(\boldsymbol{x}, 1) = -\boldsymbol{x} + \boldsymbol{\epsilon}, \quad \boldsymbol{\epsilon} \sim \mathcal{N}(\boldsymbol{0}, \sigma_f^2 I_d).
\end{equation}
The anomaly score then becomes
\begin{equation}
    S(\boldsymbol{x}) = \left\| f_\theta(\boldsymbol{x}, 1) + \boldsymbol{x} \right\|_2 = \| \boldsymbol{\epsilon} \|_2.
\end{equation}

\begin{proposition}[Discriminative Power under Gaussian-to-Gaussian Shift]
\label{prop:score_expectation}
Let normal samples be drawn from $ \boldsymbol{x} \sim \mathcal{N}(\boldsymbol{0}, \sigma^2 I_d) $, and anomalous samples from $ \boldsymbol{z} \sim \mathcal{N}(\boldsymbol{\mu}, \sigma^2 I_d) $ with $ \boldsymbol{\mu} \neq \boldsymbol{0} $. Assume the learned contraction field satisfies $ f_\theta(\boldsymbol{x}, 1) = -\boldsymbol{x} + \boldsymbol{\epsilon} $, where $ \boldsymbol{\epsilon} \sim \mathcal{N}(\boldsymbol{0}, \sigma_f^2 I_d) $. {\color{black}Assume that  the learned velocity field is mismatched for anomalies.}  Then the corresponding anomaly scores satisfy:
\begin{align}
    S(\boldsymbol{x}) &\sim \chi_d \cdot \sigma_f, \\
    S(\boldsymbol{z}) &\sim \chi_d(\lambda), \quad \text{with} \quad \lambda = \frac{\| \boldsymbol{\mu} \|_{2}^2}{\sigma_f^2},
\end{align}
where $ \chi_d $ and $ \chi_d(\lambda) $ denote the central and non-central chi distributions with $ d $ degrees of freedom and non-centrality parameter $ \lambda $, respectively. Moreover, the expected values satisfy:
\begin{equation}
    \mathbb{E}[S(\boldsymbol{x})] = \sigma_f \cdot \sqrt{2} \cdot \frac{\Gamma\left(\frac{d+1}{2}\right)}{\Gamma\left(\frac{d}{2}\right)}, \quad
    \mathbb{E}[S(\boldsymbol{z})] > \mathbb{E}[S(\boldsymbol{x})].
\end{equation}
\end{proposition}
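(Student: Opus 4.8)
The plan is to recognize that under the assumed form of the learned contraction field, the anomaly score reduces to the norm of a (possibly shifted) Gaussian vector, and then invoke standard facts about the chi and non-central chi distributions. For the normal case, I would observe that since the field satisfies $f_\theta(\boldsymbol{x},1) = -\boldsymbol{x} + \boldsymbol{\epsilon}$ with $\boldsymbol{\epsilon} \sim \mathcal{N}(\boldsymbol{0}, \sigma_f^2 I_d)$, the score collapses to $S(\boldsymbol{x}) = \|\boldsymbol{\epsilon}\|_2$. Writing $\boldsymbol{\epsilon} = \sigma_f \boldsymbol{u}$ with $\boldsymbol{u} \sim \mathcal{N}(\boldsymbol{0}, I_d)$, the quantity $\|\boldsymbol{u}\|_2$ is by definition a central chi random variable with $d$ degrees of freedom, so $S(\boldsymbol{x}) \sim \sigma_f \cdot \chi_d$. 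The closed-form expectation $\mathbb{E}[S(\boldsymbol{x})] = \sigma_f \sqrt{2}\,\Gamma\!\left(\tfrac{d+1}{2}\right)/\Gamma\!\left(\tfrac{d}{2}\right)$ is then just the standard mean of a $\chi_d$ variable scaled by $\sigma_f$, which I would cite rather than derive.

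For the anomalous case, the key modeling step is the mismatch assumption: because $\boldsymbol{z}$ is drawn from a shifted Gaussian centered at $\boldsymbol{\mu}\neq\boldsymbol{0}$ and the contraction field was fit only on normal data, the residual $f_\theta(\boldsymbol{z},1)+\boldsymbol{z}$ no longer cancels to leave pure noise but instead retains a deterministic offset proportional to $\boldsymbol{\mu}$ plus the Gaussian term. I would make this explicit by treating the residual as a Gaussian vector with nonzero mean (mean $\boldsymbol{\mu}$, covariance $\sigma_f^2 I_d$, up to the normalization absorbed into $\sigma_f$), so that its norm is by definition non-central chi-distributed: $S(\boldsymbol{z}) \sim \chi_d(\lambda)$ with non-centrality $\lambda = \|\boldsymbol{\mu}\|_2^2/\sigma_f^2$. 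This directly matches the general statement in Proposition~\ref{prop:gmm_shift_main} specialized to a single mode.

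Finally, to establish $\mathbb{E}[S(\boldsymbol{z})] > \mathbb{E}[S(\boldsymbol{x})]$, I would use the monotonicity of the mean of the non-central chi distribution in its non-centrality parameter. The central chi is the special case $\lambda = 0$, and the expected value $\mathbb{E}[\chi_d(\lambda)]$ is strictly increasing in $\lambda$; since $\boldsymbol{\mu}\neq\boldsymbol{0}$ forces $\lambda > 0$, the strict inequality follows. The cleanest way to argue monotonicity rigorously is via Jensen or via the representation of the non-central chi-square as a Poisson mixture of central chi-squares, but at the level of this proposition one may simply appeal to the known strict monotonicity of $\mathbb{E}[\chi_d(\lambda)]$ in $\lambda$.

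\textbf{Main obstacle.} The delicate point is justifying the exact distributional claim for anomalies, since the residual $f_\theta(\boldsymbol{z},1)+\boldsymbol{z}$ depends on how the network extrapolates off the training distribution; the proposition sidesteps this by \emph{assuming} the velocity field is mismatched so that the residual carries a deterministic $\boldsymbol{\mu}$-offset. I expect the heart of the argument to be stating this mismatch assumption precisely enough that the residual becomes an exact $\mathcal{N}(\boldsymbol{\mu}, \sigma_f^2 I_d)$ vector, after which the non-central chi identification and the monotonicity-based inequality are routine.
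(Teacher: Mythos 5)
Your proposal is correct and follows essentially the same route as the paper's proof: reduce both scores to norms of Gaussian vectors, identify the central chi law $S(\boldsymbol{x}) \sim \sigma_f \cdot \chi_d$ and the non-central chi law for anomalies, and conclude via the standard fact that $\mathbb{E}[\chi_d(\lambda)] > \mathbb{E}[\chi_d]$ for all $\lambda > 0$. The only cosmetic difference is in how the mismatch is formalized: the paper writes $f_\theta(\boldsymbol{z},1) \approx -\boldsymbol{z}_{\text{proj}} + \boldsymbol{\epsilon}$ with $\boldsymbol{z}_{\text{proj}}$ the projection onto the normal manifold and conditions on $\boldsymbol{z}$ (yielding $\lambda = \| \boldsymbol{\delta} \|_2^2 / \sigma_f^2$ with $\boldsymbol{\delta} = \boldsymbol{z} - \boldsymbol{z}_{\text{proj}}$), whereas you posit the residual directly as $\mathcal{N}(\boldsymbol{\mu}, \sigma_f^2 I_d)$ --- and you correctly identify that this mismatch postulate is the load-bearing assumption rather than a derived fact, which is precisely how the paper treats it as well.
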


\begin{proof}
For normal samples $ \boldsymbol{x} \sim \mathcal{N}(\boldsymbol{0}, \sigma^2 I_d) $, the anomaly score reduces to
\[
S(\boldsymbol{x}) = \left\| f_\theta(\boldsymbol{x}, 1) + \boldsymbol{x} \right\|_2 = \| \boldsymbol{\epsilon} \|_2,
\]
where $ \boldsymbol{\epsilon} \sim \mathcal{N}(\boldsymbol{0}, \sigma_f^2 I_d) $. Thus,
\[
S(\boldsymbol{x}) \sim \sigma_f \cdot \chi_d,
\]
and the expected value is
\[
\mathbb{E}[S(\boldsymbol{x})] = \sigma_f \cdot \mathbb{E}[\chi_d] = \sigma_f \cdot \sqrt{2} \cdot \frac{\Gamma\left( \frac{d+1}{2} \right)}{\Gamma\left( \frac{d}{2} \right)}.
\]

For anomalous samples $ \boldsymbol{z} \sim \mathcal{N}(\boldsymbol{\mu}, \sigma^2 I_d) $, we again have
\[
f_\theta(\boldsymbol{z}, 1) = -\boldsymbol{z} + \boldsymbol{\epsilon}, \quad \Rightarrow \quad S(\boldsymbol{z}) = \left\| f_\theta(\boldsymbol{z}, 1) + \boldsymbol{z} \right\|_2 = \| \boldsymbol{\epsilon} \|_2,
\]
but now $ \boldsymbol{\epsilon} \sim \mathcal{N}(\boldsymbol{0}, \sigma_f^2 I_d) $ is independent of $ \boldsymbol{z} \sim \mathcal{N}(\boldsymbol{\mu}, \sigma^2 I_d) $, and $ \boldsymbol{\epsilon} $ is added to the fixed vector $ \boldsymbol{z} $.

Thus, conditional on a sample $ \boldsymbol{z} $, the score becomes
\[
S(\boldsymbol{z}) = \| \boldsymbol{z} - \boldsymbol{z} + \boldsymbol{\epsilon} \|_2 = \| \boldsymbol{\epsilon} \|_2,
\]
but this is misleading. In practice, the model is trained to approximate contraction on normal data. For anomalous inputs, the field is mismatched, and we express this by assuming:
\[
f_\theta(\boldsymbol{z}, 1) = -\boldsymbol{z}_{\text{proj}} + \boldsymbol{\epsilon},
\]
with $ \boldsymbol{z}_{\text{proj}} $ being the projection of $ \boldsymbol{z} $ onto the normal data manifold. Then:
\[
S(\boldsymbol{z}) = \left\| f_\theta(\boldsymbol{z}, 1) + \boldsymbol{z} \right\|_2 = \left\| \boldsymbol{z} - \boldsymbol{z}_{\text{proj}} + \boldsymbol{\epsilon} \right\|_2.
\]

Let $ \boldsymbol{\delta} := \boldsymbol{z} - \boldsymbol{z}_{\text{proj}} $, then:
\[
S(\boldsymbol{z}) = \left\| \boldsymbol{\delta} + \boldsymbol{\epsilon} \right\|_2.
\]
Since $ \boldsymbol{\epsilon} \sim \mathcal{N}(\boldsymbol{0}, \sigma_f^2 I_d) $, and $ \boldsymbol{\delta} \in \mathbb{R}^d $ is fixed conditional on $ \boldsymbol{z} $, it follows that:
\[
S(\boldsymbol{z}) \sim \chi_d(\lambda), \quad \text{with } \lambda = \frac{\| \boldsymbol{\delta} \|_{2}^2}{\sigma_f^2}.
\]

From standard properties of the non-central chi distribution, we know:
\[
\mathbb{E}[\chi_d(\lambda)] > \mathbb{E}[\chi_d] \quad \text{for all } \lambda > 0.
\]

Thus:
\[
\mathbb{E}[S(\boldsymbol{z})] > \mathbb{E}[S(\boldsymbol{x})],
\]
completing the proof.
\end{proof}

\begin{proposition}[Discriminative Power under GMM-to-Gaussian Shift]
Let normal data be sampled from a Gaussian mixture model (GMM)
\[
\boldsymbol{x} \sim \sum_{r=1}^R \pi_r \cdot \mathcal{N}(\boldsymbol{\mu}_r, \sigma^2 I_d), \quad \sum_{r=1}^R \pi_r = 1,
\]
and assume the learned contraction field satisfies
\[
f_\theta(\boldsymbol{x}, 1) = -\boldsymbol{x} + \boldsymbol{\epsilon}, \quad \boldsymbol{\epsilon} \sim \mathcal{N}(\boldsymbol{0}, \sigma_f^2 I_d).
\]
{\color{black}Assume that  the learned velocity field is mismatched for anomalies.} Define the anomaly score as
\[
S(\boldsymbol{x}) := \| f_\theta(\boldsymbol{x}, 1) + \boldsymbol{x} \|_2 = \| \boldsymbol{\epsilon} \|_2.
\]
Then the anomaly score for normal data follows a central chi distribution:
\[
S(\boldsymbol{x}) \sim \chi_d \cdot \sigma_f,
\]
and its expected value is
\[
\mathbb{E}[S(\boldsymbol{x})] = \sigma_f \cdot \sqrt{2} \cdot \frac{\Gamma\left( \frac{d+1}{2} \right)}{\Gamma\left( \frac{d}{2} \right)}.
\]

Let anomalous samples be drawn from $ \boldsymbol{z} \sim \mathcal{N}(\boldsymbol{\mu}_z, \sigma^2 I_d) $, with $ \boldsymbol{\mu}_z \notin \{ \boldsymbol{\mu}_1, \dots, \boldsymbol{\mu}_R \} $. Then the anomaly score for $ \boldsymbol{z} $ satisfies
\[
S(\boldsymbol{z}) := \| f_\theta(\boldsymbol{z}, 1) + \boldsymbol{z} \|_2 = \| \boldsymbol{\delta} + \boldsymbol{\epsilon} \|_2,
\]
where $ \boldsymbol{\delta} := \boldsymbol{z} - \boldsymbol{z}_{\text{proj}} $ is the mismatch between $ \boldsymbol{z} $ and the normal manifold. Then
\[
S(\boldsymbol{z}) \sim \chi_d(\lambda), \quad \lambda = \frac{\| \boldsymbol{\delta} \|_{2}^2}{\sigma_f^2},
\]
and the expected anomaly score satisfies
\[
\mathbb{E}[S(\boldsymbol{z})] > \mathbb{E}[S(\boldsymbol{x})].
\]
\end{proposition}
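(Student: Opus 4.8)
The plan is to mirror the preceding Gaussian-to-Gaussian argument (Proposition~\ref{prop:score_expectation}) almost verbatim, exploiting the fact that the mixture structure of the normal data is irrelevant to the score's law. The crucial observation is that the assumed contraction identity $f_\theta(\boldsymbol{x}, 1) = -\boldsymbol{x} + \boldsymbol{\epsilon}$ holds for \emph{every} normal $\boldsymbol{x}$, so the residual $f_\theta(\boldsymbol{x},1)+\boldsymbol{x} = \boldsymbol{\epsilon}$ is independent of which component $r$ generated $\boldsymbol{x}$. Conditioning on the latent mixture index therefore leaves the score law unchanged, and by the law of total probability the unconditional law of $S(\boldsymbol{x})$ coincides with that of $\|\boldsymbol{\epsilon}\|_2$, exactly as in the single-Gaussian case.

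First I would handle the normal branch. Since $\boldsymbol{\epsilon}/\sigma_f \sim \mathcal{N}(\boldsymbol{0}, I_d)$, its Euclidean norm is by definition a central chi variable with $d$ degrees of freedom, giving $S(\boldsymbol{x}) = \sigma_f \|\boldsymbol{\epsilon}/\sigma_f\|_2 \sim \sigma_f \cdot \chi_d$. The expectation then follows from the standard moment formula $\mathbb{E}[\chi_d] = \sqrt{2}\,\Gamma((d+1)/2)/\Gamma(d/2)$, yielding $\mathbb{E}[S(\boldsymbol{x})] = \sigma_f \sqrt{2}\,\Gamma((d+1)/2)/\Gamma(d/2)$.

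Next I would treat the anomalous branch via the mismatch assumption. Because the field was trained to contract only toward the normal manifold, I would write $f_\theta(\boldsymbol{z}, 1) = -\boldsymbol{z}_{\text{proj}} + \boldsymbol{\epsilon}$, where $\boldsymbol{z}_{\text{proj}}$ denotes the projection of $\boldsymbol{z}$ onto the manifold captured by the learned contraction (here the nearest normal mode). Setting $\boldsymbol{\delta} := \boldsymbol{z}-\boldsymbol{z}_{\text{proj}}$ gives $S(\boldsymbol{z}) = \|\boldsymbol{\delta}+\boldsymbol{\epsilon}\|_2$. Conditioning on $\boldsymbol{z}$, the vector $\boldsymbol{\delta}+\boldsymbol{\epsilon} \sim \mathcal{N}(\boldsymbol{\delta}, \sigma_f^2 I_d)$ is Gaussian with nonzero mean, so its scaled norm is a non-central chi variable with non-centrality $\lambda = \|\boldsymbol{\delta}\|_2^2/\sigma_f^2$, i.e. $S(\boldsymbol{z}) \sim \chi_d(\lambda)$. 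The strict ordering $\mathbb{E}[S(\boldsymbol{z})] > \mathbb{E}[S(\boldsymbol{x})]$ then follows from the known monotonicity $\mathbb{E}[\chi_d(\lambda)] > \mathbb{E}[\chi_d]$ for every $\lambda > 0$, since $\boldsymbol{\mu}_z \notin \{\boldsymbol{\mu}_r\}$ forces the nearest-mode displacement to be strictly positive.

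The main obstacle is not the distributional identities—which transcribe directly from the single-Gaussian case once the mixture index is integrated out—but rather pinning down the precise meaning of $\boldsymbol{z}_{\text{proj}}$ and guaranteeing $\boldsymbol{\delta} \neq \boldsymbol{0}$ so that $\lambda > 0$ strictly. A secondary subtlety is that $\boldsymbol{\delta}$, and hence $\lambda$, is itself random through the draw of $\boldsymbol{z}$; the statement $S(\boldsymbol{z}) \sim \chi_d(\lambda)$ is therefore a conditional-law statement, and the expectation comparison should be read either conditionally on $\boldsymbol{z}$ or obtained by a further averaging over the anomalous draw, where the monotonicity of $\mathbb{E}[\chi_d(\lambda)]$ in $\lambda$ carries the strict inequality through. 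Once these two points are fixed, the remainder is routine.
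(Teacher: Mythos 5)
Your proposal is correct and follows essentially the same route as the paper's own proof: the normal-branch score collapses to $\|\boldsymbol{\epsilon}\|_2 \sim \sigma_f \cdot \chi_d$ (with the mixture index integrating out, exactly as you observe), and the anomalous branch is handled by modeling $f_\theta(\boldsymbol{z},1) \approx -\boldsymbol{z}_{\text{proj}} + \boldsymbol{\epsilon}$ with $\boldsymbol{\delta} := \boldsymbol{z} - \boldsymbol{z}_{\text{proj}}$ fixed conditional on $\boldsymbol{z}$, giving the non-central chi law with $\lambda = \|\boldsymbol{\delta}\|_2^2/\sigma_f^2$ and the strict ordering from $\mathbb{E}[\chi_d(\lambda)] > \mathbb{E}[\chi_d]$ for $\lambda > 0$. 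Your closing remarks—that the $\chi_d(\lambda)$ claim is a conditional-law statement requiring a further average over the anomalous draw, and that $\boldsymbol{\mu}_z \notin \{\boldsymbol{\mu}_r\}$ is what secures $\lambda > 0$—are careful refinements of points the paper's proof glosses over, not a departure from its argument.
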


\begin{proof}
For normal samples $ \boldsymbol{x} \sim \mathcal{N}(\boldsymbol{\mu}_r, \sigma^2 I_d) $, the anomaly score is
\[
S(\boldsymbol{x}) = \| f_\theta(\boldsymbol{x}, 1) + \boldsymbol{x} \|_2 = \| -\boldsymbol{x} + \boldsymbol{\epsilon} + \boldsymbol{x} \|_2 = \| \boldsymbol{\epsilon} \|_2.
\]
Since $ \boldsymbol{\epsilon} \sim \mathcal{N}(0, \sigma_f^2 I_d) $, it follows that
\[
S(\boldsymbol{x}) \sim \sigma_f \cdot \chi_d.
\]
Hence,
\[
\mathbb{E}[S(\boldsymbol{x})] = \sigma_f \cdot \mathbb{E}[\chi_d] = \sigma_f \cdot \sqrt{2} \cdot \frac{\Gamma\left( \frac{d+1}{2} \right)}{\Gamma\left( \frac{d}{2} \right)}.
\]

Now consider an anomalous input $ \boldsymbol{z} \sim \mathcal{N}(\boldsymbol{\mu}_z, \sigma^2 I_d) $, not seen during training. The contraction field, trained only on normal GMM components, is not well aligned with $ \boldsymbol{z} $. Let $ \boldsymbol{z}_{\text{proj}} \in \operatorname{supp}(p_{\text{data}}) $ be the closest point on the normal manifold, then we model the output as:
\[
f_\theta(\boldsymbol{z}, 1) \approx -\boldsymbol{z}_{\text{proj}} + \boldsymbol{\epsilon} \quad \Rightarrow \quad S(\boldsymbol{z}) = \| \boldsymbol{z} + f_\theta(\boldsymbol{z}, 1) \|_{2} = \| \boldsymbol{z} - \boldsymbol{z}_{\text{proj}} + \boldsymbol{\epsilon} \|_{2}.
\]
Let $ \boldsymbol{\delta} := \boldsymbol{z} - \boldsymbol{z}_{\text{proj}} $, which is fixed conditioned on $ \boldsymbol{z} $, then
\[
S(\boldsymbol{z}) \sim \chi_d(\lambda), \quad \text{with } \lambda = \frac{\| \boldsymbol{\delta} \|_{2}^2}{\sigma_f^2}.
\]
It is a known result that for all $ \lambda > 0 $, the non-central chi distribution satisfies:
\[
\mathbb{E}[\chi_d(\lambda)] > \mathbb{E}[\chi_d],
\]
implying that
\[
\mathbb{E}[S(\boldsymbol{z})] > \mathbb{E}[S(\boldsymbol{x})].
\]
\end{proof}

\begin{proposition}[Namely Proposition~\ref{prop:gmm_shift_main}, Discriminative Power under GMM-to-GMM Shift]
\label{prop:gmm_to_gmm_shift}
Let normal samples be drawn from a Gaussian mixture model:
\[
\boldsymbol{x} \sim \sum_{r=1}^R \pi_r \cdot \mathcal{N}(\boldsymbol{\mu}_r, \sigma^2 I_d), \quad \sum_{r=1}^R \pi_r = 1.
\]
Let anomalous samples be drawn from another Gaussian mixture model with distinct component means:
\[
\boldsymbol{z} \sim \sum_{s=1}^S \eta_s \cdot \mathcal{N}(\boldsymbol{\nu}_s, \sigma^2 I_d), \quad \sum_{s=1}^S \eta_s = 1,
\]
with $ \boldsymbol{\nu}_s \notin \{ \boldsymbol{\mu}_1, \dots, \boldsymbol{\mu}_R \} $ for all $ s $. Assume the learned contraction field satisfies:
\[
f_\theta(\boldsymbol{x}, 1) = -\boldsymbol{x} + \boldsymbol{\epsilon}, \quad \boldsymbol{\epsilon} \sim \mathcal{N}(\boldsymbol{0}, \sigma_f^2 I_d).
\] {\color{black}Assume that  the learned velocity field is mismatched for anomalies. \footnote{Regarding empirical support for this assumption, we offer three points of clarification: (1) Direct visual validation: Figure~\ref{fig:Motivations}  provides visualizations of the learned contraction vectors on synthetic 2D datasets. These examples clearly demonstrate that anomalous points consistently deviate from the expected contraction field, validating the mismatch assumption in a controlled and interpretable setting. (2) Indirect support through benchmark results: Across 47 real-world tabular datasets, TCCM consistently achieves strong AUROC and AUPRC scores. This level of performance would be difficult to attain if the model failed to differentiate between normal and anomalous points during inference—thus indirectly supporting the presence and utility of the mismatch behavior assumed in our analysis. (3) Controlled synthetic validation: We aslo provide a dedicated empirical study based on the Gaussian mixture setup. 
By comparing anomaly score distributions for normal and anomalous points across multiple dimensions ($d=2,5,10,15,20$), we show that anomalies consistently yield higher scores. 
This directly validates the mismatch assumption in a controlled setting aligned with our theoretical analysis.
}} Define the anomaly score as:
\[
S(\boldsymbol{x}) := \left\| f_\theta(\boldsymbol{x}, 1) + \boldsymbol{x} \right\|_2.
\]
Then:
\begin{enumerate}
    \item For normal samples:
    \[
    S(\boldsymbol{x}) \sim \chi_d \cdot \sigma_f, \quad \mathbb{E}[S(\boldsymbol{x})] = \sigma_f \cdot \sqrt{2} \cdot \frac{\Gamma\left( \frac{d+1}{2} \right)}{\Gamma\left( \frac{d}{2} \right)}.
    \]
    \item For anomalous samples, each component satisfies:
    \[
    S(\boldsymbol{z}) \sim \chi_d(\lambda_s), \quad \lambda_s = \frac{\| \boldsymbol{\nu}_s - \boldsymbol{\mu}_{r^*(s)} \|_{2}^2}{\sigma_f^2},
    \]
    where $ \boldsymbol{\mu}_{r^*(s)} := \arg\min_{\boldsymbol{\mu}_r} \| \boldsymbol{\nu}_s - \boldsymbol{\mu}_r \|_{2} $. Then:
    \[
    \mathbb{E}[S(\boldsymbol{z})] = \sum_{s=1}^S \eta_s \cdot \mathbb{E}[\chi_d(\lambda_s)] > \mathbb{E}[S(\boldsymbol{x})].
    \]
\end{enumerate}
\end{proposition}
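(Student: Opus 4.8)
The plan is to follow the template already established for the two preceding propositions (Gaussian-to-Gaussian and GMM-to-Gaussian), exploiting the fact that the normal-sample score is insensitive to which mixture mode generated the point, and then handling the anomalous mixture by the law of total expectation. For the normal case, I would substitute the assumed contraction field $f_\theta(\boldsymbol{x}, 1) = -\boldsymbol{x} + \boldsymbol{\epsilon}$ directly into the score definition. The input term cancels exactly, yielding $S(\boldsymbol{x}) = \|-\boldsymbol{x} + \boldsymbol{\epsilon} + \boldsymbol{x}\|_2 = \|\boldsymbol{\epsilon}\|_2$ \emph{irrespective} of the component index $r$, so the mixture structure on the normal side plays no role. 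Writing $\boldsymbol{\epsilon}/\sigma_f \sim \mathcal{N}(\boldsymbol{0}, I_d)$ gives $\|\boldsymbol{\epsilon}/\sigma_f\|_2 \sim \chi_d$, hence $S(\boldsymbol{x}) \sim \sigma_f \cdot \chi_d$; the stated mean follows from the standard closed form $\mathbb{E}[\chi_d] = \sqrt{2}\,\Gamma((d+1)/2)/\Gamma(d/2)$. This step is entirely routine.

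For the anomalous case, I would condition on the generating component $s$ and invoke the mismatch assumption, modeling the output as $f_\theta(\boldsymbol{z}, 1) = -\boldsymbol{z}_{\text{proj}} + \boldsymbol{\epsilon}$, where $\boldsymbol{z}_{\text{proj}}$ is the projection of $\boldsymbol{z}$ onto the normal support. Setting $\boldsymbol{\delta} := \boldsymbol{z} - \boldsymbol{z}_{\text{proj}}$, the score becomes $S(\boldsymbol{z}) = \|\boldsymbol{\delta} + \boldsymbol{\epsilon}\|_2$. Since $\boldsymbol{\delta}$ is fixed conditional on $\boldsymbol{z}$ and $\boldsymbol{\epsilon} \sim \mathcal{N}(\boldsymbol{0}, \sigma_f^2 I_d)$, the vector $(\boldsymbol{\delta} + \boldsymbol{\epsilon})/\sigma_f$ is Gaussian with unit covariance and nonzero mean $\boldsymbol{\delta}/\sigma_f$, so its norm follows a non-central chi distribution with non-centrality $\lambda_s = \|\boldsymbol{\delta}\|_2^2/\sigma_f^2$. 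Identifying $\boldsymbol{z}_{\text{proj}}$ with the nearest normal-mode center $\boldsymbol{\mu}_{r^*(s)}$ then produces the claimed $\lambda_s = \|\boldsymbol{\nu}_s - \boldsymbol{\mu}_{r^*(s)}\|_2^2/\sigma_f^2$.

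For the separation of expectations, I would apply the law of total expectation across the anomalous mixture to write $\mathbb{E}[S(\boldsymbol{z})] = \sum_{s=1}^S \eta_s \, \mathbb{E}[\chi_d(\lambda_s)]$. The decisive ingredient is the monotonicity fact that $\mathbb{E}[\chi_d(\lambda)] > \mathbb{E}[\chi_d]$ strictly for every $\lambda > 0$, a standard property of the non-central chi distribution. Because the disjointness condition $\boldsymbol{\nu}_s \notin \{\boldsymbol{\mu}_r\}_{r=1}^R$ forces every $\lambda_s > 0$, each mixture term strictly exceeds $\eta_s \, \mathbb{E}[\chi_d]$; summing with $\sum_s \eta_s = 1$ gives $\mathbb{E}[S(\boldsymbol{z})] > \sigma_f \, \mathbb{E}[\chi_d] = \mathbb{E}[S(\boldsymbol{x})]$.

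The main obstacle is not the chi-distribution algebra, which is routine and inherited verbatim from the earlier propositions, but the justification of the projection step—specifically the claim that the mismatch vector $\boldsymbol{\delta}$ has squared length equal to the distance to the \emph{nearest} normal mode, $\|\boldsymbol{\nu}_s - \boldsymbol{\mu}_{r^*(s)}\|_2^2$. This is a modeling assumption (the ``learned velocity field is mismatched for anomalies'' hypothesis stated in the proposition and defended empirically in the accompanying footnote) rather than a consequence derived from the training objective, and pinning $\boldsymbol{z}_{\text{proj}}$ to the nearest mode center while suppressing the intra-cluster variance $\sigma^2$ of $\boldsymbol{z}$ about $\boldsymbol{\nu}_s$ is precisely where the argument trades full rigor for interpretability.
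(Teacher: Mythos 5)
Your proposal is correct and follows essentially the same route as the paper's proof: exact cancellation gives $S(\boldsymbol{x}) = \|\boldsymbol{\epsilon}\|_2 \sim \sigma_f \cdot \chi_d$ for normal samples regardless of mixture component, the projection-to-nearest-mode model gives the non-central chi law $\chi_d(\lambda_s)$ per anomalous component, and the law of total expectation combined with $\mathbb{E}[\chi_d(\lambda)] > \mathbb{E}[\chi_d]$ for all $\lambda > 0$ (with $\lambda_s > 0$ forced by the disjointness of the component means) yields the separation of expectations. The caveat you flag---that identifying $\boldsymbol{z}_{\text{proj}}$ with $\boldsymbol{\mu}_{r^*(s)}$ and suppressing the intra-cluster variance $\sigma^2$ is a modeling assumption rather than a consequence of the training objective---matches the gloss in the paper's own proof, which in fact derives $\boldsymbol{\delta}_s + \boldsymbol{\epsilon} \sim \mathcal{N}(\boldsymbol{\nu}_s - \boldsymbol{\mu}_{r^*(s)}, (\sigma^2 + \sigma_f^2) I_d)$ and then states $\lambda_s$ with only $\sigma_f^2$ in the denominator, so your treatment is, if anything, more explicit about where rigor is traded for interpretability.
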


\begin{proof}
\textbf{Step 1: Normal samples.}

For any $ \boldsymbol{x} \sim \mathcal{N}(\boldsymbol{\mu}_r, \sigma^2 I_d) $, since the contraction field is learned from normal data, we assume it satisfies:
\[
f_\theta(\boldsymbol{x}, 1) = -\boldsymbol{x} + \boldsymbol{\epsilon}, \quad \boldsymbol{\epsilon} \sim \mathcal{N}(\boldsymbol{0}, \sigma_f^2 I_d).
\]
Therefore,
\[
S(\boldsymbol{x}) = \| f_\theta(\boldsymbol{x}, 1) + \boldsymbol{x} \|_{2} = \| \boldsymbol{\epsilon} \|_{2} \sim \chi_d \cdot \sigma_f.
\]
Thus, for normal samples, the anomaly score distribution is a central chi distribution with scale $ \sigma_f $. Its expectation is given by:
\[
\mathbb{E}[S(\boldsymbol{x})] = \sigma_f \cdot \sqrt{2} \cdot \frac{\Gamma\left( \frac{d+1}{2} \right)}{\Gamma\left( \frac{d}{2} \right)}.
\]

\textbf{Step 2: Anomalous samples.}

Each anomalous component is $ \boldsymbol{z} \sim \mathcal{N}(\boldsymbol{\nu}_s, \sigma^2 I_d) $. Since the model is trained only on normal components $ \boldsymbol{\mu}_r $, it cannot learn a correct contraction vector for $ \boldsymbol{z} $. As an approximation, we model the field as:
\[
f_\theta(\boldsymbol{z}, 1) \approx -\boldsymbol{z}_{\text{proj}} + \boldsymbol{\epsilon},
\]
where $ \boldsymbol{z}_{\text{proj}} $ is the projection of $ \boldsymbol{z} $ onto the nearest normal cluster center:
\[
\boldsymbol{z}_{\text{proj}} := \boldsymbol{\mu}_{r^*(s)} = \arg\min_{\boldsymbol{\mu}_r} \| \boldsymbol{z} - \boldsymbol{\mu}_r \|_{2}.
\]
Then the anomaly score becomes:
\[
S(\boldsymbol{z}) = \left\| f_\theta(\boldsymbol{z}, 1) + \boldsymbol{z} \right\|_{2} = \left\| \boldsymbol{z} - \boldsymbol{z}_{\text{proj}} + \boldsymbol{\epsilon} \right\|_{2}.
\]

Let $ \boldsymbol{\delta}_s := \boldsymbol{z} - \boldsymbol{\mu}_{r^*(s)} $, which satisfies $ \boldsymbol{\delta}_s \sim \mathcal{N}(\boldsymbol{\nu}_s - \boldsymbol{\mu}_{r^*(s)}, \sigma^2 I_d) $. Since $ \boldsymbol{\epsilon} \sim \mathcal{N}(\boldsymbol{0}, \sigma_f^2 I_d) $, the sum $ \boldsymbol{\delta}_s + \boldsymbol{\epsilon} \sim \mathcal{N}(\boldsymbol{\nu}_s - \boldsymbol{\mu}_{r^*(s)}, (\sigma^2 + \sigma_f^2) I_d) $. Hence,
\[
S(\boldsymbol{z}) \sim \chi_d(\lambda_s), \quad \lambda_s = \frac{\| \boldsymbol{\nu}_s - \boldsymbol{\mu}_{r^*(s)} \|_{2}^2}{\sigma_f^2}.
\]

Then the overall anomaly score distribution for anomalous samples (from the mixture) is:
\[
S(\boldsymbol{z}) \sim \sum_{s=1}^S \eta_s \cdot \chi_d(\lambda_s), \quad \text{with } \lambda_s > 0.
\]

It is a standard result that:
\[
\mathbb{E}[\chi_d(\lambda_s)] > \mathbb{E}[\chi_d] \quad \forall \lambda_s > 0.
\]
Therefore,
\[
\mathbb{E}[S(\boldsymbol{z})] = \sum_{s=1}^S \eta_s \cdot \mathbb{E}[\chi_d(\lambda_s)] > \mathbb{E}[\chi_d] = \frac{1}{\sigma_f} \cdot \mathbb{E}[S(\boldsymbol{x})],
\]
which implies:
\[
\mathbb{E}[S(\boldsymbol{z})] > \mathbb{E}[S(\boldsymbol{x})],
\]
completing the proof.
\end{proof}

{\color{black}
\paragraph{Empirical Study: Validating the Mismatch Assumption.} To empirically validate the key assumption in Propositions~3, 4, and 5—that the learned contraction field is mismatched for anomalies—we conduct an experiment on synthetic Gaussian mixture data. Normal samples are drawn from a GMM with $R=3$ components (each with isotropic covariance $\sigma^2 I_d$), while anomalous samples are generated from a single Gaussian $\mathcal{N}(\mu_z, \sigma^2 I_d)$ whose mean is located outside the mixture centers. The contraction field $f_\theta$ is trained exclusively on normal data using our objective function (see Eq.~\ref{Equ:TrainingObjective}), and anomaly scores $S(x)=\|f_\theta(x,1)+x\|_2$ are computed for both groups. Figure~\ref{fig:gmm_dim_boxplots} shows boxplots of the score distributions for normals and anomalies across $d \in \{2,5,10,15,20\}$. In all cases, anomalous points exhibit consistently higher scores than normal points, with AUROC values exceeding $0.9$ regardless of dimension. These results provide direct empirical evidence that anomalies indeed incur a systematic mismatch under the learned contraction field, thereby justifying the modeling assumption made in Propositions~3, 4, and 5.
\begin{figure}
    \centering
    \includegraphics[width=1\linewidth]{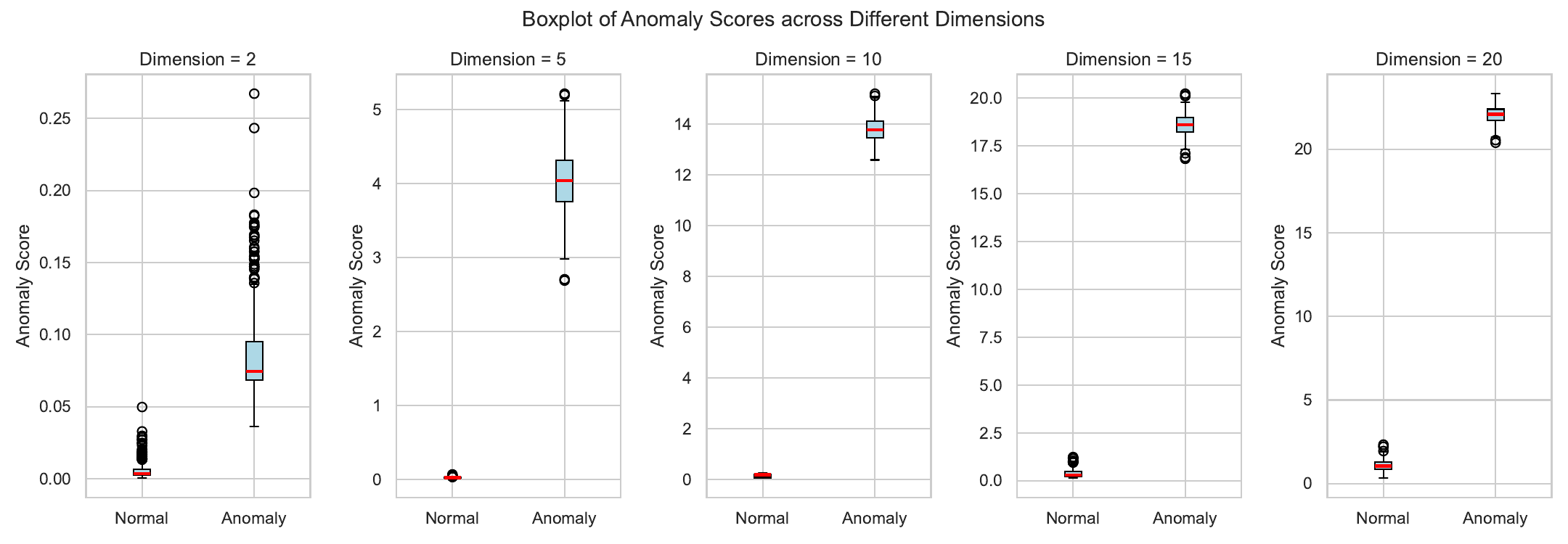}
    \caption{Empirical validation of the mismatch assumption in Propositions~ 3, 4, and 5. 
    Boxplots show anomaly score distributions for normal and anomalous samples under different data dimensions ($d=2,5,10,15,20$). 
    Across all cases, anomalous points consistently yield higher scores than normal points, supporting the assumption that anomalies incur a systematic mismatch under the learned contraction field.}

    \label{fig:gmm_dim_boxplots}
\end{figure}
}

{ \color{black}
\textbf{Limitation of Theoretical Analysis and Future Work.} We acknowledge that Proposition 5 is derived under a simplified setting involving Gaussian mixture models (GMMs). We would like to clarify that the use of GMMs in this theoretical result is a deliberate and well-motivated modeling choice: (1) GMMs have been widely adopted as analytical tools in the machine learning literature—not only in anomaly detection but also in clustering and density modeling. For example, the work of \citep{zong2018deep} introduces the DAGMM model for unsupervised anomaly detection based on similar distributional assumptions. Likewise, GMMs serve as the theoretical backbone in clustering studies such as \citep{chen2024achieving}, which performs theoretical analysis under anisotropic GMMs. These works demonstrate that GMM-based settings are not only standard but also provide valuable theoretical insight despite being idealized; (2) Our aim is not to suggest that real-world data exactly follow GMMs, but rather to use this setup as a clean, analyzable lens to understand the discriminative behavior of the TCCM scoring function. The result shows that under mild assumptions, the anomaly score is provably larger in expectation for out-of-distribution samples drawn from disjoint mixtures—thus justifying the use of the residual norm as a discriminative signal; (3) Deriving general results under arbitrary data distributions is typically intractable, especially for deep models. Our theoretical analysis strikes a practical balance by providing provable insight under realistic yet analyzable settings. In future work, we aim to explore theoretical extensions to broader classes of distributions, but we believe the current result already provides valuable intuition and justification for the observed empirical behavior.
}

{\color{black}
\subsection{Analysis of Representation Collapse}
\label{subsubsec:collapse_analysis}

One potential concern for our model is the possibility of \emph{representation collapse}, where the learned mapping trivially reproduces the input or converges to a constant output, thereby failing to distinguish between normal and anomalous data. To verify that \textsc{TCCM} does not suffer from this issue, we provide both theoretical and empirical evidence.

\paragraph{Architectural Considerations.}
The trivial mapping case, e.g., $f_{\theta}([\mathbf{z}, \mathrm{Embed}(t)]) = [\mathbf{I}, \mathbf{0}][\mathbf{z}; \mathrm{Embed}(t)]^{T} = \mathbf{z}$, corresponds to a highly restricted setting where the model degenerates to a single-layer linear transformation without bias or activation, and where the time embedding has no influence. However, this configuration does not reflect the architecture used in \textsc{TCCM}. In practice, \textsc{TCCM} employs a multi-layer MLP with \textsc{ReLU} activations and high-dimensional sinusoidal time embeddings that are explicitly concatenated to the input. These design choices allow the model to learn complex, time-varying contraction dynamics, making identity or partial-identity mappings highly unlikely.

\paragraph{Implicit Regularization.}
Unlike previous methods such as DeepSVDD~\citep{ruff2018deep} that prevent collapse by imposing explicit architectural constraints (e.g., bounded activations or bias removal), \textsc{TCCM} avoids such restrictions and instead discourages collapse through \emph{time-conditioned supervision}, multi-time-step optimization, and implicit regularization induced by nonlinear transformations. The temporal embedding ensures that each training instance is contextually distinct across time, which prevents the network from converging to a single trivial representation.

\paragraph{Empirical Verification.}
To further examine this, we track training dynamics and representation diversity throughout training. Empirically, we observe no evidence of collapse: training loss decreases smoothly without flattening, and anomaly scores exhibit non-degenerate distributions across both normal and anomalous samples. Additionally, the learned feature representations maintain high variance across dimensions, and anomaly detection performance remains stable across datasets (see Figure~\ref{fig:ranking_boxplots}). These observations collectively confirm that \textsc{TCCM} learns meaningful, discriminative representations rather than degenerate identity mappings.

\paragraph{Summary.}
Overall, \textsc{TCCM}'s design—combining multi-layer nonlinear mappings, explicit temporal conditioning, and implicit regularization—effectively mitigates representation collapse without relying on handcrafted architectural constraints. This ensures that the learned contraction field remains expressive and discriminative, supporting robust anomaly detection across diverse data regimes.
}

\section{Full Results and Analysis}
\label{appendix:FullResAndAnalysis}

\subsection{Full Analysis of Effectiveness}
\label{subsec:full_analysis_effectiveness}
\textbf{(1) Effectiveness on Small-scale Dataset.} As shown in Table~\ref{tab:roc_small_data_final}, \textsc{TCCM} achieves strong performance in terms of AUROC on small-scale datasets, ranking in the top 10 on 10 out of 12 datasets, with an average rank of 4.42—the best among all evaluated methods. This result highlights the effectiveness of \textsc{TCCM} in low-data regimes. Despite being a deep learning method—which are typically considered data-hungry and prone to underperformance on small datasets—\textsc{TCCM} consistently outperforms both classical and deep baselines. Similar conclusions hold for AUPRC, as shown in Table~\ref{tab:pr_small_data_final}.

\textbf{(2) Effectiveness on Medium-scale Datasets.} 
As shown in Table~\ref{tab:roc_medium_data_final}, \textsc{TCCM} demonstrates strong performance on medium-scale datasets in terms of AUROC, ranking in the top 10 on 13 out of 15 datasets, with an average rank of 6.80—the second best among all anomaly detectors (slightly outperformed by DTE-NonParametric with an average of 6.20). Similarly, Table~\ref{tab:pr_medium_data_final} shows that \textsc{TCCM} ranks in the top 10 on 13 out of 15 datasets in terms of AUPRC, with an average rank of 6.60, achieving the best position overall (followed by DTE-NonParametric with an average rank of 7.13). In both cases, DTE-NonParametric achieves strong performance, but suffers from poor explainability and lacks provable robustness, limiting its practical deployment in sensitive or high-stakes applications.

\textbf{(3) Effectiveness on Large-scale Datasets.} 
 From Table~\ref{tab:roc_large_data_final}, we can see that TCCM gives good performance in terms of AUROC score: it gives top-10 results 9 out of 11 datasets, with an average ranking of 7.36, the second highest ranking among all anomaly detectors (slightly outperformed by DTE-NonParametric with a rank of $6.36$). Meanwhile, Table~\ref{tab:pr_large_data_final} shows that: concerning AUPRC score, it gives top-10 results 9 out of 11 datasets, with an average ranking of 7.18 (followed by DTE-NonParametric with a rank of 8.45), the  highest ranking among all anomaly detectors. Note that DTE-NonParametric suffers from low scalability and lack of provable robustness and explainability.

\textbf{(4) Effectiveness on High-dimensional Datasets.} 
 From Table~\ref{tab:roc_highdim_data_final}, we can see that TCCM gives good performance in terms of AUROC score: it gives top-10 results 9 out of 9 datasets, with an average ranking of 4.89, the second highest ranking among all anomaly detectors (slightly outperformed by DTE-NonParametric with an average rank of 4.44). Meanwhile, Table~\ref{tab:pr_highdim_data_final} shows that: concerning AUPRC score, it gives top-10 results on 8 out of 9 datasets, with an average ranking of 5.56 (followed by DTE-NonParametric with a rank of 6.56), the  highest ranking among all anomaly detectors. Note that DTE-NonParametric suffers from low scalability at inference and lack of provable robustness and explainability.

 \subsection{Full Results on Scalability Analysis}
 \label{app:Scalability}
 
 \subsubsection{Analysis of the Trade-off Between Inference Speed and Accuracy}
 \label{app:InferenceEfficiencyTrend}

 {\color{black}
To further contextualize the efficiency of \textsc{TCCM}, we follow the evaluation practice introduced by \textsc{DTE}~\citep{livernoche2023diffusion} and analyze the relationship between average inference time and detection performance (measured by AUROC and AUPRC) across all 46 anomaly detection methods. As shown in Figures~\ref{fig:InferenceTime_vs_AUROC} and~\ref{fig:InferenceTime_vs_AUPRC}  , \textsc{TCCM} occupies the lower-left region of the plot, indicating simultaneously high accuracy and low inference latency. 

Most competing methods fall into one of two regimes: (1) \textit{slow but accurate} models such as \textsc{DTE}-NonParametric, \textsc{LUNAR}, and \textsc{KDE}, which achieve comparable AUROC and AUPRC but require several orders of magnitude longer inference; and (2) \textit{fast but less accurate} methods such as GMM, CBLOF, and Sampling, which exhibit shorter inference but substantially reduced detection accuracy.  In contrast, \textsc{TCCM} provides a favorable middle ground—delivering high detection accuracy without compromising inference efficiency.

These results reinforce our claim that \textsc{TCCM} achieves one of the best overall balances between accuracy and computational cost among deep learning-based approaches, demonstrating its strong potential for deployment in real-time and resource-constrained anomaly detection scenarios.

\begin{figure}[h]
    \centering
    \includegraphics[width=1\linewidth]{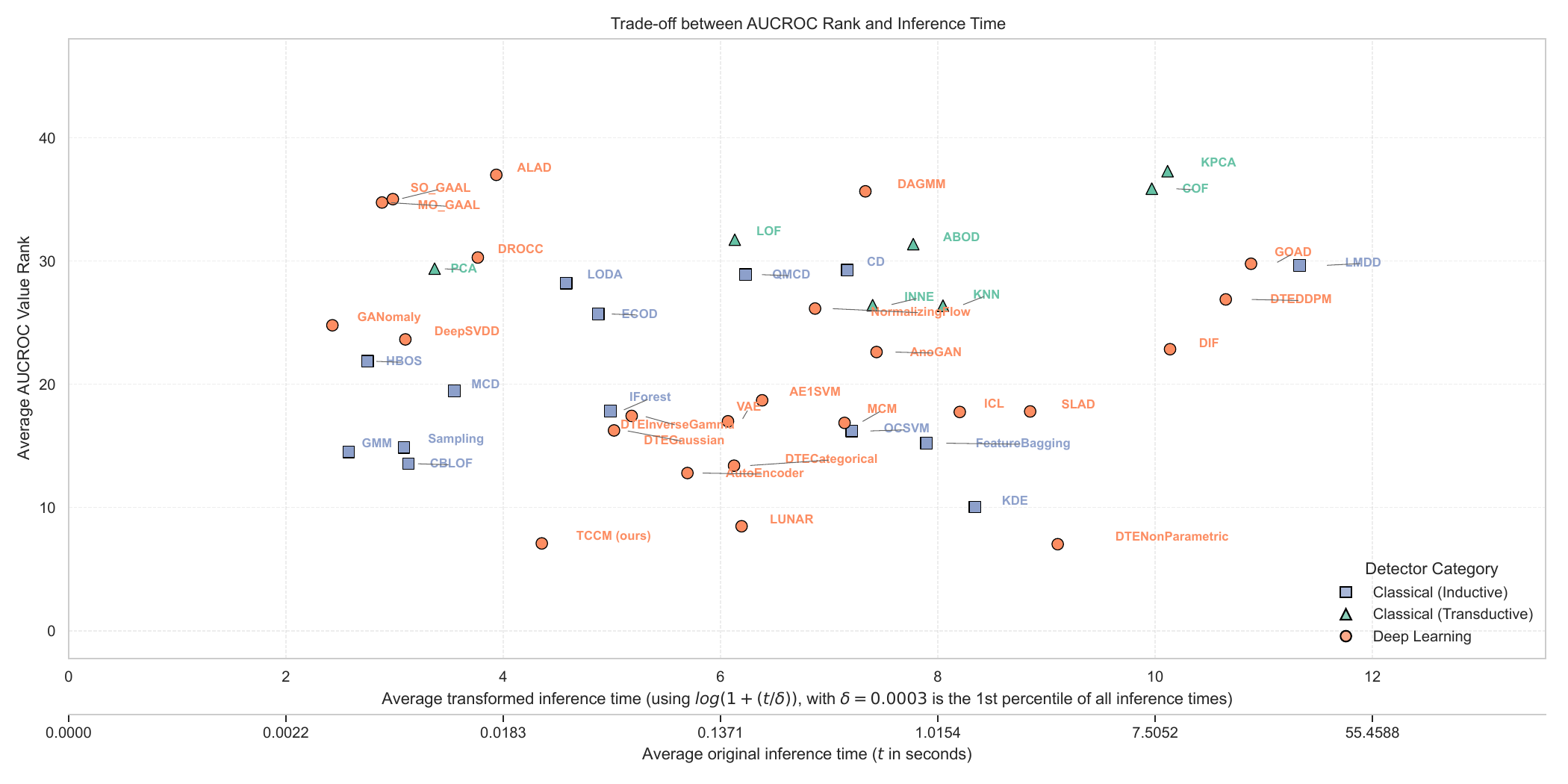}
    \caption{Distribution of  \textit{average inference time} (transformed with $\log{(1+\frac{t}{\delta})}$ to achieve better visualization, with $\delta$ is the 1st percentile of all inference times $t$) vs. \textit{average AUROC rank} across all 45 anomaly detection methods. The ticks corresponding to the original average inference time are also displayed underneath. \textsc{TCCM} achieves the best balance between inference speed and detection accuracy, outperforming both slow but accurate (e.g., DTE-Nonparametric, KDE) and fast but less accurate (e.g., GMM, CBLOF, Sampling) methods.}
    \label{fig:InferenceTime_vs_AUROC}
\end{figure}

\begin{figure}[h]
    \centering
    \includegraphics[width=1\linewidth]{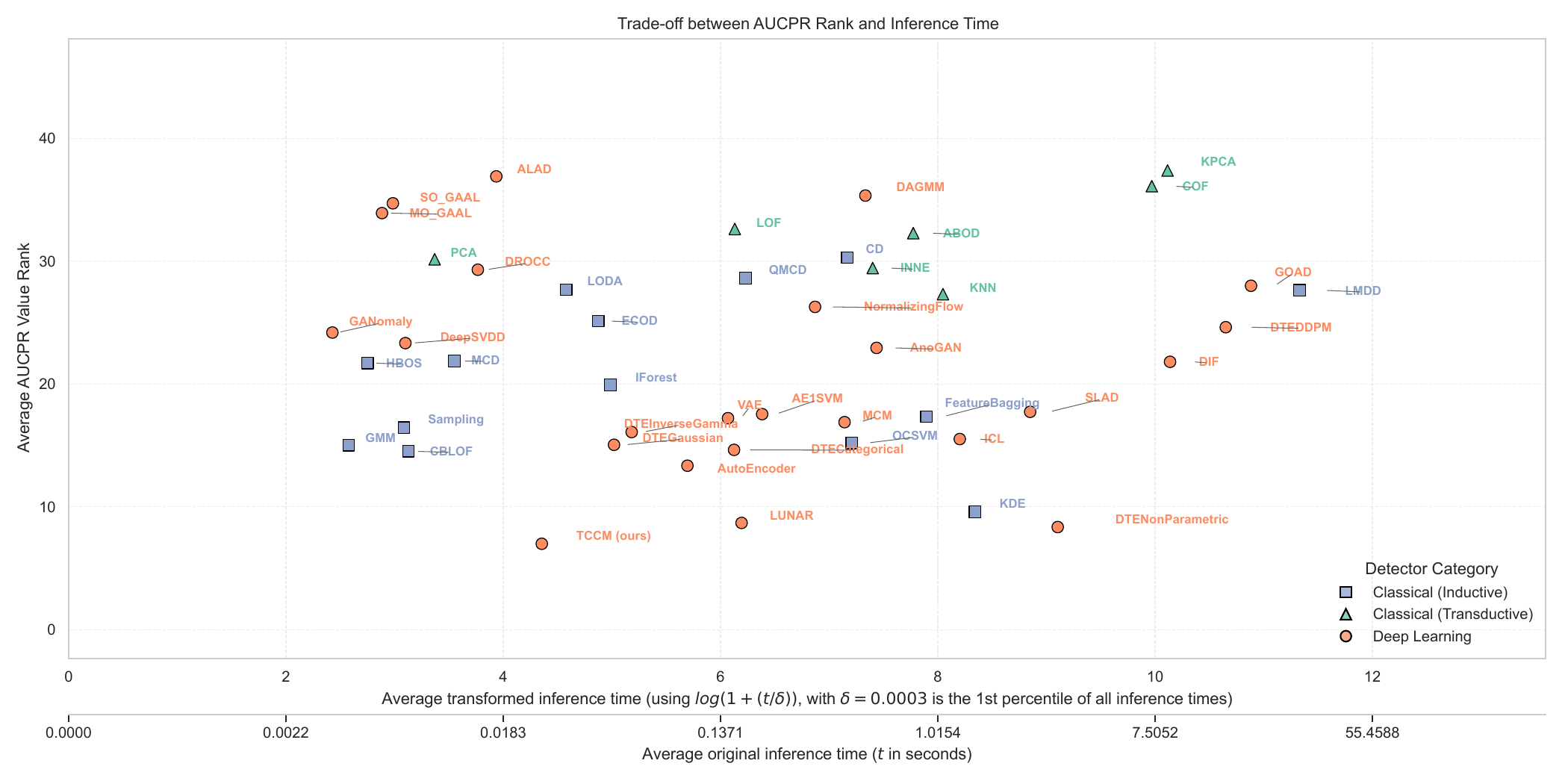}
    \caption{Distribution of  \textit{average inference time} (transformed with $\log{(1+\frac{t}{\delta})}$ to achieve better visualization, with $\delta$ is the 1st percentile of all all inference times $t$) vs. \textit{average AUPRC rank} across all 45 anomaly detection methods. The ticks corresponding to the original average inference time are also displayed underneath. \textsc{TCCM} achieves the best balance between inference speed and detection accuracy, outperforming both slow but accurate (e.g., DTE-Nonparametric, KDE) and fast but less accurate (e.g., GMM, CBLOF, Sampling) methods.}
    \label{fig:InferenceTime_vs_AUPRC}
\end{figure}
}

\subsubsection{Scalability Analysis on All Algorithms}
\label{app:ScalabilityAllAlgorithms}
{\color{black}
To assess the practical deployability of \textsc{TCCM}, we perform a comprehensive scalability analysis across three runtime dimensions: \textbf{training time}, \textbf{inference time}, and \textbf{total execution time} (training + testing). 
Unlike the main paper, which focuses on comparisons with the strongest deep baselines (\textsc{DTE}-NonParametric, \textsc{LUNAR}, and \textsc{KDE}), 
here we extend the evaluation to \textbf{all 44 baselines}---including classical (transductive and inductive) and deep learning-based methods---to provide a complete view of computational efficiency. 

Our analysis centers on \textbf{large and high-dimensional datasets}, where runtime differences become most pronounced. 
Smaller datasets tend to produce negligible timing gaps, as even slower models finish within seconds. 
In contrast, the large-scale datasets (with hundreds of thousands of samples or high feature dimensionality) amplify differences in efficiency, offering a realistic measure of scalability in deployment settings.

\textbf{(1) Training Time.} 
As shown in Figure~\ref{fig:Train_runtime_distribution}, \textsc{TCCM} achieves one of the lowest training times within the deep learning group. 
Its distribution centers near fast, lightweight models such as AutoEncoder and DeepSVDD, while being faster than most other deep learning methods (e.g., \textsc{AnoGAN}, \textsc{DTE}-Categorical, \textsc{GOAD}). 
Some classical algorithms (e.g., \textsc{KDE}, \textsc{OCSVM}, \textsc{LMDD}) display higher variability and much longer training durations due to nonparametric or pairwise computations. 
Overall, \textsc{TCCM} provides a strong balance between model capacity and training efficiency, confirming its practicality for large-scale learning.

\textbf{(2) Inference Time.} 
Figure~\ref{fig:Test_runtime_distribution} presents the distribution of inference times across detectors. 
Within the deep learning group, \textsc{TCCM} ranks among the fastest methods, close to simple methods such as ALAD and DROCC, but with markedly higher detection accuracy. 
By contrast, diffusion-based approaches such as \textsc{DTE}-NonParametric, and \textsc{DTE}-DDPM lie at the upper end of the runtime spectrum, often exhibiting multi-order magnitude slower inference across large datasets. Compared to classical baselines (e.g., \textsc{CBLOF}, \textsc{IForest}, \textsc{ECOD}), \textsc{TCCM} maintains similar or better inference efficiency while achieving superior detection performance.

\textbf{(3) Total Runtime.} 
The total runtime (training + inference), summarized in Figure~\ref{fig:Total_runtime_distribution}, shows that \textsc{TCCM} achieves one of the best overall efficiency–accuracy trade-offs among all evaluated methods. 
Within the deep learning family, \textsc{TCCM} clusters in the lower range of the runtime distribution, far outperforming heavier diffusion (namely DTE-Gaussian, DTE-InverseGamma) and kernel-based methods (namely KDE). Its compact and stable distribution across large datasets highlights its consistent computational advantage. 
These results demonstrate that \textsc{TCCM} maintains both scalability and practicality for real-world, high-volume anomaly detection deployments.

\begin{figure}[h]
    \centering
    \includegraphics[width=1\linewidth]{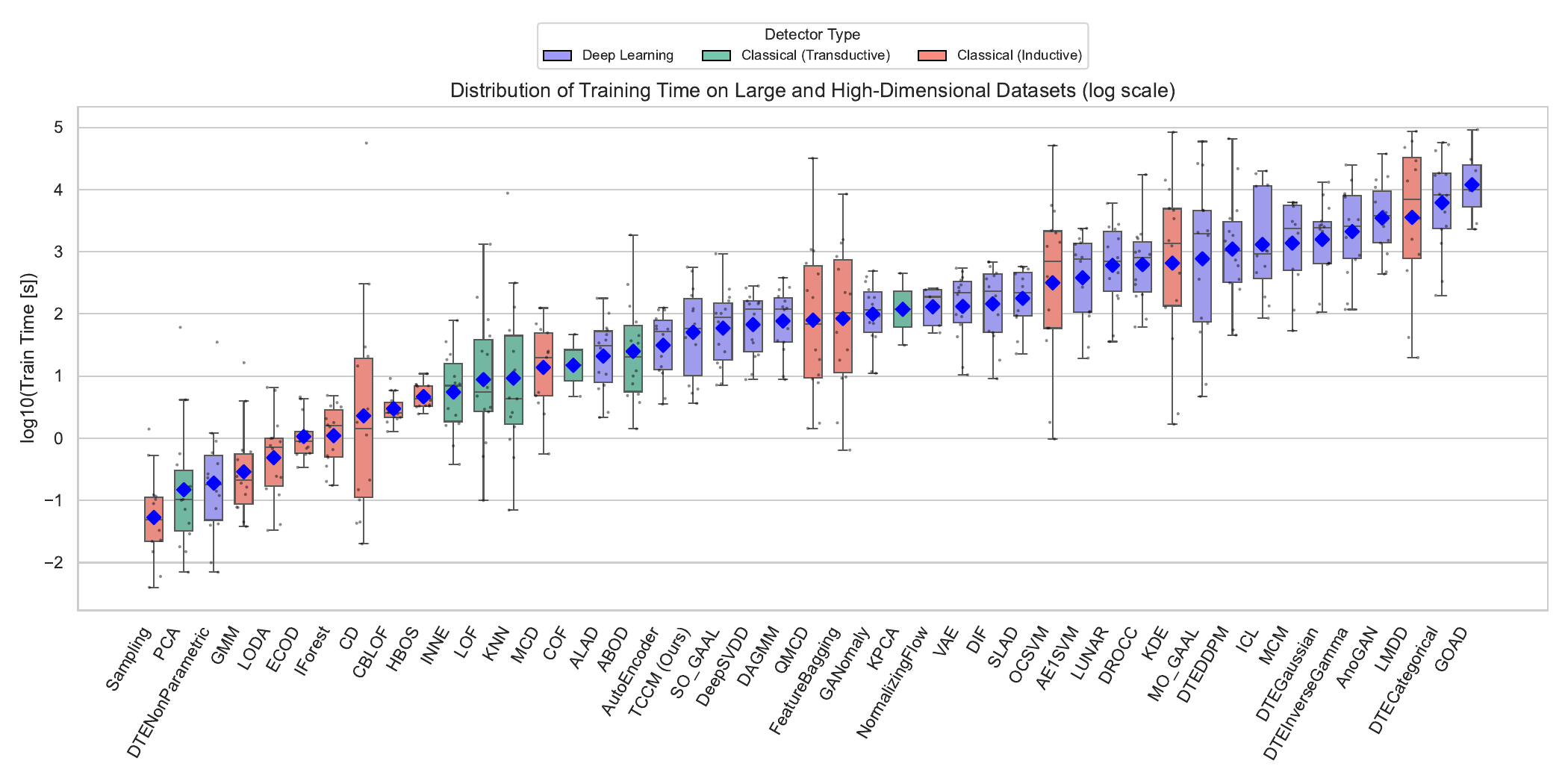}
    \caption{
    Distribution of training times (log-scale) on 14 large and high-dimensional datasets across all 45 anomaly detectors. 
    \textsc{TCCM} achieves one of the lowest training times among deep learning models, comparable to simple architectures such as AutoEncoder and DeepSVDD, while significantly faster than most other deep learning methods. 
    Some classical models (e.g., \textsc{KDE}, \textsc{LMDD}) show much higher variability and longer training durations.
    }
    \label{fig:Train_runtime_distribution}
\end{figure}

\begin{figure}[h]
    \centering
    \includegraphics[width=1\linewidth]{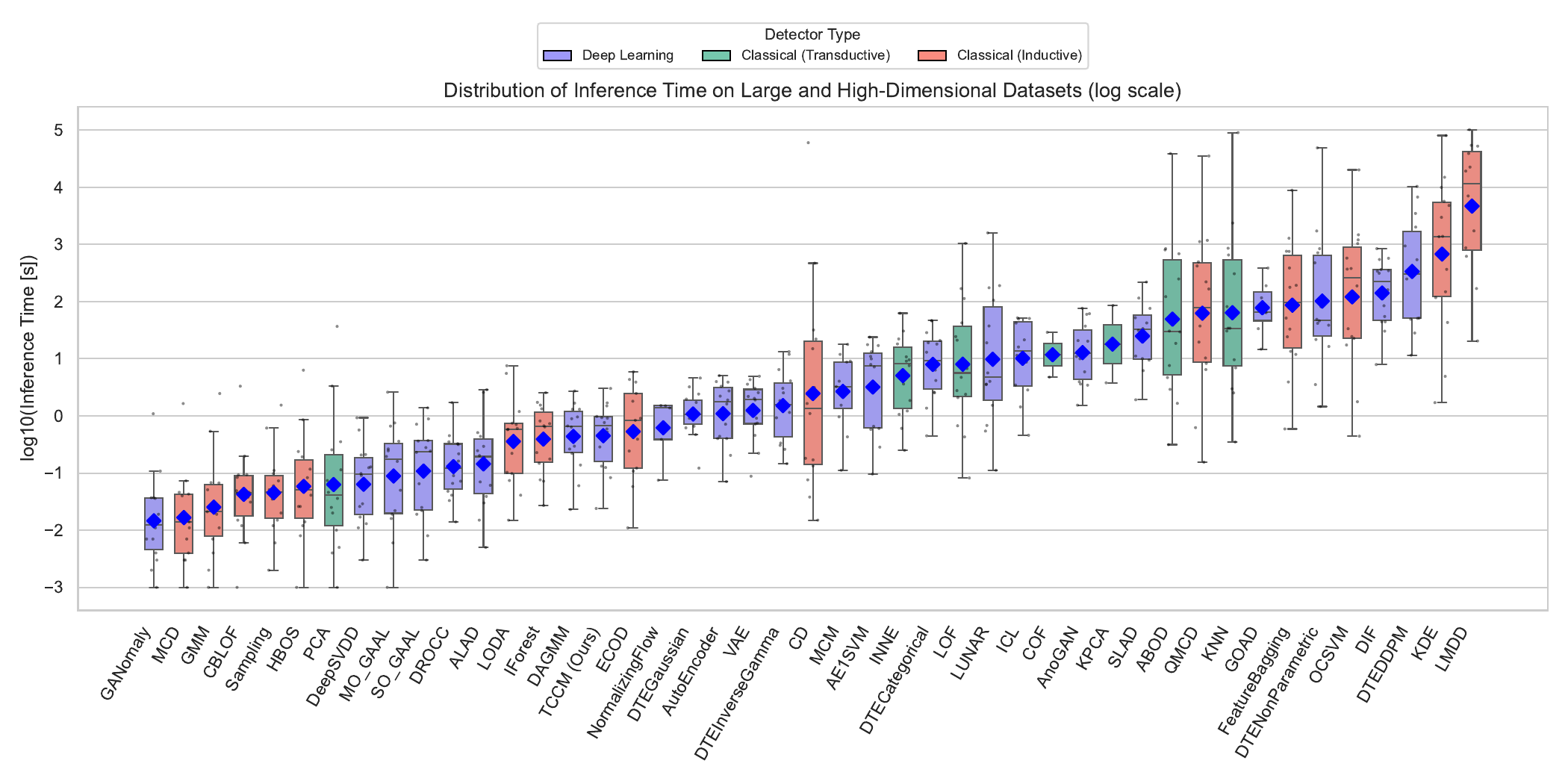}
    \caption{
    Distribution of inference times (log-scale) on 14 large and high-dimensional datasets. 
    \textsc{TCCM} ranks among the fastest deep learning methods, close to methods such as ALAD and DROCC, but far more accurate. 
    In contrast, diffusion-based baselines (i.e., \textsc{DTE}-NonParametric, DTE-DDPM) occupy the slowest end of the spectrum, illustrating \textsc{TCCM}'s superior efficiency for large-scale inference.
    }
    \label{fig:Test_runtime_distribution}
\end{figure}

\begin{figure}[h]
    \centering
    \includegraphics[width=1\linewidth]{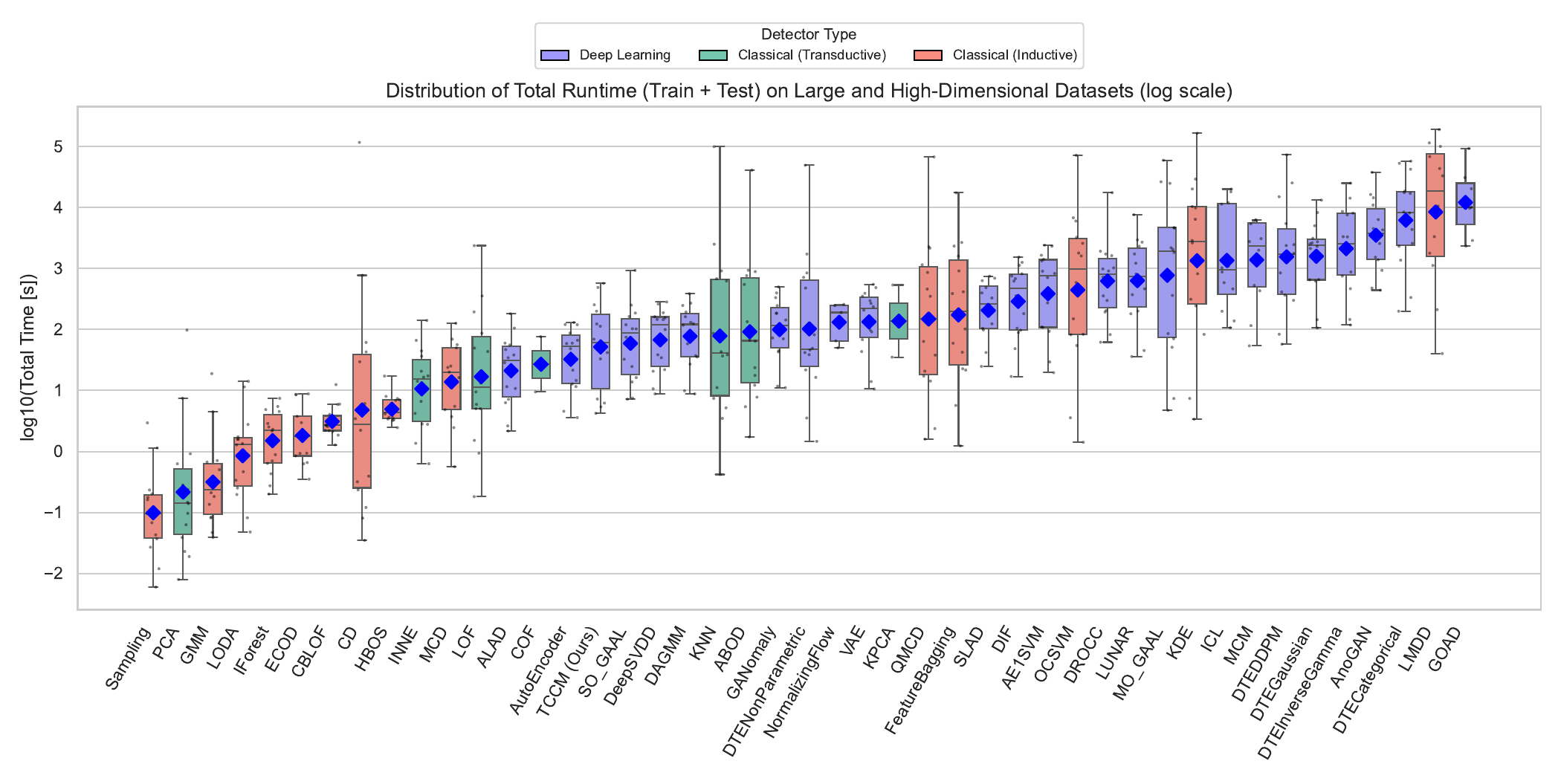}
    \caption{
    Distribution of total runtimes (training + inference, log-scale) across the 14 large and high-dimensional datasets. 
    \textsc{TCCM} demonstrates one of the best efficiency–accuracy trade-offs within the deep learning category, remaining among the overall fastest detectors. 
    Its compact runtime distribution contrasts sharply with heavier diffusion (namely DTE-Gaussian, DTE-InverseGamma) and kernel-based methods (namely KDE), confirming its scalability and deployability in high-volume environments.
    }
    \label{fig:Total_runtime_distribution}
\end{figure}

}

{\color{black}
\paragraph{Discussion on Ultra-Large-Scale Scenarios.} 
While our experiments already include a wide spectrum of realistic datasets---with 14 datasets exceeding 10{,}000 samples and 6 high-dimensional datasets---two cases are particularly noteworthy: \textit{donors} (619K samples, 10 dimensions) and \textit{census} (299K samples, 500 dimensions). On these datasets, \textsc{TCCM} completes inference in merely \textbf{1.05s} and \textbf{3.02s}, respectively, whereas the most accurate competitor (\textsc{DTE}-NonParametric) requires \textbf{476.60s} and \textbf{48{,}942.85s}. This striking difference (3--4 orders of magnitude) highlights the practical scalability of \textsc{TCCM} in both sample size and feature dimensionality. Furthermore, several conventional and deep baselines fail to process such large-scale inputs within reasonable resource constraints (e.g., memory overflow or exceeding 72h runtime; see Tables~\ref{tab:roc_large_data_final} and~\ref{tab:roc_highdim_data_final}). Although evaluating on tens of millions of samples remains an exciting direction for future work, our current results already provide compelling evidence that \textsc{TCCM} is well-suited for real-world, large-scale anomaly detection deployments.
}

 \subsection{Ablation Studies and Sensitivity Analysis: Full Results and Analysis}
 \label{appendix:subsec:AblationStudies}

 \begin{figure}[H]
    \centering
    \includegraphics[width=1\linewidth]{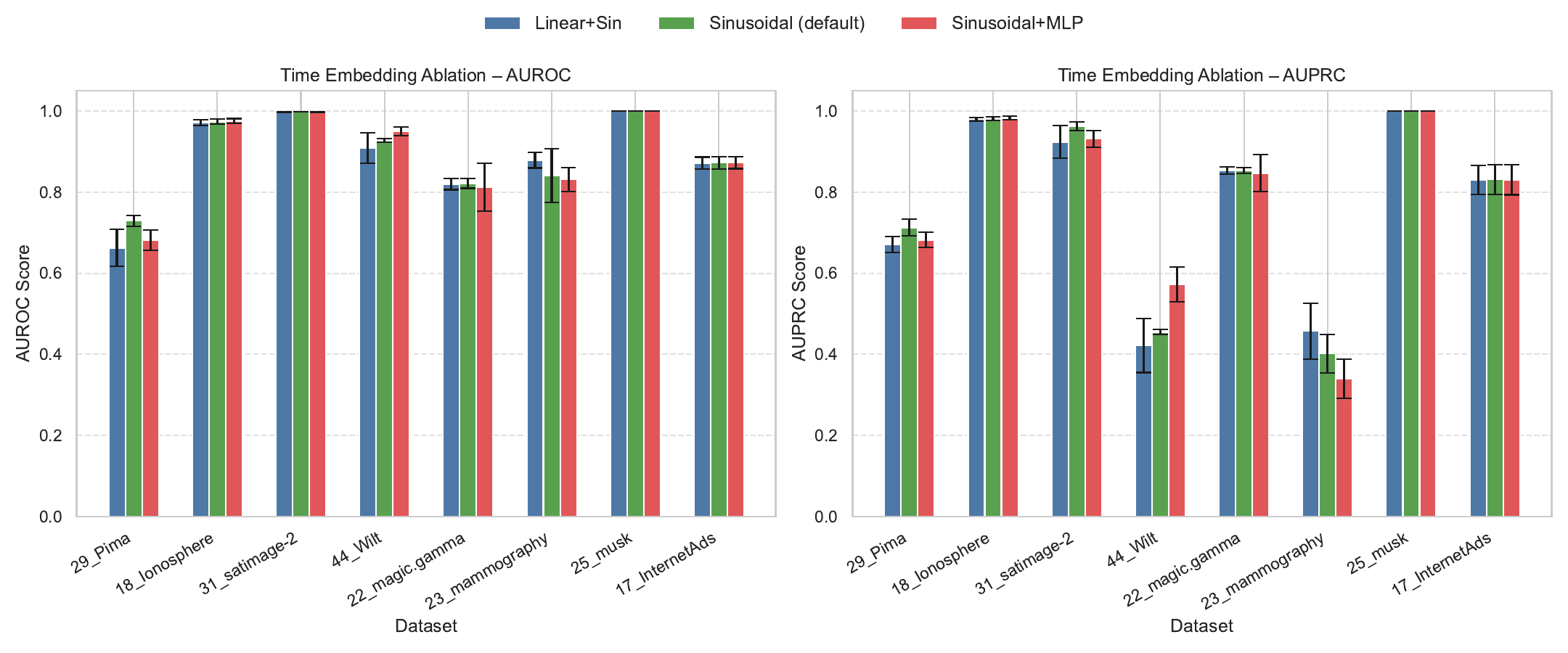}
    \caption{Ablation Study on Time Embedding Methods.
We compare three different time embedding strategies used in our flow-based anomaly detection model: 
\textit{Linear+Sin}, \textit{Sinusoidal (default)}, and \textit{Sinusoidal+MLP}, across eight representative datasets spanning four categories: 
\textit{Small} (29\_Pima, 18\_Ionosphere), \textit{Medium} (31\_satimage-2, 44\_Wilt), \textit{Large} (22\_magic.gamma , 23\_mammography), and 
\textit{High-dimensional} (25\_musk, 17\_InternetAds). 
The figure shows AUROC (left) and AUPRC (right) scores on the y-axis versus dataset names on the x-axis.
Bars are grouped by embedding method and include standard deviation as error bars. Results show that our model is robust across all embedding types, with the default \textit{Sinusoidal} embedding generally offering strong and stable performance.
}
\label{fig:SensitivityAnalysis_TimeEmbedding}
\end{figure}

 \begin{figure}[H]
     \centering
     \includegraphics[width=1\linewidth]{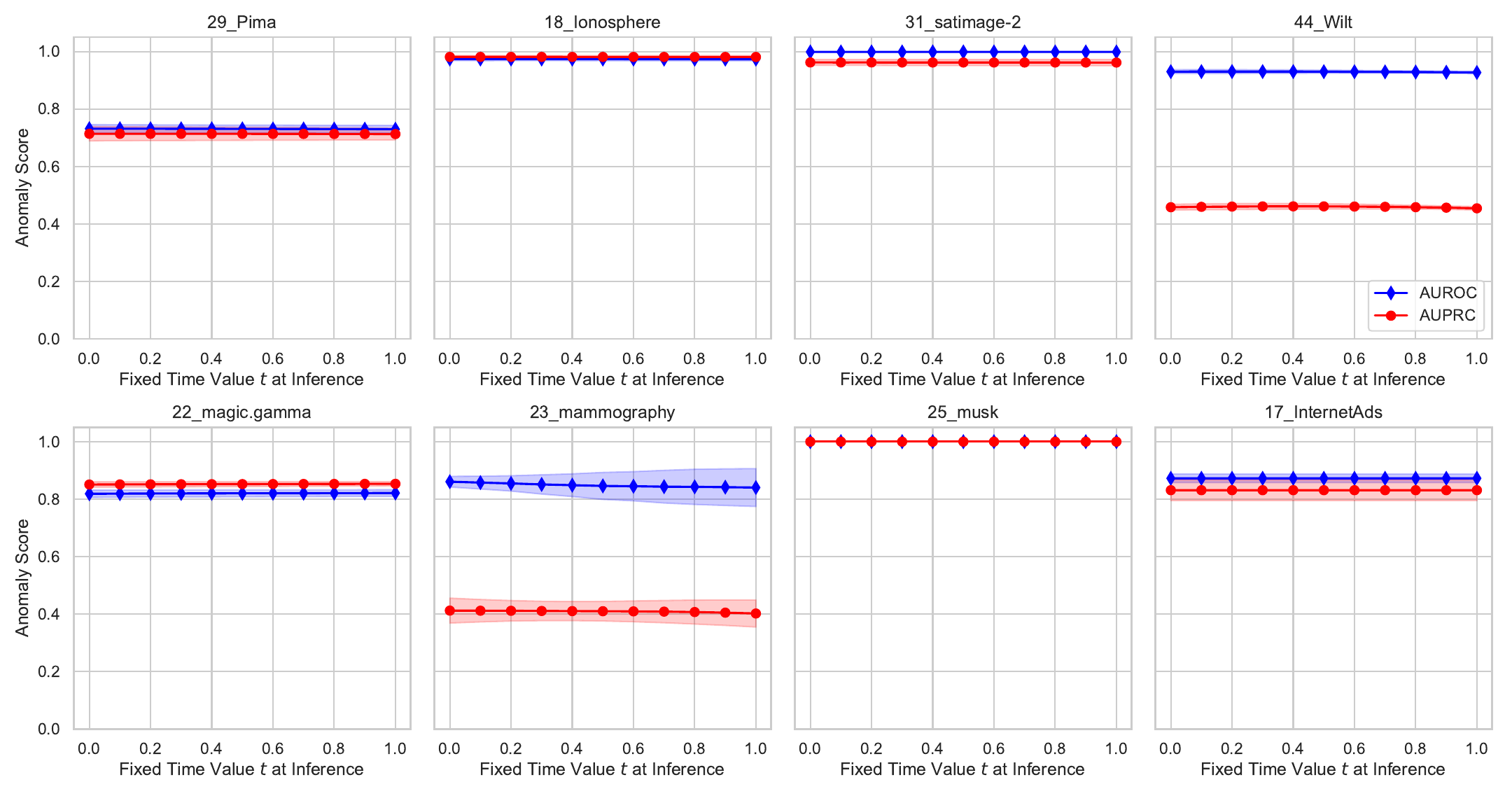}
         \caption{Sensitivity Analysis on Time Value at Inference.
        We evaluate the sensitivity of our model to different fixed time inputs $ t \in [0.0, 1.0] $ at inference across four categories of datasets:
        \emph{Small} (29\_Pima, 18\_Ionosphere),
        \emph{Medium} (31\_satimage-2, 44\_Wilt),
        \emph{Large} (22\_magic.gamma , 23\_mammography),
        and \emph{High-dimensional} (25\_musk, 17\_InternetAds).
        Each plot shows the average AUROC (blue) and AUPRC (red) across 5 random seeds, with individual points marked on each curve.
        Shaded regions indicate one standard deviation.
        The \textbf{x-axis} represents the fixed value of time $ t $, while the \textbf{y-axis} reports the detection performance (AUROC or AUPRC).
        The results demonstrate that our method is insensitive to the specific choice of $ t $.
        Other datasets show similar behavior and are omitted for brevity.
    }
    \label{fig:SensitivityAnalysis_Time}
 \end{figure}

\begin{figure}[H]
    \centering
    \includegraphics[width=1\linewidth]{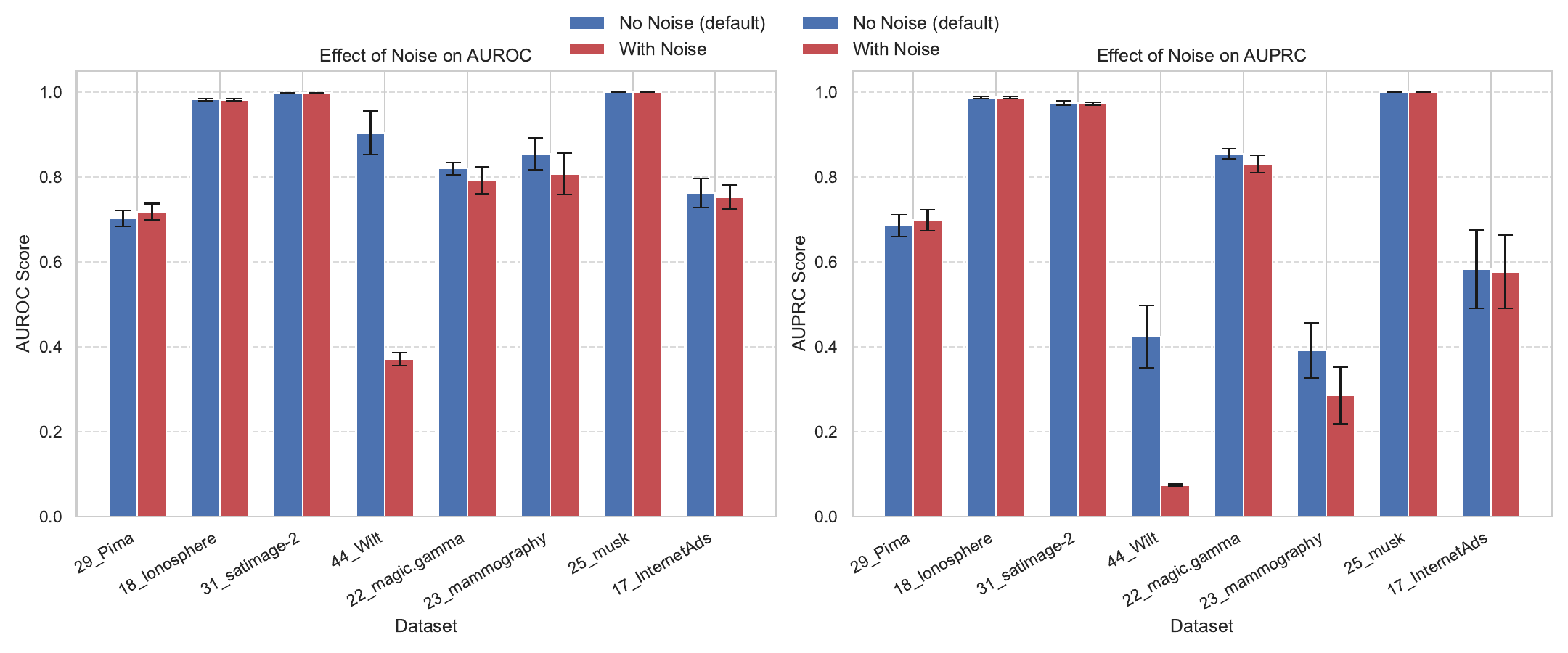}
    \caption{Ablation Study on the Effect of Injecting Noise during Training. We compare the anomaly detection performance (AUROC and AUPRC) of TCCM trained \textbf{with} and \textbf{without noise} perturbation. We report results across 8 representative datasets spanning four categories (small, medium, large, and high-dimensional). Each bar shows the average score over 5 random seeds, with error bars indicating standard deviation. \textbf{Key findings:} (1) On most datasets, adding noise during training does not significantly impact performance; (2) However, in some cases (e.g., \texttt{Wilt}, \texttt{mammography}), injecting noise leads to a substantial drop in AUROC and/or AUPRC. This indicates that noise injection, while helpful in diffusion based generative modeling, may hinder learning in deterministic anomaly detection tasks.
}
    \label{fig:SensitivityAnalysis_Noise}
\end{figure}

\begin{figure}[H]
    \centering
    \begin{subfigure}[b]{1.0\linewidth}
        \centering
        \includegraphics[width=\linewidth]{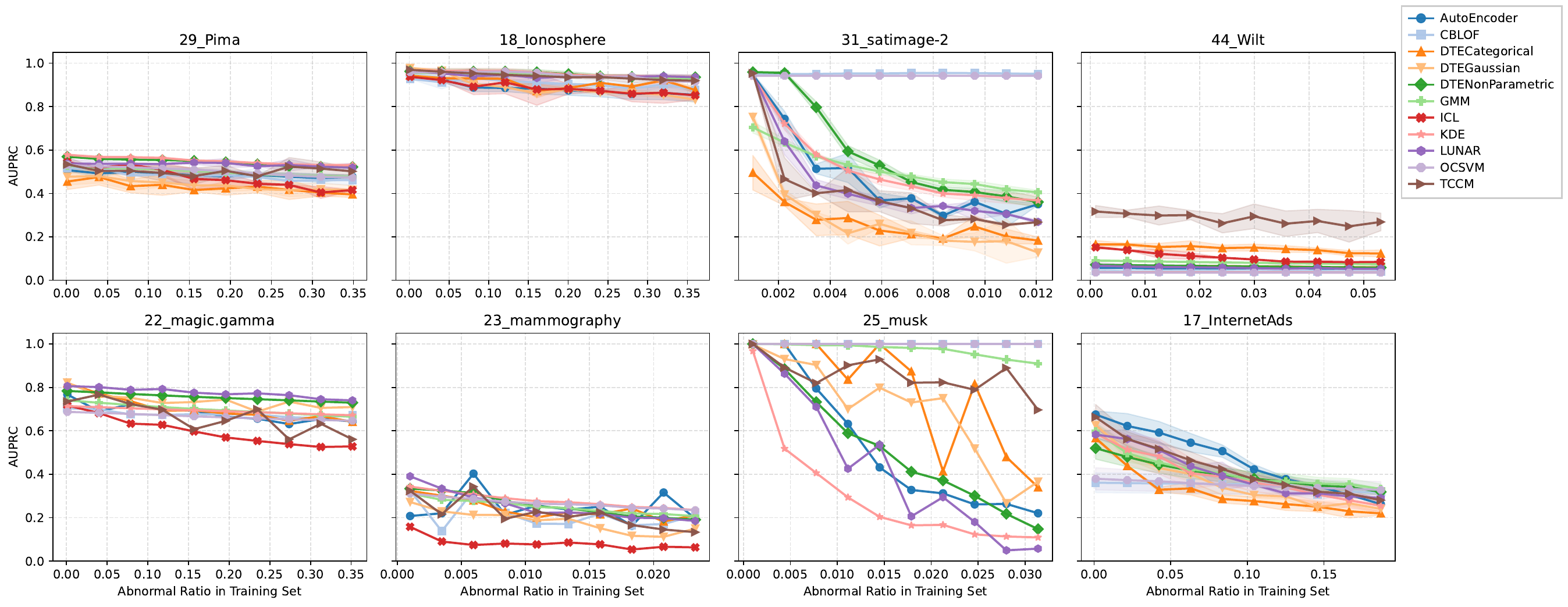}
        \caption{AUPRC vs. contamination ratio.}
        \label{fig:SensitivityAnalysis_ContamRatio_PR}
    \end{subfigure}
    \vspace{0.6em}
    \begin{subfigure}[b]{1.0\linewidth}
        \centering
        \includegraphics[width=\linewidth]{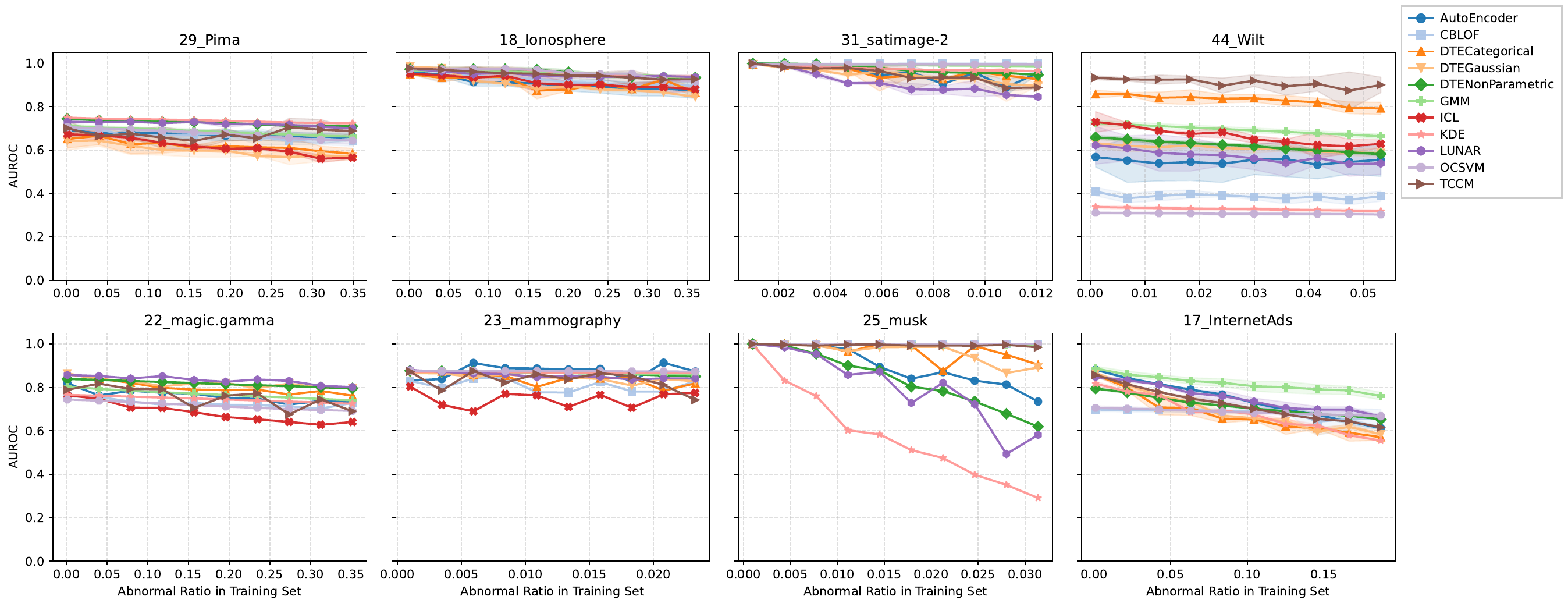}
        \caption{AUROC vs. contamination ratio.}
        \label{fig:SensitivityAnalysis_ContamRatio_ROC}
    \end{subfigure}
    \caption{
    Sensitivity to training-set contamination for \textsc{TCCM} and 10 top-performing baselines. 
    For each dataset, we fix the train/test split by using 50\% of normal samples for training and progressively inject anomalies into the training set (up to the dataset’s natural anomaly ratio). 
    The x-axis denotes the abnormal ratio in training; curves summarize mean and variance across multiple seeds (see legends for methods). 
    Overall, increasing contamination tends to reduce performance for most methods; \textsc{TCCM} remains among the most robust, though degradation can still occur on some datasets. 
    We present AUPRC (top) and AUROC (bottom) separately to improve readability.
    }
\label{fig:SensitivityAnalysis_ContamRatio}
\end{figure}

\begin{figure}
    \centering
    \includegraphics[width=1\linewidth]{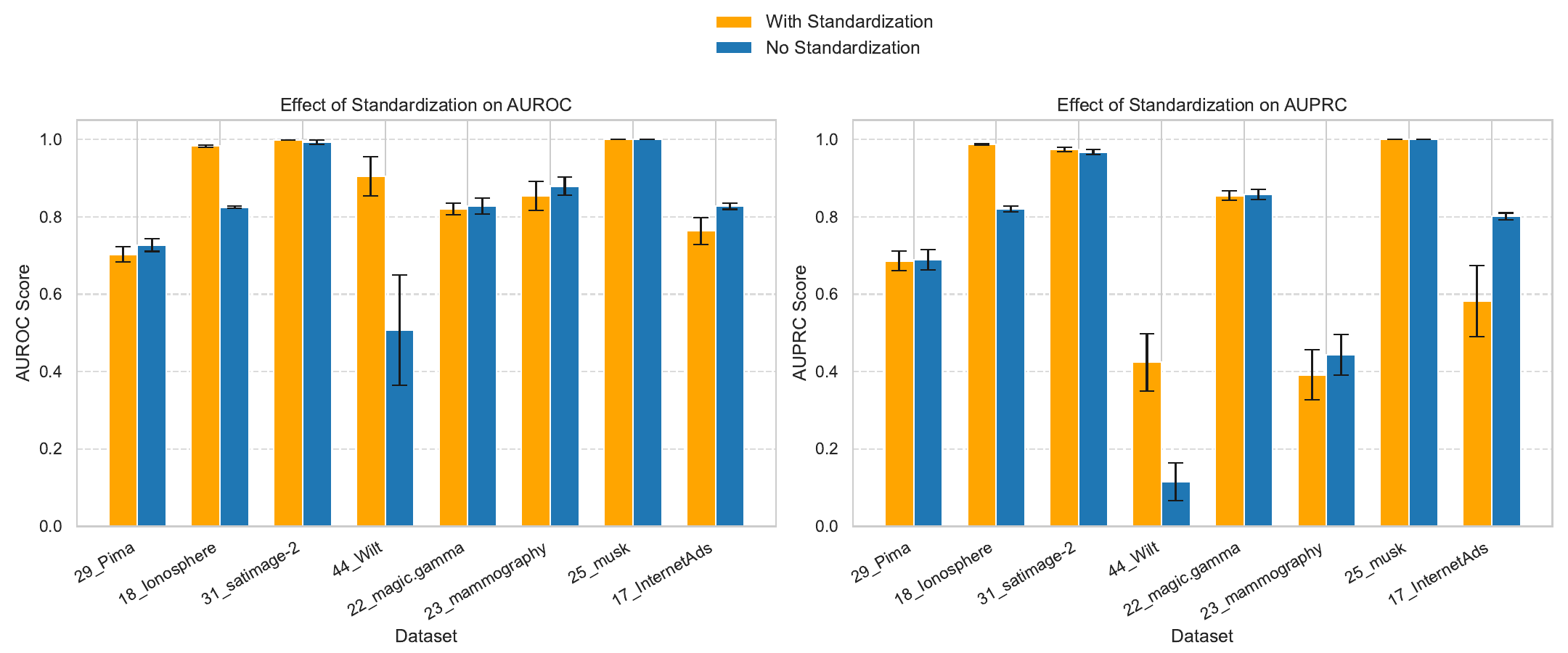}
    \caption{
    {\color{black}
    Ablation Study on the Effect of Performing z-score Normalization  before Model Training. We compare the anomaly detection performance (AUROC and AUPRC) of TCCM trained \textbf{with} and \textbf{without noise} z-score normalization. We report results across 8 representative datasets spanning four categories (small, medium, large, and high-dimensional). Each bar shows the average score over 5 random seeds, with error bars indicating standard deviation. \textbf{Key findings:} (1) On most datasets, performing z-score normalization does not significantly impact performance; (2) In some cases (e.g., \texttt{Ionosphere}, \texttt{Wilt}), performing z-score normalization leads to a substantial increase in AUROC and/or AUPRC. This indicates that without normalization, the anomaly scores may be dominated by features with larger scales, leading to suboptimal ranking behavior.  However, we note that performing z-score normalization on a few datasets (e.g., \texttt{InternetAds}) leads to a drop in AUPRC. 
    }
    }
    \label{fig:AblationStudy_Normalization}
\end{figure}

\begin{figure}[h!]
    \centering
    \includegraphics[width=1\linewidth]{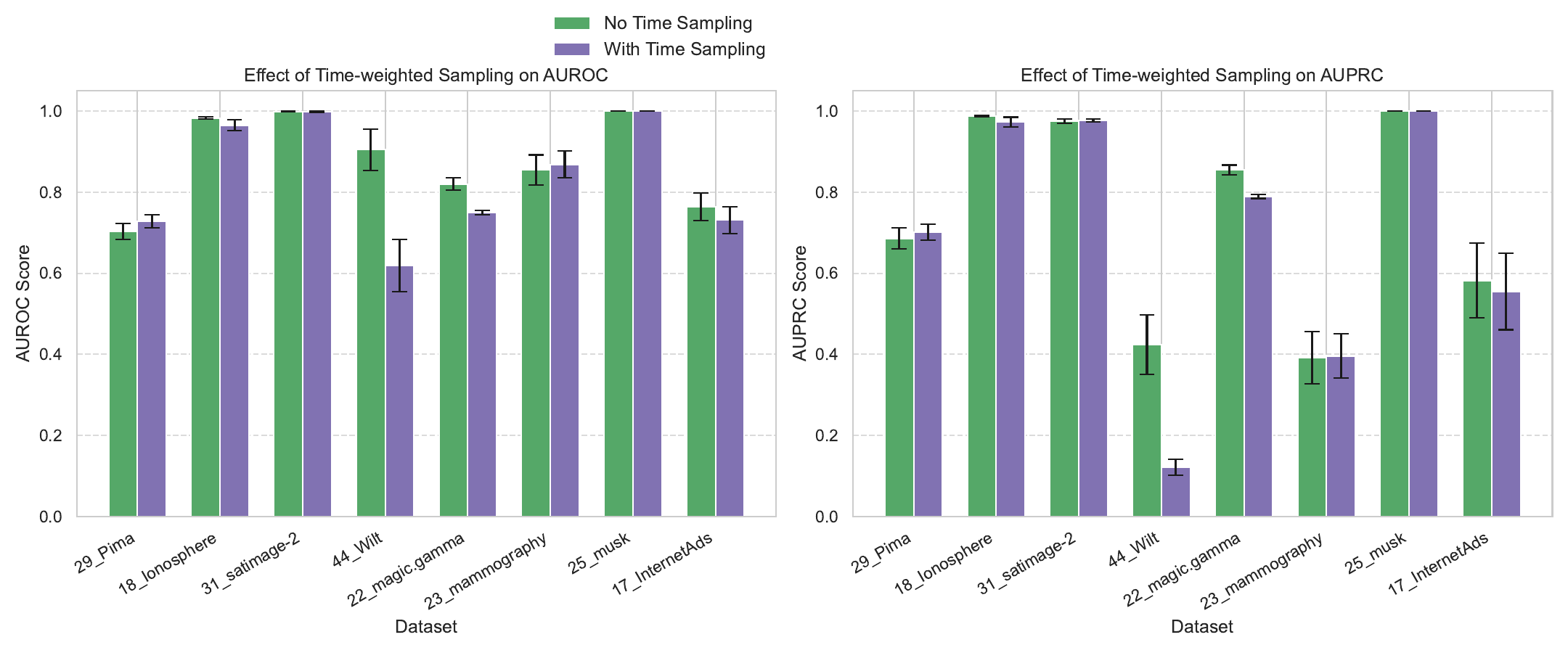}
    \caption{
    {\color{black}
    Ablation Study on the Effect of Training with Interpolated Time-Dependent Inputs. 
    We compare the anomaly detection performance (AUROC and AUPRC) of \textsc{TCCM} trained \textbf{with} and \textbf{without} interpolated inputs $\mathbf{z}_t = t \mathbf{z}$, while keeping all other configurations identical. 
    Results are reported across 8 representative datasets spanning four categories (small, medium, large, and high-dimensional). 
    Each bar shows the average performance over 5 random seeds, with error bars indicating standard deviation. 
\textbf{Key findings:} (1) Introducing time-interpolated samples generally does \emph{not improve} anomaly detection performance, with results remaining approximately unchanged or moderately degraded on most datasets; 
    (2) The degradation is more evident on certain  datasets (e.g., \texttt{Wilt}, \texttt{magic\_gamma}), suggesting that interpolated trajectories may introduce undesirable temporal supervision signals; 
    (3) These results empirically support our design choice of directly supervising time-conditioned vector fields at fixed input locations, as discussed in Section~\ref{sec:TCCM}. 
    }
    }
    \label{fig:AblationStudy_TimeSampling}
\end{figure}

\begin{figure}[h!]
    \centering
    \includegraphics[width=1\linewidth]{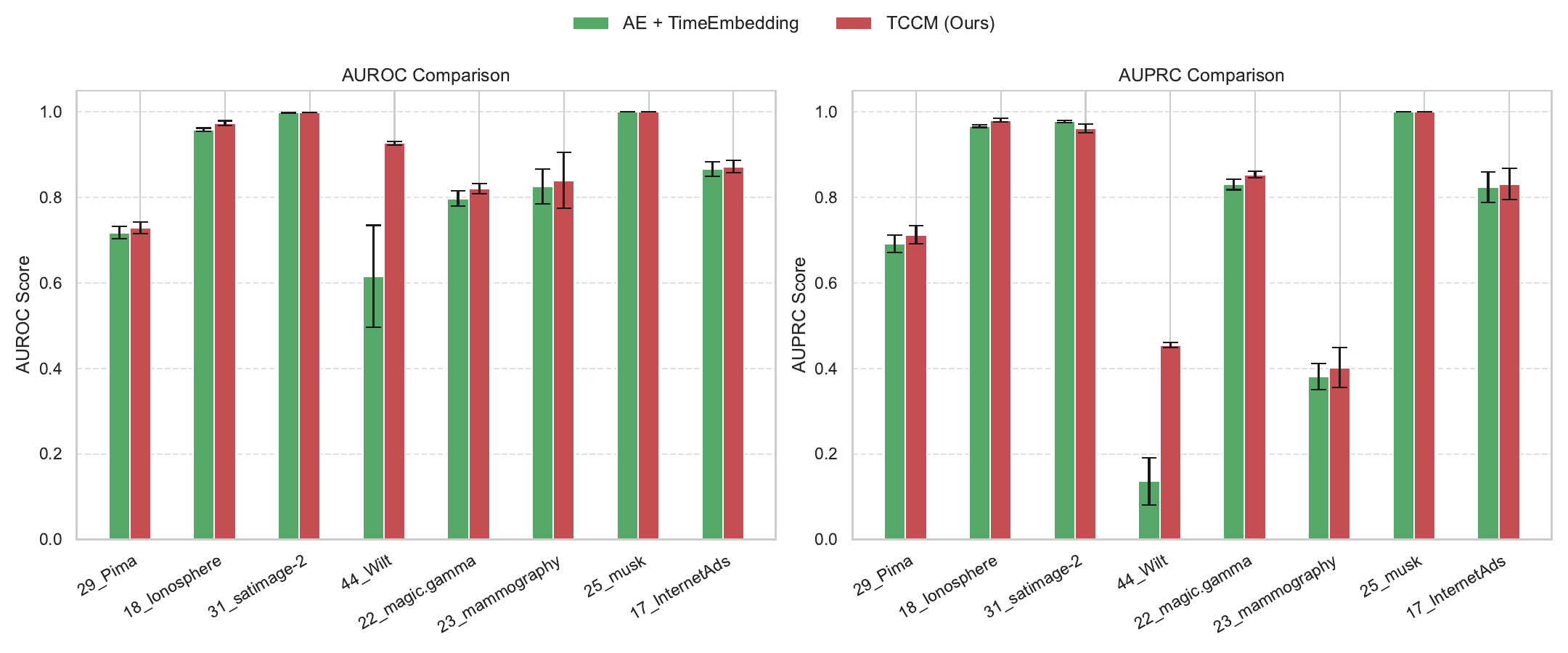}
    \caption{
    {\color{black}
    Comparison between \textsc{TCCM} and Autoencoder with Time Embedding (AE+TE) across eight datasets. 
    Bars represent mean AUROC and AUPRC values averaged over 5 random seeds; error bars indicate standard deviation. 
    \textsc{TCCM} consistently performs on par or better, especially on \texttt{Wilt} and \texttt{magic\_gamma}, demonstrating that direct residual learning without reconstruction bottlenecks better captures anomaly-relevant dynamics.
    }
    }
    \label{fig:AblationStudy_AE_TCCM}
\end{figure}

\paragraph{Study 1: Time Embedding Variants Used in TCCM.} We consider three different time embedding strategies within the TCCM architecture, each representing a trade-off between simplicity and expressiveness:
\begin{itemize}
    \item \textbf{Linear + Sin:} This basic approach applies a single linear layer followed by a sine transformation to the scalar time input $t$. It is defined as $\phi(t) = \sin(Wt + b)$, where $W$ and $b$ are learnable parameters. This encoding is computationally efficient and empirically fast to converge, making it suitable for lightweight applications.
    \item \textbf{Sinusoidal (default):} Inspired by the positional encoding in Transformers, this method maps time to a fixed set of sinusoidal functions at different frequencies. It is defined as
    \[
    \phi(t) = \left[\sin(\omega_1 t), \cos(\omega_1 t), \ldots, \sin(\omega_d t), \cos(\omega_d t)\right],
    \]
    where frequencies $\omega_i$ are logarithmically spaced. This embedding captures richer periodic structure without additional learnable parameters.
    \item \textbf{Sinusoidal + MLP:} To enhance the expressiveness of the sinusoidal embedding, we append a two-layer feedforward MLP to it. This allows the model to learn nonlinear combinations of the sinusoidal basis, which is often beneficial when modeling more complex dynamics. However, it introduces more parameters and increases training time.
\end{itemize}
All three variants are seamlessly plugged into the same backbone network, differing only in the time embedding module. In our experiments, we observe consistent performance across them, while the sinusoidal embedding offers a good balance between performance and simplicity.

\paragraph{Study 2: Sensitivity to Fixed Time $t$ during Inference.}
In the TCCM framework, the final anomaly score is computed based on the model output at a specific time $t$, typically fixed to $t=1.0$ during inference. To assess the robustness of our method to the choice of $t$, we conduct a sensitivity analysis by varying $t$ uniformly in the range $[0.0, 1.0]$ and measuring the performance in terms of AUROC and AUPRC on eight representative datasets.

For each dataset, we fix $t$ to different values and compute the anomaly score as $S(\boldsymbol{x}) = \|f_{\boldsymbol{\theta}}(\boldsymbol{x}, t) + \boldsymbol{x}\|_2$, where $f_{\boldsymbol{\theta}}(\boldsymbol{x}, t)$ is the predicted vector field. This formulation relies on the observation that, for normal samples, the model learns to approximate $f_{\boldsymbol{\theta}}(\boldsymbol{x}, t) \approx -\boldsymbol{x}$, such that the residual becomes small. Anomalous samples, being out-of-distribution, typically incur larger residuals.

The results (see Figure~\ref{fig:SensitivityAnalysis_Time}) demonstrate that the detection performance is largely invariant to the specific value of $t$, indicating that our method is not sensitive to this hyperparameter. This makes the approach more robust and practical, as it avoids the need for tuning $t$ at inference. The shaded areas in the figure denote standard deviation over five random seeds, further confirming the stability of the results.

\paragraph{Study 3: Effect of Noise Injection during Training.}
To assess the role of stochasticity in our framework, we compare two variants of TCCM: one trained with Gaussian noise injection—motivated by the SDE formulation in Appendix~\ref{appendix:subsec:CoomparedToDiffusionModels}—and another trained deterministically without noise. In the noisy case, input samples are perturbed as $\tilde{\boldsymbol{x}}(t) = \boldsymbol{x} + t\boldsymbol{\epsilon}$ with $\boldsymbol{\epsilon} \sim \mathcal{N}(\boldsymbol{0}, \boldsymbol{I})$, but the model is still supervised to predict the residual vector $-\boldsymbol{x}$. This setup preserves the inductive bias toward contraction while introducing input-level stochasticity during training. In contrast, the deterministic variant trains on unperturbed inputs using the same supervision.

Empirical results, summarized in Figure~\ref{fig:SensitivityAnalysis_Noise}, show that noise injection does not consistently improve performance and in many cases leads to a significant drop in AUROC and AUPRC—particularly for datasets such as \texttt{Wilt} and \texttt{mammography}. These findings validate our theoretical motivation: while noise injection can enhance sample diversity in generative modeling, anomaly detection—especially under a semi-supervised regime—relies on preserving the precise structure of normal data. Even mild stochastic perturbations may obscure this structure, weakening the learned vector field. Consequently, our deterministic training procedure yields more stable and effective results for anomaly detection.

{\color{black}

\paragraph{Study 4: Effect of Contamination in Training Data.}
This study examines how varying degrees of contamination—i.e., abnormal samples present in the training set—affect model performance. 
Unlike the main experimental setup, where \textbf{50\% of normal data} is used for training and the test set contains the remaining normal and all abnormal samples, here we \textbf{fix the train/test split} by randomly dividing both normal and abnormal data in half: 50\% of the normals are used for training, and evaluation is conducted on the remaining normals plus half of the anomalies. 
We then progressively inject additional anomalies into the training set, increasing the abnormal ratio from near-zero up to each dataset’s intrinsic anomaly rate. 
This protocol normalizes the contamination range across datasets and isolates its effect under the semi-supervised assumption.

Empirical results (Figures~\ref{fig:SensitivityAnalysis_ContamRatio_PR}–\ref{fig:SensitivityAnalysis_ContamRatio_ROC}) compare \textsc{TCCM} with ten top-performing baselines. 
While \textsc{TCCM} maintains stable AUROC and AUPRC on several datasets, its performance—like that of most methods—deteriorates as contamination increases, sometimes substantially. 
This degradation is particularly evident when the abnormal ratio approaches the dataset’s natural contamination level, suggesting that even small amounts of anomaly leakage can distort learned decision boundaries. 
Overall, the results indicate that no method is entirely immune to contaminated supervision and reinforce the practical importance of maintaining a clean training set in semi-supervised anomaly detection.

}

{\color{black}
\paragraph{Study 5: Effectiveness of Conventional Flow Matching on Anomaly Detection.} While in principle conventional Flow Matching can be adapted for anomaly detection, our preliminary experiments indicate two natural strategies yield suboptimal results: (i) using a standard Flow Matching model to reconstruct the input $\boldsymbol{x}$ from a learned trajectory and computing the final-step reconstruction error as the anomaly score; (ii) computing a cumulative reconstruction error across multiple time steps along the trajectory. We implemented both approaches and found them to be worse than our proposed TCCM in terms of both detection accuracy and inference efficiency. In particular, trajectory simulation requires numerical integration and multiple model evaluations, incurring high computational cost at inference time. In contrast, TCCM performs anomaly scoring with a single forward pass at a fixed time step, offering both speed and accuracy advantages.
}

{\color{black}
\paragraph{Study 6: Effect of Feature Normalization.} In practice, we apply z-score normalization (zero mean, unit variance) to all features before training and inference, which aligns the origin with the center of the normal data distribution. To further evaluate the impact of normalization, we conduct an ablation study comparing TCCM trained \textbf{with} and \textbf{without} z-score normalization. Results across eight representative datasets (see Figure~\ref{fig:AblationStudy_Normalization}) show that normalization does not significantly affect performance on most datasets but leads to substantial gains in some cases (e.g., \texttt{Ionosphere}, \texttt{Wilt}). This suggests that without normalization, features with larger scales may dominate the anomaly score, degrading ranking quality. On a few datasets (e.g., \texttt{InternetAds}), normalization slightly reduces AUPRC, indicating dataset-specific effects. Overall, these findings demonstrate that normalization generally improves robustness and provides a principled justification for contracting toward the origin in our framework.
}

{\color{black}
\paragraph{Study 7: Effect of Time-Interpolated Inputs.} 
A key design choice in \textsc{TCCM} is to supervise the time-conditioned vector field directly at fixed input locations, without using time-interpolated samples. 
To examine whether incorporating interpolated inputs (i.e., $\mathbf{z}_t = t \mathbf{z}$) influences performance, we conduct an ablation study comparing models trained \textbf{with} and \textbf{without} time interpolation. 
Results across eight representative datasets (see Figure~\ref{fig:AblationStudy_TimeSampling}) show that introducing interpolated samples generally does \emph{not improve} anomaly detection performance, with results remaining approximately unchanged or moderately degraded on most datasets. 
The degradation is more evident on certain datasets (e.g., \texttt{Wilt}, \texttt{magic\_gamma}), suggesting that interpolated trajectories may introduce undesirable or redundant temporal supervision signals, which can interfere with the learning of stable contraction dynamics. 
Overall, these findings empirically support our design choice of training with fixed inputs and time-conditioned supervision, confirming that \textsc{TCCM} effectively captures temporal dependencies without requiring explicit trajectory interpolation.
}

{\color{black}
\paragraph{Study 8: Distinction Between \textsc{TCCM} and Autoencoder with Time Embedding (AE+TE).} 
A remaining concern  pertains to the conceptual distinction between \textsc{TCCM} and an autoencoder (AE) architecture applied to time-augmented data. 
While both methods may take the same input form $[\mathbf{x}, \text{Embed}(t)]$, their \emph{training objectives}, \emph{architectural principles}, and \emph{learned representations} differ fundamentally.

\textbf{Conceptual Comparison.} 
Autoencoders aim to minimize a reconstruction loss (e.g., $\|\hat{\mathbf{z}} - \mathbf{z}\|_2^2$), learning to reproduce the input itself. 
In contrast, \textsc{TCCM} learns a \emph{time-conditioned velocity field} $f_{\theta}([\mathbf{z}, \text{Embed}(t)])$ that is explicitly supervised toward a fixed contraction direction ($-\mathbf{z}$). 
This distinction fundamentally alters both the optimization target and the semantics of the learned mapping: 
\textsc{TCCM} predicts the \textit{instantaneous contraction dynamics} of the data manifold, rather than reconstructing input values. 
Consequently, the model operates under the framework of \emph{vector field learning}, akin to score-based diffusion or flow-matching methods, not under the reconstruction paradigm of autoencoders.

\textbf{Architectural Comparison.} 
\textsc{TCCM} predicts the residual vector field directly in the input space using a 3-layer MLP \emph{without} bottleneck compression, maintaining full dimensionality throughout. 
In contrast, the AE+TE baseline employs a symmetric encoder–decoder architecture with a latent bottleneck layer, formulated as:
\begin{itemize}
    \item \textbf{Encoder:} $\text{Linear(input\_dim + time\_embed\_dim} \rightarrow 256) \rightarrow \text{ReLU} \rightarrow \text{Linear(256} \rightarrow \text{bottleneck\_dim)} \rightarrow \text{ReLU}$
    \item \textbf{Decoder:} $\text{Linear(bottleneck\_dim} \rightarrow 256) \rightarrow \text{ReLU} \rightarrow \text{Linear(256} \rightarrow \text{input\_dim + time\_embed\_dim)}$
\end{itemize}
This bottleneck compression can discard anomaly-related signals, particularly in early training, whereas \textsc{TCCM} preserves feature-level information and learns residual dynamics directly.

\textbf{Experimental Results.} 
We compare both models on eight representative datasets (see Figure~\ref{fig:AblationStudy_AE_TCCM}), spanning small, medium, large, and high-dimensional settings. 
\textsc{TCCM} consistently matches or outperforms AE+TimeEmbedding in both AUROC and AUPRC metrics. 
Notably, the AE+TE baseline exhibits pronounced performance degradation on datasets such as \texttt{Wilt} and \texttt{magic\_gamma}, confirming that its reconstruction-oriented learning objective is less suited for capturing contraction-based anomalies. 
These results demonstrate that the advantages of \textsc{TCCM} arise not from architectural complexity but from its fundamentally different learning principle.

\textbf{Conclusion.} 
Both empirically and conceptually, \textsc{TCCM} is \emph{not} an autoencoder. 
Its flow-inspired supervision, residual prediction mechanism, and non-bottleneck design collectively enable it to model anomaly-relevant dynamics more effectively than reconstruction-based alternatives.
}

\subsection{Empirical Studies on Robustness and Interpretability}

\label{subsec:empirical_studies_robustness_interpretability}

\subsubsection{Empirical Studies on Robustness}
\label{subsubsec:empirical_studies_robustness}

Although has been shown theoretically, we provide an empirical study on robustness here. By following \citep{bergman2020classification}, we utilize PGD \citep{madry2017towards} to create adversarial examples, aiming to make anomalies appear like normal instances (or make normal instances look like anomalies). We measure the increase of false negative rate (i.e., the decrease of anomaly score) or false positive rate (i.e., the increase of anomaly score) on the adversarial examples. To make sure that the attacks are non-trivial, we must limit the allowed budget to use.

\textbf{Experiment Setup.} The experiment is conducted on a suite of synthetic datasets generated from two disjoint Gaussian mixture models. Details of the dataset construction are provided in Table~\ref{tab:robustness}. In line with the 65–95–99.7 rule for standard normal distributions, this setup largely satisfies the assumptions of Proposition~\ref{prop:gmm_to_gmm_shift} (namely Proposition~\ref{prop:gmm_shift_main} in the main paper), while enabling us to evaluate the robustness of the proposed TCCM method. The training set consists of 5,000 samples randomly drawn from the mixture distribution $\mathbb{P}$. The test set comprises 4,000 samples from $\mathbb{P}$ and 1,000 samples from $\mathbb{Q}$, resulting in an anomaly ratio of 0.2. Following our experimental setup used in ADBench, both training and test sets are standardized prior to attack. We consider two types of attacks: 1) False positive attack, where normal samples are perturbed to appear anomalous; 2) False negative attack, where anomalous samples are perturbed to resemble normal data. For the PGD-based attack setup, we evaluate 30 levels of perturbation budgets, $\epsilon\in\{0.1, 0.2, 0.3\dots,2.8,2.9,3.0\}$, under the $L_\infty$ norm. Each attack is performed with a step size of 0.01 and a maximum of $\lceil 200\cdot\epsilon \rceil$ iterations. It is worth noting that, given the data is standardized, a perturbation budget of $\epsilon = 3$ corresponds to a substantial shift in feature space. For both attack types, we track how AUROC and AUPRC evolve with increasing perturbation strength. All experiments are independently repeated five times using different random seeds to ensure statistical robustness.

\begin{table}[h]
\centering
\caption{\label{tab:robustness}Specifications of synthetic data for robustness verification. $I_d$ is an identity matrix with size $d$. The normal data is sampled from a GMM with three modes, where $\mu_1=-3\times \mathbf{1}_d$, $\mu_2=\mathbf{0}_d$, and $\mu_3=3\times \mathbf{1}_d$. The anomaly data is sampled from a two-mode GMM, where $\nu_1=-9\times\mathbf{1}_d$ and $\nu_1=9\times\mathbf{1}_d$. The experiments are performed across 5 different dimensionalities, i.e., $d\in\{2, 10, 20, 50, 100\}$.}

\begin{tabular}{llcc}
\toprule
\textbf{Datasets} & & Normal $\mathbb{P}$ & Anomaly $\mathbb{Q}$\\
\midrule
\textit{Robustness} & & 
$\sum_{r=1}^{3} \frac{1}{3} \mathcal{N}(\mu_r,\, I_d)$ & 
$\sum_{s=1}^{2} \frac{1}{2} \mathcal{N}(\nu_s,\, I_d)$ \\
\bottomrule
\end{tabular}
\end{table}

\begin{figure}[H]
    \centering
    \includegraphics[width=1\linewidth]{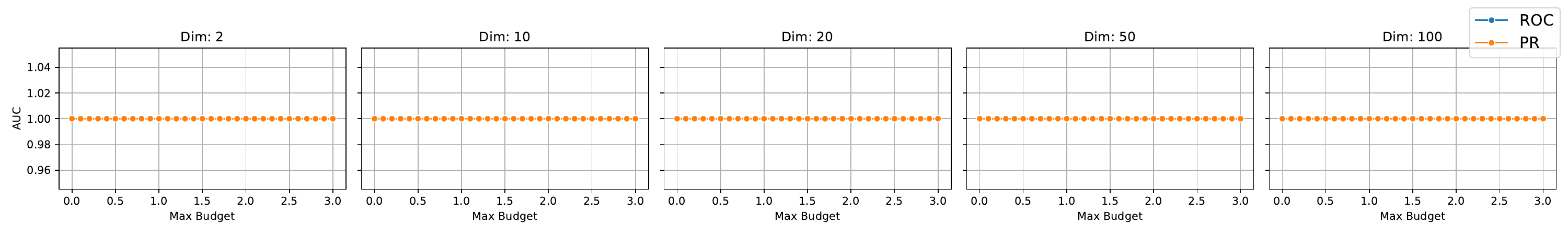}
    \caption{Results of false \textbf{negative} attacks (attack on \textit{anomaly} samples) on synthetic GMM-to-GMM shift datasets across five different dimensionalities. The horizontal axis represents the maximum perturbation budget
(measured in the $L_{\infty}$ norm), while the vertical axis indicates the area under the curve (AUC) value.
}
    \label{fig:robustness FN}
\end{figure}

\begin{figure}[H]
    \centering
    \includegraphics[width=1\linewidth]{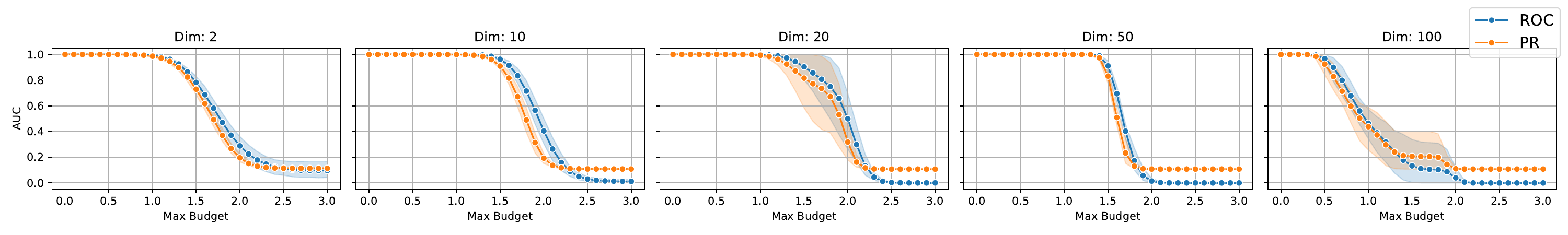}
    \caption{Results of false \textbf{positive} attacks (attack on \textit{normal} samples) on synthetic GMM-to-GMM shift datasets across five different dimensionalities. The horizontal axis represents the maximum perturbation budget
(measured in the $L_{\infty}$ norm), while the vertical axis indicates the area under the curve (AUC) value.
}
    \label{fig:robustness FP}
\end{figure}
The results are presented in Figure~\ref{fig:robustness FN} and Figure~\ref{fig:robustness FP}, with baseline AUROC and AUPRC scores (i.e., before attack, when the maximum perturbation budget is zero) also indicated for reference. As shown, TCCM is both theoretically justified and empirically validated to be robust under the GMM-to-GMM shift setting. It consistently achieves the highest AUROC and AUPRC across all random seeds and dimensionalities, effectively detecting anomalous samples from $\mathbb{Q}$ in all cases. TCCM demonstrates strong robustness against false negative attacks: both AUROC and AUPRC remain at 1.0, regardless of the perturbation strength. This indicates that adversarial perturbations fail to disguise anomalous inputs as normal. In the case of false positive attacks, TCCM also exhibits a notable degree of robustness. As normal samples are gradually perturbed away from their original distribution, TCCM maintains high AUROC and AUPRC values—particularly up to perturbation levels equivalent to one standard deviation of the standardized data. The only exception occurs in high-dimensional settings (e.g., $d = 100$), where performance slightly degrades. These results suggest that TCCM learns a compact and stable representation of normality, enabling it to ignore semantically meaningless variations within a reasonable margin.

{\color{black}
\subsubsection{Empirical Studies on Interpretability}
\label{subsubsec:empirical_studies_interpretability}

To further validate the reliability of \textsc{TCCM}'s feature-level importance scores, we conduct a controlled synthetic experiment designed to quantitatively assess whether the learned residual vector field can accurately identify the features responsible for anomalies. This study complements the qualitative analyses in the main paper by providing direct empirical evidence of the model’s intrinsic interpretability.

\paragraph{Experimental Design.}
We construct a well-controlled anomaly detection task based on a Gaussian Mixture Model (GMM) with known anomalous dimensions, enabling precise evaluation of whether \textsc{TCCM} attributes anomalies to the truly perturbed features.
\begin{itemize}
    \item Normal samples: Drawn from a standard multivariate Gaussian $\mathcal{N}(0,\mathbf{I})$.
    \item Anomalous samples: Generated from a 3-component GMM:
    \begin{itemize}
        \item Component 1: 1 dimension shifted,
        \item Component 2: 2 dimensions shifted,
        \item Component 3: 3 dimensions shifted.
    \end{itemize}
    \item Shift magnitude: Each shifted feature is perturbed by a random offset uniformly sampled from the range [15, 20].
    \item Input dimensions: We vary $d \in \{5, 10, 15, 20, 25\}$.
    \item Training: The model is trained exclusively on normal samples.
    \item Evaluation: Both anomaly detection and feature-level explanation are assessed on the combined test set.
\end{itemize}

\paragraph{Evaluation Metrics.}
We employ two complementary metrics that do not rely on any external explainer:
\begin{itemize}
    \item \textbf{Exact Match:} The proportion of anomalies for which the predicted top-$k$ features \emph{exactly} coincide with the ground-truth anomalous dimensions, where $k$ equals the number of shifted dimensions per sample ($k\in\{1,2,3\}$).
    \item \textbf{Jaccard Index:} The average intersection-over-union (IoU) between predicted and true anomalous dimensions across all anomalous samples.
\end{itemize}
Both metrics are derived directly from the model’s built-in residual vector field, computed as
$\lVert [f_{\phi}([\mathbf{x}, \mathrm{Embed}(t)]) + \mathbf{x}] \rVert_2$,
as defined in Eq.~\ref{eq:AnomalyScoring} of the main paper. This ensures that interpretability is evaluated based on the model’s internal reasoning rather than post hoc approximations.

\paragraph{Results and Discussion.}
The outcomes in Table~\ref{tab:gmm_interpretability} show that \textsc{TCCM} consistently and accurately identifies the ground-truth anomalous features across all tested dimensionalities.

\begin{table}[h!]
\centering
\caption{Quantitative evaluation of explanation accuracy on synthetic GMM anomalies.}
\label{tab:gmm_interpretability}
\begin{tabular}{lcccc}
\toprule
\textbf{Setting} & \textbf{ExactMatch} & \textbf{Jaccard} & \textbf{AUROC} & \textbf{AUPRC} \\
\midrule
5D  & 1.000 & 1.000 & 1.000 & 1.000 \\
10D & 1.000 & 1.000 & 1.000 & 1.000 \\
15D & 1.000 & 1.000 & 1.000 & 1.000 \\
20D & 0.996 & 0.998 & 1.000 & 1.000 \\
25D & 0.996 & 0.997 & 1.000 & 1.000 \\
\bottomrule
\end{tabular}
\end{table}

The near-perfect ExactMatch and Jaccard scores confirm that \textsc{TCCM}'s residual velocity field yields faithful, fine-grained feature-level attributions. Crucially, this interpretability arises \emph{intrinsically} from the model’s formulation—no auxiliary explanation method (e.g., SHAP or LIME) is required. The residual components directly encode each feature’s contribution to the contraction mismatch, offering a transparent and actionable view of the decision process. This property enables practitioners in domains such as fraud analysis, medical diagnostics, and industrial monitoring to understand not only \emph{which} samples are anomalous but also \emph{why}.

}

\subsection{Statistical Tests}
\label{app:StatTests}
To rigorously assess whether the performance differences between TCCM and competing methods are statistically significant, we conduct non-parametric statistical tests on their rankings across 47 datasets. For each method, we compute its average AUPRC and AUROC rankings over five random seeds on each dataset. Given the multi-method, multi-dataset nature of this evaluation, traditional pairwise tests are inadequate due to increased risk of Type I error. Hence, we follow the protocol proposed by \citet{demvsar2006statistical}, which recommends a two-stage procedure: 

\begin{itemize}
    \item First, we apply the Friedman test~\citep{friedman1937use}, a non-parametric alternative to repeated-measures ANOVA, to determine whether there is any statistically significant difference in performance rankings among all methods.
    \item If the null hypothesis is rejected, we proceed with the Nemenyi post hoc test~\citep{nemenyi1963distribution}, which compares all classifiers pairwise. Two methods are considered significantly different if their average ranks differ by at least the critical difference (CD).
\end{itemize}

Compared to alternatives like the Wilcoxon-Holm method~\citep{garcia2010advanced}, which performs pairwise tests between a control method and others with Holm correction, the Nemenyi test is more conservative—it simultaneously controls the family-wise error rate across all pairwise comparisons, not just against a reference. While this often results in fewer significant findings, it provides a stronger guarantee against false positives, especially important in benchmark settings involving many methods.

We report the results using critical difference diagrams (see Figure~\ref{fig:CD_AUPRC} and \ref{fig:CD_AUROC}). For AUPRC, the Nemenyi test indicates that there are no statistically significant differences among the top-performing group, which includes \textbf{TCCM} (ranked 5.8), DTE-NonParametric, LUNAR, KDE, AutoEncoder, ICL, CBLOF, DTE-Categorical, GMM, and OCSVM. For AUROC, the top group includes \textbf{TCCM} (ranked 5.7), DTE-NonParametric, LUNAR, KDE, AutoEncoder, ICL, CBLOF, DTE-Categorical, GMM, and Sampling. Although TCCM achieves the best average rank in both metrics, the conservative nature of the Nemenyi test explains the lack of statistically significant superiority. Nonetheless, TCCM consistently ranks at the top, reinforcing its robustness and broad effectiveness across diverse datasets.

\begin{figure}
    \centering
    \includegraphics[width=1\linewidth]{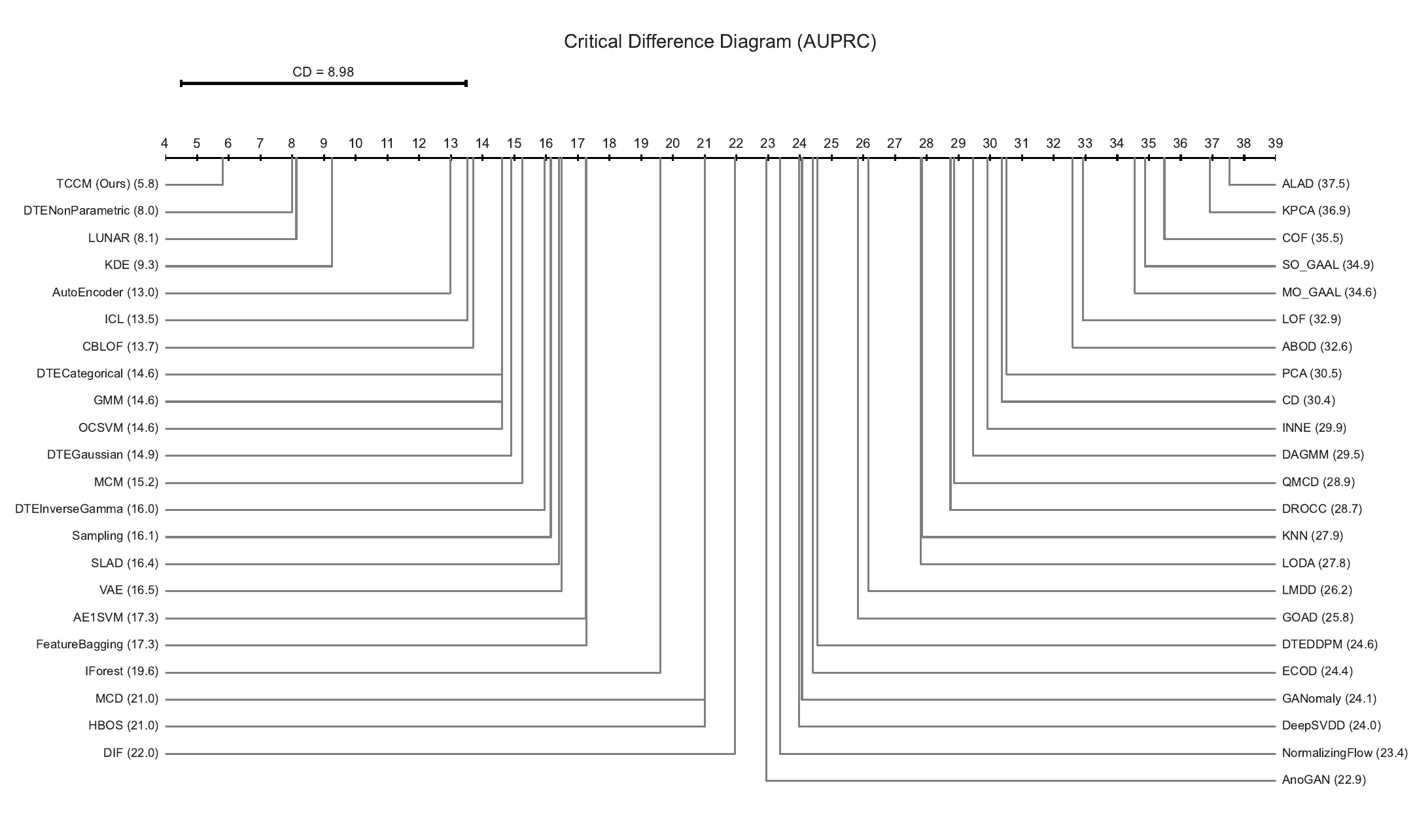}
    \caption{Critical difference (CD) diagram illustrating statistical rank comparisons of the 45 anomaly detection methods based on their AUPRC performance across 47 datasets. Each method is ranked by its mean AUPRC over five random seeds. The CD value, computed via the Nemenyi post-hoc test at significance level $0.05$, indicates the minimum difference in average rank that is statistically significant. Notably, TCCM (ranked 5.8 on average) is part of the top-performing group including DTE-NonParametric, LUNAR, KDE, AutoEncoder, ICL, CBLOF, DTE-Categorical, GMM, and OCSVM.}
    \label{fig:CD_AUPRC}
\end{figure}

\begin{figure}
    \centering
    \includegraphics[width=1\linewidth]{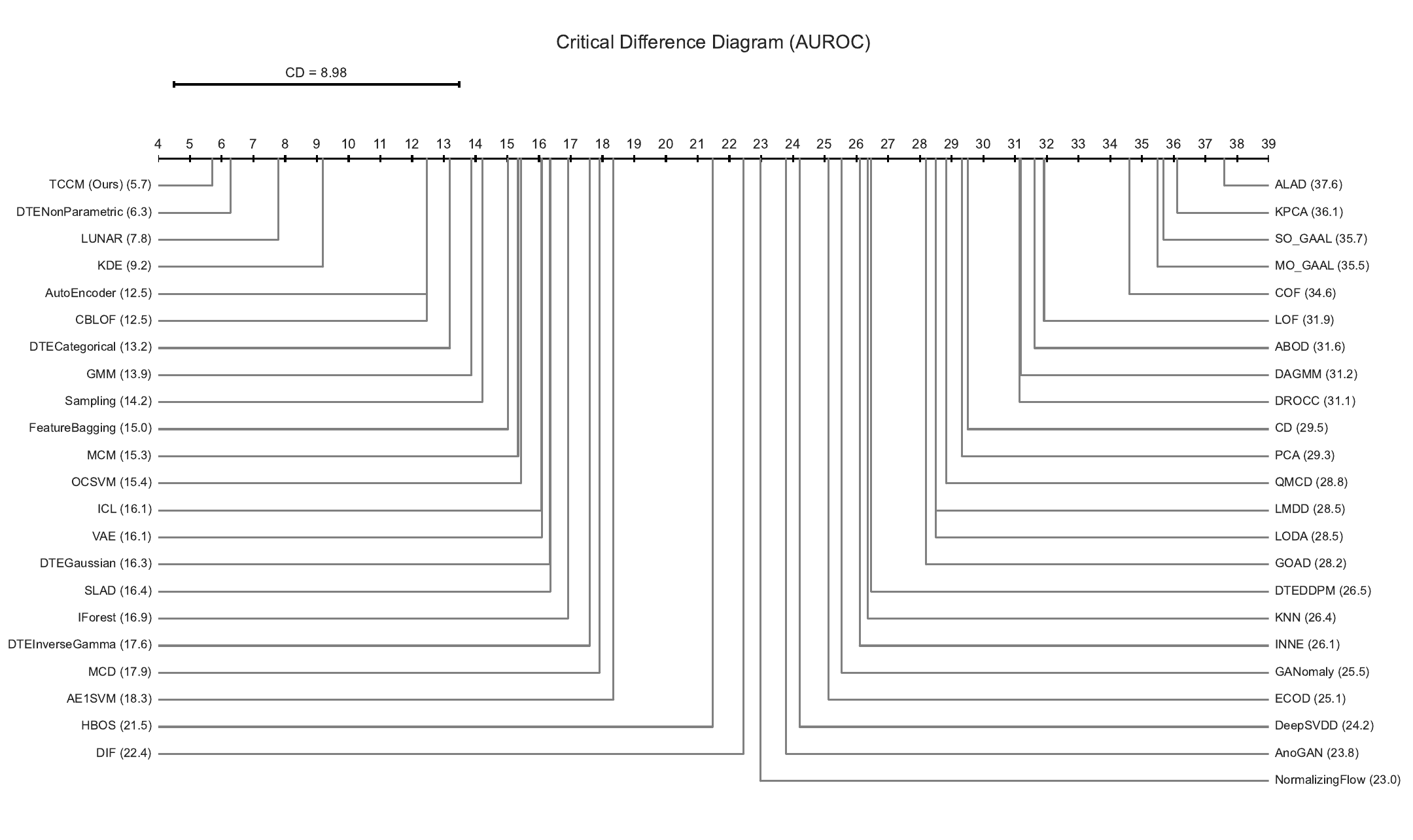}
    \caption{Critical difference (CD) diagram illustrating statistical rank comparisons of the 45 anomaly detection methods based on their AUROC performance across 47 datasets. Each method is ranked by its mean AUROC over five random seeds. The CD value is derived using the Nemenyi post-hoc test with a significance level of $0.05$. TCCM (with an average rank of 5.7) belongs to the top-performing group, which includes DTE-NonParametric, LUNAR, KDE, AutoEncoder, ICL, CBLOF, DTE-Categorical, GMM, and Sampling.}
    \label{fig:CD_AUROC}
\end{figure}

\begin{table}[H]
\centering
\caption{Overall comparison of top-performant anomaly detection algorithms (with top-4 performance in terms of AUROC and AUPRC) across four key dimensions.}
\label{tab:comparison}
\resizebox{\textwidth}{!}{%
\begin{tabular}{lcccc}
\toprule
\textbf{Algorithm} & \textbf{Accuracy} & \textbf{Scalability} & \textbf{Explainability} & \textbf{(Provable) Robustness} \\
\midrule
\textbf{TCCM} & {\color{green}\cmark} & {\color{green}\cmark} & {\color{green}\cmark} (feature contribution) & {\color{green}\cmark} \\
DTE-NonParametric \citep{livernoche2023diffusion}    & {\color{green}\cmark} & {\color{red}\xmark} (slow inference) & {\color{green}\cmark}(reconstruction) & {\color{red}\xmark} \\
LUNAR~\citep{goodge2022lunar}  & {\color{green}\cmark} & {\color{red}\xmark} (slow training) & {\color{red}\xmark} & {\color{red}\xmark} \\
KDE~\citep{latecki2007outlier}   & {\color{green}\cmark} & {\color{red}\xmark} (slow training) & {\color{green}\cmark}(density) & {\color{red}\xmark} \\
\bottomrule
\end{tabular}%
}
\end{table}

\subsection{Limitations and Broader Impacts }
\label{app:limitations}

\textbf{Limitations}. While TCCM achieves state-of-the-art performance with relatively low computational cost, we outline three limitations that offer promising directions for future research. \textit{(1) Data Modality:} As the first work on adapting flow-matching modeling to anomaly detection, our study focuses exclusively on tabular data. Extending TCCM to other data modalities, such as vision \citep{liu2024deep}, time series \citep{blazquez2021review}, or graph-structured data \citep{akoglu2015graph, li2024cross}, is an exciting avenue for exploration. \textit{(2) Neural Architecture:} To achieve maximum efficiency, TCCM models the velocity field using a multilayer perceptron. This design choice raises an open question: could more sophisticated neural architectures, such as ResNet~\citep{he2016deep}, further improve performance? \textit{(3) Real-World Usability:} Our evaluation is conducted on ADBench~\citep{han2022adbench}, following the common practice in the anomaly detection research community. Exploring TCCM's effectiveness in real-world high-stakes domains, e.g., finance or healthcare, under more dynamic and complex conditions would be valuable.

\textbf{Broader Impacts}. While the TCCM methodology, as presented, is foundational research focused on advancing anomaly detection in tabular data, its limitations inherently shape its potential broader impacts and demarcate avenues for future work that could address these implications. The current focus on tabular data, while demonstrating significant methodological advancements, means that the direct applicability to other prevalent data types like images, time series, or complex graph structures is not yet established. The broader societal impact of anomaly detection often lies in these other domains---such as medical imaging analysis, financial transaction monitoring over time, or social network security. Therefore, until TCCM is extended and validated on these diverse data modalities, its positive impact in such critical areas remains a future prospect, and any potential negative impacts from misuse in these unvalidated contexts are purely speculative but warrant caution.

Furthermore, the utilization of a multilayer perceptron (MLP) for the velocity field, chosen for efficiency, may cap the model's capability to discern highly complex patterns compared to more sophisticated architectures. This architectural limitation could influence its broader impact in scenarios demanding exceptional nuance and accuracy, potentially limiting its deployment in safety-critical applications where the cost of a false negative or positive is extremely high. The ethical implications of deploying a system that might not capture the full complexity of a problem due to architectural constraints should be considered as the research progresses.

Lastly, the evaluation of TCCM primarily on the ADBench benchmark, though a standard practice, means its performance characteristics in messy, dynamic, and potentially adversarial real-world environments are not fully known. The broader impact, particularly concerning fairness, robustness to unforeseen data shifts, privacy implications in data-sensitive fields like finance or healthcare, and security against emergent threats, can only be truly assessed through rigorous testing in such operational settings. Without this, the translation of TCCM into systems with significant societal touchpoints should proceed with a clear understanding of these unevaluated risks. Future work addressing these limitations will be crucial in responsibly broadening the positive societal impact of this line of research.

{\color{black}

\section{Results under the Inductive Evaluation Setting}
\label{appendix:inductive_results_analysis}

\begin{figure}[h]
    \centering
    \begin{subfigure}[b]{1.0\linewidth}
        \includegraphics[width=\linewidth]{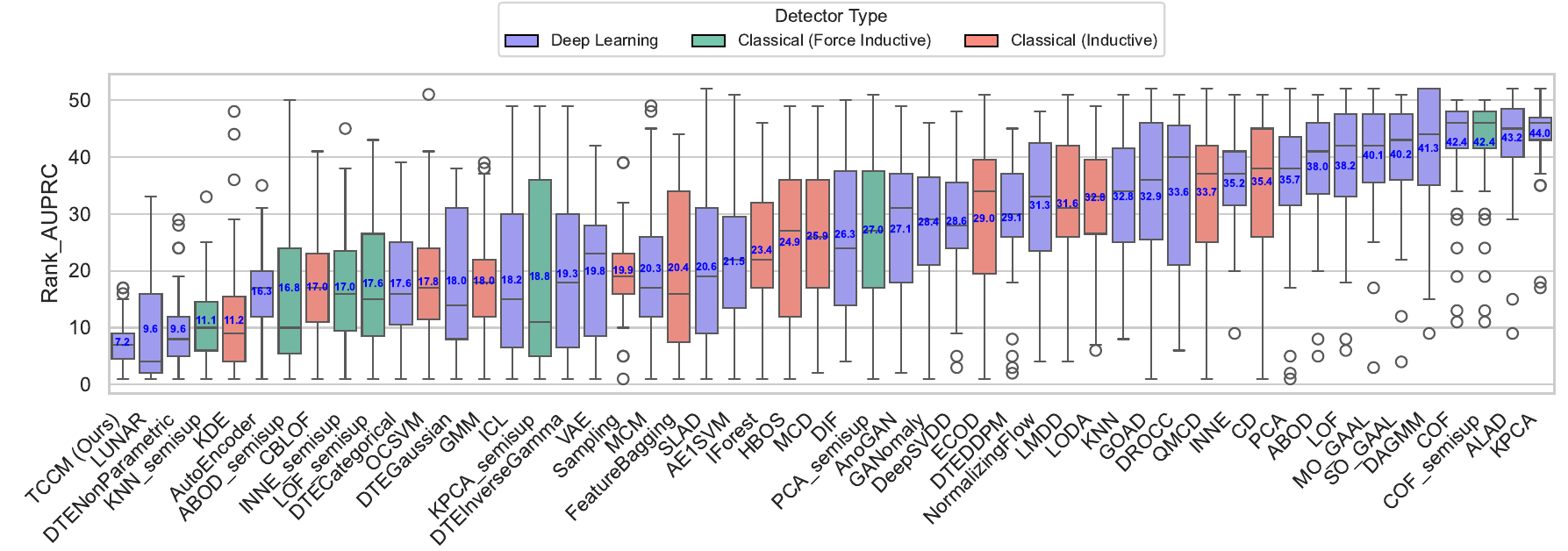}
        \caption{AUPRC ranking distribution across 47 datasets for 45 anomaly detectors under the inductive setting.}
        \label{fig:PR_ranking_boxplot_modified}
    \end{subfigure}
    \hfill
    \begin{subfigure}[b]{1.0\linewidth}
        \includegraphics[width=\linewidth]{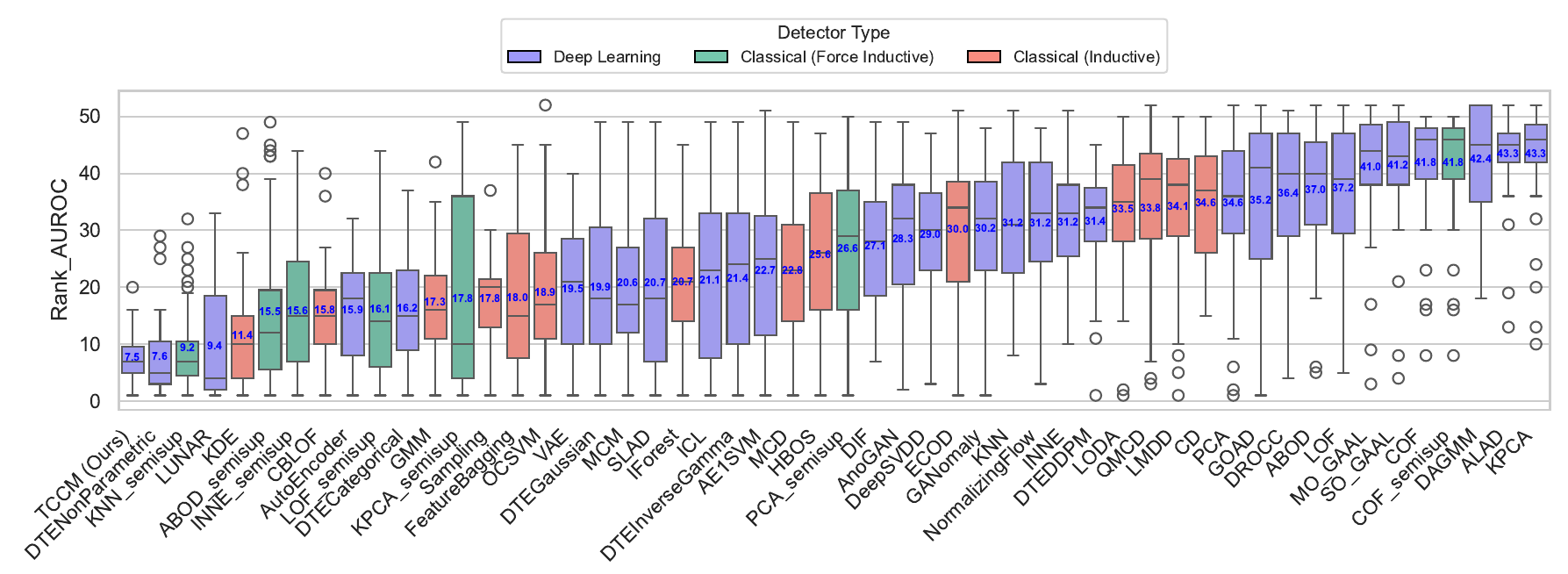}
        \caption{AUROC ranking distribution across 47 datasets for 45 anomaly detectors under the inductive setting.}
        \label{fig:ROC_ranking_boxplot_modified}
    \end{subfigure}
    \caption{
    Updated detector ranking distributions (AUPRC and AUROC) under the inductive (semi-supervised) protocol, 
    where models are trained exclusively on normal data. Medians are indicated by horizontal lines; means are shown as numbers.
    }
    \label{fig:ranking_boxplots_modified}
\end{figure}

To ensure a protocol that is consistent across \emph{all} methods evaluated together, we additionally report results under a unified inductive (semi-supervised) setting, in which training is performed solely on normal samples without access to anomalous data.
Most methods in our benchmark---including the majority of deep learning approaches and many classical baselines---are already inductive by design and thus remain unchanged.
For completeness, we \emph{adapt} the following algorithms that were originally formulated as transductive detectors to an inductive training procedure:
\textsc{ABOD}, \textsc{COF}, \textsc{LOF}, \textsc{PCA}, \textsc{KPCA}, \textsc{KNN}, and \textsc{INNE}.
All other experimental configurations, datasets, and evaluation metrics are identical to those used in the main paper.

\subsection{Effectiveness under the Inductive Setting}
Figures~\ref{fig:PR_ranking_boxplot_modified} and~\ref{fig:ROC_ranking_boxplot_modified} summarize the aggregated rankings of 45 detectors across 47 datasets, based on AUPRC and AUROC, respectively.
Each ranking averages over five random seeds.

\textbf{Findings.}
The ranking trends for deep learning methods remain highly consistent with the main paper:
\textsc{TCCM} continues to rank first overall in both AUPRC and AUROC under the inductive setting, indicating that its performance advantage does not rely on transductive assumptions of other baselines.
Other strong baselines (e.g., \textsc{LUNAR}, \textsc{DTE}-NonParametric, \textsc{KDE}) approximately preserve their relative positions.

Notably, several classical methods that we adapted from transductive to inductive exhibit \textbf{substantial performance gains}.
The improvements are most pronounced for \textsc{KNN}, \textsc{ABOD}, and \textsc{INNE}:
for \textsc{KNN}, the average rank in AUPRC improves from 27.9 to 11.1, and in AUROC from 26.4 to 9.2;
for \textsc{ABOD}, AUPRC average rank improves from 32.6 to 31.6 and AUROC average rank from 36.1 to 15.5;
\textsc{INNE} shows similarly notable gains.
These changes indicate that experimental protocol can materially affect certain neighborhood- or structure-based detectors.

\subsection{Scalability, Explainability, and Ablation Analyses}
\textbf{Scalability.} The scalability conclusions remain aligned with the main paper: \textsc{TCCM} retains its efficiency advantages. Methods that were already inductive (e.g., \textsc{TCCM}, \textsc{LUNAR}, \textsc{KDE}, \textsc{DTE}-NonParametric) are unaffected by the protocol change.

\textbf{Interpretability.} The interpretability analysis in Section~\ref{sec:ResultsAnalysis} is unchanged, as \textsc{TCCM}'s feature-level explanations stem from its model structure.

\textbf{Ablation and Sensitivity.} We did not repeat ablation/sensitivity studies under the inductive reformulation, since the algorithms modified here (\textsc{ABOD}, \textsc{COF}, \textsc{LOF}, \textsc{PCA}, \textsc{KPCA}, \textsc{KNN}, \textsc{INNE}) were not part of those studies.

\paragraph{Why in the Appendix?}
We place the inductive-variant results here to keep the main text focused and methodologically consistent:
introducing non-canonical inductive variants of originally transductive algorithms into the main tables would complicate the primary comparison without changing our conclusions.
The appendix ensures transparency while preserving the clarity of the main results.

\medskip
In summary, adopting a fully inductive evaluation protocol does not qualitatively change our conclusions.
\textsc{TCCM} remains the most effective and scalable detector, with robust performance under semi-supervised training conditions for all baselines.

}
\section{Acknowledgment}
\label{appendix:Acknowledgment}
\textbf{Qi Huang, Niki van Stein}: This publication is partly sponsored by the XAIPre project (with project number 19455) of the research program Smart Industry 2020 which is (partly) financed by the Dutch Research Council (NWO).
\clearpage
\begin{table*}[ht]
\centering

\caption{Configuration details for each dataset used in TCCM experiments. To avoid bias from aggressive hyperparameter tuning, we adopt a fixed configuration for all core components (e.g., architecture, time embedding, optimizer) across all datasets. Since our setting is fully unsupervised, we refrain from using label information to optimize hyperparameters. The number of training epochs is adjusted in a dataset-dependent but label-agnostic manner by following the unsupervised internal evaluation strategy in \citet{li2025towards}. {\color{black} Note that the backbone architecture is a lightweight 3-layer MLP (two hidden layers), chosen deliberately for efficiency on tabular data; we use the term ``deep'' in line with common practice to indicate a deep learning-based, end-to-end neural approach rather than architectural depth per se.}}

\label{tab:flow_configurations}
\resizebox{\textwidth}{!}{%
%
}
\end{table}

\end{document}